\definecolor{kjgray}{rgb}{.7,.7,.7}
\newtheoremstyle{kjstyle}
{1ex} 
{\topsep} 
{\itshape} 
{} 
{\bfseries} 
{.} 
{.5em} 
{} 
\newtheoremstyle{kjstylenoitalic}
{1ex} 
{\topsep} 
{} 
{} 
{\bfseries} 
{.} 
{.5em} 
{} 
\theoremstyle{kjstyle}
\theoremstyle{kjstyle}
\theoremstyle{kjstyle}
\theoremstyle{kjstyle}
\theoremstyle{kjstyle}
\theoremstyle{kjstyle}
\theoremstyle{kjstyle}
\theoremstyle{kjstylenoitalic}
\definecolor{kjgray}{rgb}{.7,.7,.7}
\renewcommand{\paragraph}{%
  \@startsection{paragraph}{4}%
  {\z@}{0.8em \@plus 1ex \@minus .2ex}{-1em}%
  {\normalfont\normalsize\bfseries}%
}
\newcounter{textcnt}
\newcommand\addtext[1]{%
  \stepcounter{textcnt}%
  \csgdef{text\thetextcnt}{#1}}
\newcounter{colnum}
\newcounter{Jidx}
\newcommand\dsymhelper[2]{
  \addtext{\hyperlink{#1}{#2}}%
  \blue{\hypertarget{#1}{#2}}%
}
\newcommand\dsym[1]{
  \stepcounter{Jidx}
  \xdef\tmpname{Jsym.\theJidx}
  \expandafter\dsymhelper\expandafter{\tmpname}{#1}
} 
\def\ddefloop#1{\ifx\ddefloop#1\else\ddef{#1}\expandafter\ddefloop\fi}
\def\ddef#1{\expandafter\def\csname c#1\endcsname{\ensuremath{\mathcal{#1}}}}
\def\ddef#1{\expandafter\def\csname b#1\endcsname{\ensuremath{{\boldsymbol{#1}}}}}
\def\ddef#1{\expandafter\def\csname h#1\endcsname{\ensuremath{\widehat{#1}}}}
\def\ddef#1{\expandafter\def\csname hc#1\endcsname{\ensuremath{\widehat{\mathcal{#1}}}}}
\def\ddef#1{\expandafter\def\csname t#1\endcsname{\ensuremath{\widetilde{#1}}}}
\def\ddef#1{\expandafter\def\csname r#1\endcsname{\ensuremath{\mathring{#1}}}}
\def\ddef#1{\expandafter\def\csname tc#1\endcsname{\ensuremath{\widetilde{\mathcal{#1}}}}}
\DeclareMathOperator*{\argmax}{arg~max}
\DeclareMathOperator{\EE}{\mathbb{E}}
\DeclareMathOperator{\PP}{\mathbb{P}}
\DeclareMathOperator{\one}{\mathds{1}\hspace{-0.0em}}
\DeclareMathOperator{\Reg}{{\text{Reg}}}
\def\RR{{\mathbb{R}}}
\newcommand{\sr}[2]{ {\mathop{}\stackrel{#1}{#2}}\mathop{}\, }
\newcommand{\fr}[2]{ { \frac{#1}{#2} }}
\def\eps{\ensuremath{\epsilon}\xspace}
\def\hS{\ensuremath{\hat{S}}\xspace}
\def\T{\ensuremath{\top}}  
\def\sig{\ensuremath{\sigma}\xspace}
\def\eps{\ensuremath{\epsilon}\xspace}
\def\supp{\ensuremath{\mathrm{supp}}\xspace}
\def\sm{{\ensuremath{\setminus}\xspace} }
\def\sym{{\ensuremath{\text{Sym}}\xspace}}
\newcommand{\vast}{\bBigg@{3}}
\newcommand{\Vast}{\bBigg@{4}}
\def\cX{\ensuremath{\mathcal{X}}\xspace} 
\def\la{{\langle}}
\def\ra{{\rangle}}
\def\det{\ensuremath{\mbox{det}}}
\def\tilth{\ensuremath{\tilde{\boldsymbol\theta}}\xspace}
\def\cC{\ensuremath{\mathcal{C}}\xspace}
\def\cU{\ensuremath{\mathcal{U}}\xspace}
\def\vec{\text{vec}}
\def\vec{\ensuremath{\text{vec}}}
\def\hS{\ensuremath{\what{\S}}\xspace}
\def\hS{\ensuremath{\hat S}}
\def\th{{\ensuremath{\theta}}}
\def\cH{{\ensuremath{\mathcal{H}}}}
\def\cd{\cdot}
\renewcommand{\vec}{{\mathsf{vec}}}
\newcommand\kmax[1]{\mathop{#1\hyphen\max}}
\def\hyphen{{\text{-\hspace{-.06em}}}}
\def\tmin{{\text{min}}}
\def\Cmin{{\cC_\tmin}}
\def\Lasso{\ensuremath{\normalfont{\text{Lasso}}}}
\def\thp{{\theta'}}
\def\th{\theta}
\def\hatsr{u}
\def\tilth{{\tilde{\theta}}}
\def\indic{\one}
\def\kap{{\kappa}}
\def\tsty{\textstyle}
\def\wed{\wedge}
\newcommand{\evt}[1]{\envert{#1}}
\def\GIANTKL{{\fr{4s^3n}{d}+  77 s \kap^2 \cT(\cH; a)}}
\def\GIANTKLtau{{\fr{4s^3n}{d}+  77 s \kap^2 \tau}}
\def\alg{\ensuremath{\mathsf{Alg}}\xspace}
\def\alga{\ensuremath{\widetilde{\mathsf{Alg}}}\xspace}
\def\algapi{\ensuremath{\widetilde{\mathsf{Alg}}\pi}\xspace}
\def\algap{\ensuremath{\widetilde{\mathsf{Alg}}\mathsf{P}}\xspace}
\newcommand{\ind}{\mathbbm{1}}
\newcommand{\legal}{\text{legal}}
\let\vec\undefined 
\setlist[itemize]{topsep=.5pt,itemsep=0pt,parsep=2pt,leftmargin=2em}
\setlist[enumerate]{topsep=.5pt,itemsep=0pt,parsep=2pt,leftmargin=2em}
\newtheorem{theorem}{Theorem}
\newtheorem{proposition}{Proposition}
\newtheorem{corollary}{Corollary}
\newtheorem{assumption}{Assumption}
\newtheorem{definition}{Definition}
\newtheorem{lemma}{Lemma}
\newtheorem{remarks}{Remark}
\newtheorem{claim}{Claim}
\newcommand{\blue}[1]{{\color[rgb]{.3,.5,1}#1}}
\newcommand{\cz}[1]{}
\newcommand{\chicheng}{\cz}
\newcommand{\ja}[1]{}
\newcommand{\edit}[2]{{\color{blue}{\sout{#1}}}{\color{red}{#2}}}
\renewcommand{\blue}[1]{{#1}}
\newcommand\inner[2]{{\langle#1,#2\rangle}}
\newcommand\popart{\ensuremath{\textsc{PopArt}}\xspace}
\newcommand\wpopart{\ensuremath{\textsc{Warm-PopArt}}\xspace}
\newcommand\cova{X}
\newcommand\resp{Y}
\newcommand\B{\mathsf{B}}
\newcommand\Bpi{\mathsf{B}\pi}
\newcommand\BP{\mathsf{BP}}
\newcommand\var{\textrm{Var}}
\newcommand\polylog{\mathrm{polylog}}
\renewcommand\paragraph[1]{\noindent\textbf{#1}}
\title{PopArt: Efficient Sparse Regression and  Experimental Design for Optimal Sparse Linear Bandits}
\author{%
  Kyoungseok Jang \\
  University of Arizona\\
  \texttt{ksajks@arizona.edu} \\
  \And
  Chicheng Zhang \\
  University of Arizona \\
  \texttt{chichengz@cs.arizona.edu } \\
  \And
  Kwang-Sung Jun\\
  University of Arizona\\
  \texttt{kjun@cs.arizona.edu} \\
}
\begin{document}
\doparttoc 
\faketableofcontents 


\setlength{\abovedisplayskip}{5pt}
\setlength{\belowdisplayskip}{5pt}
\setlength{\abovedisplayshortskip}{5pt}
\setlength{\belowdisplayshortskip}{5pt}


\maketitle

\begin{abstract}
In sparse linear bandits, a learning agent sequentially selects an action and receive reward feedback, and the reward function depends linearly on a few coordinates of the covariates of the actions. 
This has applications in many real-world sequential decision making problems. 
In this paper, we propose a simple and computationally efficient sparse linear estimation method called \popart that enjoys a tighter $\ell_1$ recovery guarantee compared to Lasso (Tibshirani, 1996) in many problems. 
Our bound naturally motivates an experimental design criterion that is convex and thus computationally efficient to solve.
Based on our novel estimator and design criterion, we derive sparse linear bandit algorithms that enjoy improved regret upper bounds upon the state of the art (Hao et al., 2020), especially w.r.t. the geometry of the given action set.
Finally, we prove a matching lower bound for sparse linear bandits in the data-poor regime, which closes the gap between upper and lower bounds in prior work.
\end{abstract}


\textfloatsep=.6em

\section{Introduction}
\label{sec:intro}

In many modern science and engineering applications, high-dimensional data naturally emerges, where the number of features significantly outnumber the number of samples. 
In gene microarray analysis for cancer prediction~\cite{ramaswamy2001multiclass}, for example, tens of thousands of genes expression data are measured per patient, far exceeding the number of patients. Such practical settings motivate the study of high-dimensional statistics, where certain structures of the data are exploited to make statistical inference possible. One representative example is sparse linear models~\cite{hastie2015statistical}, where we assume that a linear regression task's underlying predictor depends only on a small subset of the input features.


On the other hand, online learning with bandit feedback, due to its practicality in many applications such as online news recommendations~\cite{li10acontextual} or clinical trials~\cite{liao2016sample, woodroofe1979one}, has attracted a surge of research interests. 
Of particular interest is linear bandits, where in $n$ rounds, the learner repeatedly takes an action $A_t$ (e.g., some feature representation of a product or a medicine) from a set of available actions $\cA \subset \RR^d$ and receives a reward $r_t = \inner{\theta^*}{A_t} + \eta_t$ as feedback where  $\eta_t \in \RR$ is an independent zero-mean, $\sigma$-sub-Gaussian noise. 
Sparsity structure is abundant in linear bandit applications: for example, customers' interests on a product depend only on a number of its key specs; the effectiveness of a medicine only depends on a number of key medicinal properties, which means that the unknown parameter $\theta^*$ is sparse; i.e., it has a small number of nonzero entries.

Early studies~\cite{ay12online,carpentier2012bandit,lattimore2015linear} on sparse linear bandits have revealed that leveraging sparsity assumptions yields bandit algorithms with lower regret than those provided by full-dimensional linear bandit algorithms~\cite{abe99associative,auer02using,dani08stochastic,abbasi2011improved}. 
However, most existing studies either rely on a particular arm set (e.g., a norm ball), which is unrealistic in many applications, or use computationally intractable algorithms.
If we consider an arbitrary arm set, however, the optimal worst-case regret is ${\Theta}(\sqrt{sdn})$ where $s$ is the sparsity level of $\th^*$, which means that as long as $n = O(sd)$, there exists an instance for which the algorithm suffers a linear regret~\cite{lattimore18bandit}.
This is in stark contrast to supervised learning where it is possible to enjoy nontrivial prediction error bounds for $n = o(d)$~\cite{foster94risk}.
This motivates a natural research question: Can we develop computationally efficient {sparse linear bandit} algorithms that allow a generic arm set  yet enjoy nonvacuous bounds in the data-poor regime by exploiting problem-dependent characteristics?

The seminal work of~\citet{hao2020high} provides a positive answer to this question.
They propose algorithms that enjoy nonvacuous regret bounds with an arbitrary arm set in the data poor regime using Lasso.
Specifically, they have obtained a regret bound of $\tilde O({\Cmin}^{-2/3} s^{2/3}n^{2/3})$ where ${\Cmin}$ is an arm-set-dependent quantity. 
However, their work still left a few open problems.
First, their regret upper bound does not match with their lower bound $\Omega({\Cmin}^{-1/3} s^{1/3}n^{2/3})$.
Second, it is not clear if ${\Cmin}$ is the right problem-dependent constant that captures the geometry of the arm set.

\begin{table}[t]
\begin{center}
\begin{small}
\begin{sc}
\begin{tabular}{lcccr}
\toprule
 & Regret Bound & Data-poor & Assumptions \\
\midrule
\citet{hao2020high} & $\tilde{O}(s^{2/3}\mathcal{C}_{\min}^{-2/3}n^{2/3})$ & \ding{51} & $\mathcal{A}$ spans $\mathbb{R}^d$ \\
\citet{hao2020high} & ${\Omega(s^{1/3}\kappa^{-2/3}n^{2/3})}$ &\ding{51} & $\cA$ spans $\RR^d$ \\
Algorithm~\ref{alg:etc-sparse} (Ours)  & $\tilde{O}(s^{2/3} H_*^{2/3}n^{2/3})$ & \ding{51} & $\mathcal{A}$ spans $\mathbb{R}^d$\\
Theorem~\ref{thm:lower} (Ours) & $\Omega(s^{2/3}\kappa^{-2/3}n^{2/3})$ &\ding{51} & $\cA$ spans $\RR^d$\\
\hline
\citet{hao2020high} & $\tilde{O}(\sqrt{\mathcal{C}_{\min}^{-1}sn})$ & \ding{55} & $\mathcal{A}$ spans $\mathbb{R}^d$, Min. Signal \\
Algorithm~\ref{alg:phase-elim} (Ours) & $\tilde{O}(\sqrt{sn})$ & \ding{55} & $\mathcal{A}$ spans $\mathbb{R}^d$, Min. Signal\\
\bottomrule
\end{tabular}
\end{sc}
\caption{
  Regret bounds of our work and the prior art where $s$, $d$, $n$ are the sparsity level, the feature dimension, and the number of rounds, respectively. 
  The quantities $\mathcal{C}_{\min}$ and $H_*^2$ are the constants that captures the geometry of the action set (see Eq.~\eqref{def: Cmin} and~\eqref{def: H2}), and $\kappa$ is a parameter for a specific family of arm sets that satisfies $ \kappa^{-2} =  \Theta( \mathcal{C}_{\min}^{-1}) = \Theta( H^2_* )$. 
  In general, $ H_*^2 \leq \mathcal{C}_{\min}^{-1} \le \mathcal{C}_{\min}^{-2}$ (Propositon~\ref{prop:H2 vs Cmin}).
}
\label{table: results}
\end{small}
\end{center}
\end{table}

In this paper, we make significant progress in high-dimensional linear regression and sparse linear bandits, which resolves or partly answers the aforementioned open problems.

\textbf{First} (Section~\ref{sec:popart}), we propose a novel and computationally efficient estimator called \popart (POPulation covariance regression with hARd Thresholding) that enjoys a tighter $\ell_1$ norm recovery bound than the de facto standard sparse linear regression method Lasso in many problems.
Motivated by the $\ell_1$ norm recovery bound of \popart, we develop a computationally-tractable design of experiment objective for finding the sampling distribution that minimize the error bound of \popart, which is useful in settings where we have control  on the sampling distribution (such as compressed sensing).
Our design of experiments results in an $\ell_1$ norm error bound that depends on the measurement set dependent quantity denoted by $H_*^{2}$ (see Eq.~\eqref{def: H2} for precise definition) that is provably better than ${\Cmin}^{-1}$ that appears in the $\ell_1$ norm error bound used in \citet{hao2020high}, thus leading to an improved planning method for {sparse linear} bandits.
\textbf{Second} (Section~\ref{sec:bandits}), Using \popart, we design new algorithms for the sparse linear bandit problem, and improve the regret upper bound of prior work~\cite{hao2020high}; see Table~\ref{table: results} for the summary.
\textbf{Third} (Section~\ref{sec:lower-bound}), We prove a matching lower bound in data-poor regime, showing that the regret rate obtained by our algorithm is optimal. 
The key insight in our lower bound is a novel application of the algorithmic symmetrization technique \cite{simchowitz2017simulator}. {Unlike the conjecture of \citet[Remark 4.5]{hao2020high}, the improvable part was not the algorithm but the lower bound for sparsity $s$.}

We empirically verify our theoretical findings in Section~\ref{sec:expr} where \popart shows a favorable performance over Lasso.
Finally, we conclude our paper with future research enabled by \popart in Section~\ref{sec:conclusion}.
Due to space constraints, we discuss related work in Appendix~\ref{sec:related} but closely related studies are discussed in depth throughout the paper.

\vspace{-7pt}
\section{Problem Definition and Preliminaries} \label{sec: prelim}
\vspace{-6pt}

\paragraph{Sparse linear bandits.} We study the sparse linear bandit learning setting, where the learner is given access to an action space ${\cA} \subset \{a\in\RR^d: \|a\|_\infty  \le 1\}$, and repeatedly interacts with the environment as follows: at each round $t = 1,\ldots,n$, the learner chooses some action $A_t \in \cA$, and receives reward feedback $r_t = \inner{\theta^*}{A_t} + \eta_t$, where ${\theta^*} \in \RR^d$ is the underlying reward predictor, and ${\eta_t}$ is an independent zero-mean $\sigma$-subgaussian noise.
We assume that $\theta^*$ is $s$-sparse; that is, it has at most $s$ nonzero entries. 
The goal of the learner is to minimize its pseudo-regret defined as 
\[
\Reg(n) = n \max_{a \in \cA} \inner{\theta^*}{a} - \sum_{t=1}^n \inner{\theta^*}{A_t}.
\]
\ja{Change $\Reg(n)$ to $\Reg_n$, especially in main body}
\paragraph{Experimental design for linear regression.} In the experimental design for linear regression problem, one has a pool of unlabeled examples $\cX$, and some underlying predictor $\theta^*$ to be learned. 
Querying the label of $x$, i.e. conducting experiment $x$, reveals a random label $y = \inner{\theta^*}{x} + \eta$ associated with it, where $\eta$ is a zero mean noise random variable. The goal is to accurately estimate $\theta^*$, while using as few queries $x$ as possible. 




\begin{definition}(Population covariance matrix $Q$) 
Let $\mathcal{P}({\cX})$ be the space of probability measures over $\mathcal{X}$ with the Borel $\sigma$-algebra, and define the population covariance matrix for the distribution $\mu \in \mathcal{P}(\cX)$ as follows:
\begin{equation}
Q(\mu):=\int_{a \in \mathcal{X}} a a^\top d \mu(a)
\end{equation}
Classical approaches for experimental design focus on finding a distribution $\mu$ such that its induced population covariance matrix $Q(\mu)$ has properties amenable for building a low-error estimator, such as D-, A-, G-optimality~\cite{fedorov2013theory}.

\textbf{Compatibility condition for Lasso.~}
For a positive definite matrix $\Sigma \in \RR^{d\times d}$ and a sparsity level $s\in[d] := \{1,\ldots,d\}$, 
we define its compatibility constant $\phi_0^2 (\Sigma, s)$~\cite{bv11}
as follows:
\begin{equation}
\phi_0^2 (\Sigma, s):=\min_{S\subseteq[d]: |S| = s}~ \min_{v: \| v_{S} \|_1 \leq 3\| v_{-S} \|_1}\frac{s v^\top \Sigma v}{\| v_S \|_1^2},\label{def: comp const}
\end{equation}
where $v_S \in\RR^d$ denotes the vector that agrees with $v$ in coordinates in $S$ and $0$ everywhere else and $v_{-S}\in\RR^d$ denotes $v - v_S$. 

\textbf{Notation.~}
Let $e_i$ be the $i$-th {canonical basis} vector.
We define $[x] = \{1,2,\ldots,x\}$.
Let $\supp(\theta)$ be the set of coordinate indices $i$ where $\theta_i \neq 0$.
We use $a \lesssim b $ to denote that there exists an absolute constant $c$ such that $a \le cb$.

\end{definition}

\section{Improved Linear Regression and Experimental Design for Sparse Models}
\label{sec:popart}

In this section, we discuss our novel sparse linear estimator \popart for the setting where the population covariance matrix is known and show its strong theoretical properties.
We then present a variation of \popart called \wpopart that amends a potential weakness of \popart, followed by our novel experimental design for \popart and discuss its merit over prior art.



\textbf{\popart (POPulation covariance regression with hARd Thresholding).}
Unlike typical estimators for the statistical learning setup, our main estimator \popart described in Algorithm~\ref{alg:popart} takes the population covariance matrix denoted by $Q$ as input.
We summarize our assumption for \popart.

\begin{assumption}\label{ass:popart}
(Assumptions on the input of \popart)
There exists $\mu$ such that the input data points $\{(\cova_t,\resp_t)\}_{t=1}^n$ satisfy that $\cova_t \sr{\text{i.i.d.}}{\sim} \mu$ and $Q=Q(\mu) := \EE_{X\sim \mu}[X X^\T]$.
Furthermore, $\resp_t = \inner{\theta^*}{\cova_t} + \eta_t$ with $\eta_t$ being zero-mean $\sig$-subgaussian noise.
Also, $R_0 \geq \max_{a \in \mathcal{A}} |\langle a, \theta^* - \theta_0 \rangle|$.
\end{assumption}



\begin{algorithm}[h]
\caption{ \popart (POPulation covariance regression with hARd Thresholding)}
\label{alg:popart}
\begin{algorithmic}[1]
\STATE \textbf{Input:} Samples $\{(\cova_t, \resp_t)\}_{t=1}^n$, the population covariance matrix $Q\in\RR^{d\times d}$, pilot estimator $\theta_0 \in \mathbb{R}^d$, an upper bound $R_0$ of $\max_{a \in \mathcal{A}} |\langle a, \theta^* - \theta_0 \rangle|$, failure rate $\delta$. 
\STATE \textbf{Output:} estimator $\hat{\theta}$
\FOR{$t=1,\ldots,n$ } 
\STATE $\tilde{\theta}_t = Q^{-1}\cova_t (\resp_t-\langle \cova_t, \theta_0\rangle )+ \theta_0$
\label{step:one-sample-estimator}
\ENDFOR
\STATE $\forall i\in[d], {\theta}_i'=\textsf{Catoni}(\{\tilde{\theta}_{ti}:=\langle\tilde{\theta}_{t}, e_i \rangle\}_{t=1}^n , \alpha_i, \frac{\delta}{2d})$  where $\alpha_i:= \sqrt{\frac{2\log \frac{2d}{\delta}}{n(R_0^2 + \sigma^2)(Q^{-1})_{ii}(1+ \frac{2\log \frac{2d}{\delta}}{n-2 \log \frac{2d}{\delta}})}}$
\label{step:catoni-i}
\STATE $\hat{\theta}\leftarrow \textsf{clip}_\lambda ({\theta'}):= [{\theta}'_i \one(|{\theta}'_i|>\lambda_i)]_{i=1}^d$ where $\lambda_i$ is defined in Proposition~\ref{prop:individual conf bound}.
\label{step:hard-threshold}
\RETURN $\hat{\theta}$
\end{algorithmic}
\end{algorithm}

\popart consists of several stages. In the first stage, for each $(\cova_t, \resp_t)$ pair, we create a one-sample estimator $\tilde{\theta}_t$ (step~\ref{step:one-sample-estimator}).
The estimator, $\tilde{\theta}_t$, can be viewed as a generalization of doubly-robust estimator~\cite{chernozhukov2019semi,dudik2011doubly} for linear models. 
Specifically, it is the sum of two parts: one is the pilot estimator $\theta_0$ that is a hyperparameter of \popart; the other is $Q(\mu)^{-1} \cova_t (\resp_t-\langle \cova_t, \theta_0 \rangle )$, an unbiased estimator of the difference $\theta^* - \theta_0$. 
Thus, it is not hard to see that $\tilde{\theta}_t$ is an unbiased estimator of $\theta^*$.
As we will see in Theorem~\ref{thm: main bounds of estimator}, the variance of $\tilde{\theta}_t$ is smaller when $\theta_0$ is closer to $\theta^*$, showing the advantage of allowing a pilot estimator $\theta_0$ as input.
If no good pilot estimator is available a priori, one can set $\theta_0=0$.

From the discussion above, it is natural to take an average of $\tilde{\theta}_t$. Indeed, when $n$ is large, the population covariance matrix $Q(\pi)$ is close to empirical covariance matrix $\hat{Q} := \frac1n \sum_{t=1}^n \cova_t \cova_t^\top$, which makes $\hat{\theta}_{\text{avg}} := \frac1n \sum_{t=1}^n \tilde{\theta}_t$ close to the ordinary least squares estimator $\hat{\theta}_{\text{OLS}} = \hat{Q}^{-1} (\frac1n\sum_{t=1}^n \cova_t \resp_t)$. 
However, for technical reasons, the concentration property of $\tilde{\theta}_{\text{avg}}$ is hard to establish.
This motivates \popart's second stage (step~\ref{step:catoni-i}), where, for each coordinate $i \in [d]$, we employ
Catoni's estimator \cite{lugosi2019mean} (see Appendix~\ref{sec:catoni} for a recap) to obtain an intermediate estimate for each $\theta_i^*$, namely $\theta_i'$. 


To use Catoni's estimator, we need to have an upper bound of the variance of $\theta_i'$ for its $\alpha_i$ parameter. A direct calculation yields that, for all $i\in[d]$ and $t\in[n]$,  $$\textrm{Var}(\tilde{\theta}_{ti})\leq \del{ \max_{a \in \cA} \inner{\theta^*-\theta_0}{a}^2  + \sigma^2 } \max_i (Q(\mu)^{-1})_{ii}$$ where $\tilde{\theta}_{ti}:=\langle \tilde{\theta}_t,e_i\rangle$. This implies that $\del{ R_0^2  + \sigma^2 } \max_i (Q(\mu)^{-1})_{ii}$ is an upper bound of $\textrm{Var}(\tilde{\theta}_{ti})$. By the standard concentration inequality of Catoni's estimator (see Lemma~\ref{lem:catoni-error}), we obtain the following estimation error guarantee for $\theta_i'$; the proof can be found in Appendix~\ref{appendix:proof-prop}.
Hereafter, all proofs are deferred to appendix unless noted otherwise.



\begin{proposition} \label{prop:individual conf bound} 
Suppose Assumption~\ref{ass:popart} holds.
In \popart, for $i\in [d]$, if $n\geq 2 \ln \frac{2d}{\delta}$, the following inequality holds with probability $1-\frac{\delta}{d}$:
$$|\theta_i' - \theta_i^*|< \sqrt{\frac{4(R_0^2 + \sigma^2) (Q(\mu)^{-1})_{ii}^2}{n} \log \frac{2d}{\delta}} =: \lambda_i$$
\end{proposition}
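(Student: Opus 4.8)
The plan is to read $\theta_i'$ as Catoni's mean estimator (Lemma~\ref{lem:catoni-error}) applied to the scalar sequence $\{\tilde\theta_{ti}\}_{t=1}^n$ produced in step~\ref{step:one-sample-estimator}, and to discharge the two hypotheses that Catoni's guarantee requires: that this sequence is i.i.d.\ with mean $\theta_i^*$, and that the quantity used to set $\alpha_i$ is a valid upper bound on its variance. Independence and identical distribution are immediate from Assumption~\ref{ass:popart}, since $\cova_t \sim \mu$ i.i.d.\ and the noises $\eta_t$ are independent; so the substance of the argument is two short moment computations followed by a direct appeal to Lemma~\ref{lem:catoni-error}.

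First I would verify unbiasedness. Writing $\tilde\theta_{ti} = e_i^\top Q^{-1}\cova_t(\resp_t - \inner{\cova_t}{\theta_0}) + (\theta_0)_i$ and inserting $\resp_t = \inner{\theta^*}{\cova_t} + \eta_t$ gives $\tilde\theta_{ti} = e_i^\top Q^{-1}\cova_t\cova_t^\top(\theta^* - \theta_0) + e_i^\top Q^{-1}\cova_t\,\eta_t + (\theta_0)_i$. Taking expectations and using $\EE[\cova_t\cova_t^\top] = Q$, together with $\EE[\eta_t] = 0$ and $\eta_t \perp \cova_t$, the first term collapses to $e_i^\top(\theta^*-\theta_0)$ and the middle term vanishes, so $\EE[\tilde\theta_{ti}] = \theta_i^*$.

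Second, and this is where the quantitative content lives, I would bound $\Var(\tilde\theta_{ti})$. Variance is unaffected by the additive constant $(\theta_0)_i$, so it suffices to bound $\EE[(e_i^\top Q^{-1}\cova_t)^2(\inner{\theta^*-\theta_0}{\cova_t} + \eta_t)^2]$. Expanding the inner square, the cross term carries a single factor of $\eta_t$ and hence vanishes by independence and zero mean; the $\eta_t^2$ contribution is at most $\sigma^2\,\EE[(e_i^\top Q^{-1}\cova_t)^2]$; and the remaining term is controlled using $\inner{\theta^*-\theta_0}{\cova_t}^2 \le \max_{a\in\cA}\inner{\theta^*-\theta_0}{a}^2 \le R_0^2$, which is legitimate because $\mu$ is supported on $\cA$ and $R_0$ is the bound supplied in Assumption~\ref{ass:popart}. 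Combining, $\Var(\tilde\theta_{ti}) \le (R_0^2 + \sigma^2)\,\EE[(e_i^\top Q^{-1}\cova_t)^2]$, and the last expectation equals $e_i^\top Q^{-1} Q Q^{-1} e_i = (Q^{-1})_{ii}$. Hence $\Var(\tilde\theta_{ti}) \le (R_0^2 + \sigma^2)(Q^{-1})_{ii} =: v_i$.

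Finally I would feed $v_i$ into Catoni's concentration inequality. The parameter $\alpha_i$ in step~\ref{step:catoni-i} is exactly the variance-calibrated Catoni scale for the proxy $v_i$ at per-side confidence $\delta/(2d)$: note $\log\frac{1}{\delta/(2d)} = \log\frac{2d}{\delta}$, and the factor $1 + \frac{2\log(2d/\delta)}{n - 2\log(2d/\delta)}$ is the standard correction term, well defined precisely under the hypothesis $n \ge 2\ln\frac{2d}{\delta}$. Lemma~\ref{lem:catoni-error} then yields, with probability at least $1 - 2\cdot\frac{\delta}{2d} = 1 - \frac{\delta}{d}$, the claimed deviation $|\theta_i' - \theta_i^*| < \lambda_i$. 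I expect the main (and essentially only) obstacle to be bookkeeping Catoni's constants so that the variance proxy $v_i$ propagates cleanly into the precise form of $\lambda_i$ (the numerical constant and the correction factor), whereas the unbiasedness and variance steps are routine moment calculations.
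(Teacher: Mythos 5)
Your proposal is correct and follows essentially the same route as the paper: establish unbiasedness of $\tilde{\theta}_{ti}$, bound $\Var(\tilde{\theta}_{ti}) \le (R_0^2+\sigma^2)(Q^{-1})_{ii}$ using $\inner{\theta^*-\theta_0}{\cova_t}^2 \le R_0^2$ and $\EE[\eta_t^2]\le\sigma^2$, then invoke Catoni's guarantee (Lemma~\ref{lem:catoni-error}) at confidence $\delta/(2d)$ per side, with $n \ge 2\ln\frac{2d}{\delta}$ absorbing the $n-\log\frac{2d}{\delta}$ denominator. The only cosmetic difference is that the paper organizes the variance bound as a matrix PSD inequality via the law of total variance (conditioning on $\cova$), whereas you expand the scalar second moment directly and kill the cross term by independence — the two computations are equivalent, and your handling of the Catoni constants is if anything slightly more explicit than the paper's.
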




Proposition~\ref{prop:individual conf bound} shows that,
for each coordinate $i$, 
$(\theta_i' - \lambda_i, \theta_i' + \lambda_i)$ forms a confidence interval for $\theta_i^*$. 
Therefore, if $0 \notin (\theta_i' - \lambda_i, \theta_i' + \lambda_i)$, we can infer that $\theta_i^* \neq 0$, i.e., $i \in \supp(\theta^*)$. 
Based on the observation above, \popart's last stage (step~\ref{step:hard-threshold})
performs a hard-thresholding for each of the coordinates of $\theta'$, using the threshold $\lambda_i$ for coordinate $i$.
Thanks to the thresholding step, with high probability, $\hat{\theta}$'s support is contained in that of $\theta^*$, which means that all coordinates $i$ outside the support of $\th^*$ (typically the vast majority of the coordinates when $s \ll d$) satisfy $\hat\theta_i = \theta^*_i = 0$.
Meanwhile, for coordinate $i$'s in $\supp(\theta^*)$, the estimated value $\hat{\theta}_i$ is not too far from $\theta^*_i$.


The following theorem states \popart's estimation error bound in terms of its output  $\hat{\theta}$'s $\ell_\infty$, $\ell_0$, and $\ell_1$ errors, respectively. We remark that replacing hard thresholding in the last stage with soft thresholding enjoys similar guarantees.
%
\begin{theorem}\label{thm: main bounds of estimator}
Take Assumption~\ref{ass:popart}.
Let $H^2(Q):=\max_{i \in [d]} (Q^{-1})_{ii}$.
Then, \popart has the following guarantees
with probability at least $1-\delta$: 
\begin{enumerate}[label=(\roman*)]
    \item $\forall i \in [d], |\hat{\theta}_i - \theta_i^*|< 2 \sqrt{\frac{4 (R_0^2 + \sigma^2)(Q(\mu)^{-1})_{ii}}{n}\log \frac{2d}{\delta}}$ so $\|\hat{\theta}-\theta^*\|_\infty < 2 \sqrt{\frac{4 (R_0^2 + \sigma^2)H^2(Q(\mu))}{n}\log \frac{2d}{\delta}}, $ 
    \item $\textrm{supp}(\hat{\theta})\subset \textrm{supp}({\theta}^*)$ so $\|\hat{\theta}-\theta^*\|_0 \leq s$,
    \item $\|\hat{\theta}-\theta^*\|_1 \leq 2s \sqrt{\frac{4 (R_0^2 + \sigma^2)H^2(Q(\mu))}{n}\log \frac{2d}{\delta}}$
\end{enumerate}
\end{theorem}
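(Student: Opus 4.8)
The plan is to derive all three bounds deterministically on a single high-probability event, so that the only probabilistic input is Proposition~\ref{prop:individual conf bound}. First I would invoke that proposition for each coordinate $i \in [d]$ and take a union bound: each event $\{|\theta_i' - \theta_i^*| < \lambda_i\}$ fails with probability at most $\delta/d$, so with probability at least $1-\delta$ the bound $|\theta_i' - \theta_i^*| < \lambda_i$ holds simultaneously for all $i \in [d]$, where $\lambda_i$ is the confidence radius of Proposition~\ref{prop:individual conf bound}, which is exactly the threshold used in step~\ref{step:hard-threshold}. Every subsequent step is carried out on this event.

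For part (i), I would examine the clipped estimate $\hat\theta_i = \theta_i'\,\one(|\theta_i'| > \lambda_i)$ coordinate by coordinate, in two cases. If $|\theta_i'| > \lambda_i$, then $\hat\theta_i = \theta_i'$ and $|\hat\theta_i - \theta_i^*| = |\theta_i' - \theta_i^*| < \lambda_i$. If $|\theta_i'| \le \lambda_i$, then $\hat\theta_i = 0$ and the triangle inequality gives $|\hat\theta_i - \theta_i^*| = |\theta_i^*| \le |\theta_i^* - \theta_i'| + |\theta_i'| < 2\lambda_i$. Either way $|\hat\theta_i - \theta_i^*| < 2\lambda_i$, matching the claimed per-coordinate bound; the $\ell_\infty$ statement follows by maximizing over $i$ and using $(Q^{-1})_{ii} \le H^2(Q)$.

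Parts (ii) and (iii) reuse the same event. For support containment, take any $i \notin \supp(\theta^*)$, so $\theta_i^* = 0$ and hence $|\theta_i'| = |\theta_i' - \theta_i^*| < \lambda_i$; this fails the threshold test and forces $\hat\theta_i = 0$. Contrapositively $\supp(\hat\theta) \subseteq \supp(\theta^*)$, which gives $\|\hat\theta - \theta^*\|_0 \le \|\theta^*\|_0 \le s$. For the $\ell_1$ bound I would then use (ii) to note that $\hat\theta - \theta^*$ is supported on the at most $s$ coordinates of $\supp(\theta^*)$, and sum the per-coordinate bound from (i) over them, yielding $\|\hat\theta - \theta^*\|_1 < 2s\sqrt{4(R_0^2+\sigma^2)H^2(Q)\log(2d/\delta)/n}$.

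Since the concentration work is already encapsulated in Proposition~\ref{prop:individual conf bound} (through Catoni's estimator), the argument is a deterministic case analysis with no serious analytic obstacle. The one spot that needs care is the thresholded-to-zero case in part (i): a genuine support coordinate whose estimate is clipped to $0$ must have had small magnitude, and it is precisely the triangle inequality that confines the resulting error to $2\lambda_i$ instead of losing control. Propagating the failure probability correctly through the $d$-fold union bound is the other piece of necessary, if routine, bookkeeping.
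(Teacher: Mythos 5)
Your proposal is correct and follows essentially the same route as the paper's proof: a union bound over Proposition~\ref{prop:individual conf bound}, followed by a deterministic analysis of the hard-thresholding step via the triangle inequality, with (ii) and (iii) falling out of the same event. The only (harmless) difference is that you keep the per-coordinate radii $\lambda_i$ throughout—which in fact yields the per-coordinate claim in (i) slightly more directly—whereas the paper collapses them to $\lambda = \max_i \lambda_i$ and argues through $\|\hat{\theta}-\theta'\|_\infty \leq \lambda$.
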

Interestingly, \popart has no false positive for identifying the sparsity pattern and  enjoys an $\ell_\infty$ error bound, which is not available from Lasso, to our knowledge.
Unfortunately, a direct comparison with Lasso is nontrivial since the largest compatibility constant $\phi_0^2 (\hat{\Sigma}, s)$ is defined as the solution of the optimization problem~\eqref{def: comp const}, let alone the fact that $\phi_0^2 (\hat{\Sigma}, s)$ is a function of the empirical covariance matrix.
While we leave further investigation as future work, our experiment results in Section~\ref{sec:expr} suggest that there might be a case where \popart makes a meaningful improvement over Lasso.
\vspace{-0.25cm}
\begin{proof}[Proof of Theorem~\ref{thm: main bounds of estimator}]
{Let $\lambda := \max_i \lambda_i = \sqrt{\frac{4(R_0^2 + \sigma^2) H^2(Q(\mu))}{n} \log \frac{2d}{\delta}}$}
From Proposition \ref{prop:individual conf bound} and the union bound, one can check that \begin{equation} \label{eqn: result of prop1}
    \|{\theta'}-\theta^*\|_\infty < \lambda
\end{equation} with probability $1-\delta$. Therefore, the coordinates in $\textrm{supp}(\theta^*)^c$ will be thresholded out because of $\|{\theta'}-\theta^*\|_\infty \leq \lambda$. Therefore, (ii) holds and for all $i \in \textrm{supp}(\theta^*)^c$, $|\hat{\theta}_i - \theta_i^*|=0$. 

By definition, $\hat{\theta}=\textsf{clip}_\lambda ({\theta'})$, we can say that $\|\hat{\theta}-\theta'\|_\infty \leq \lambda$. Plus, by Eq. (\ref{eqn: result of prop1}), $\|\theta'-\theta^* \|_\infty \leq \lambda$. By the triangle inequality, $\|\theta^*-\hat{\theta}\|_\infty \leq 2 \lambda$. Therefore, (i) holds. 

Lastly, (iii) can be argued as follows:
\[
    \|\hat{\theta}-\theta^*\|_1 = \sum_{i \in [d]} |\hat{\theta}_i - \theta_i^*|
    \leq \sum_{i \in \textrm{supp}(\theta^*)^c} 0 + \sum_{i \in \textrm{supp}(\theta^*)} 2\lambda \leq 2 s \lambda.
    \qedhere
\]
\end{proof}
\vspace{-8pt}


\textbf{\wpopart: Improved guarantee by warmup.~}
One drawback of the \popart estimator is that its estimation error scales with $\sqrt{R_0^2 + \sigma^2 }$, which can be very large when $R_0$ is large.
One may attempt to use the fact that \popart allows a pilot estimator $\theta_0$ to address this issue since $R_0$ gets smaller as $\th_0$ is closer to $\th^*$.
However, it is a priori unclear how to obtain a $\theta_0$ close to $\theta^*$ as $\theta^*$ is the unknown parameter that we wanted to estimate in the first place. 

To get around this ``chicken and egg'' problem, we propose to introduce a warmup stage, which we call \wpopart (Algorithm \ref{alg:warm-popart}). 
\wpopart consists of two stages. 
For the first warmup stage, the algorithm runs $\popart$ with the zero vector as the pilot estimator and with the first half of the samples to obtain a coarse estimator denoted by $\hat\theta_0$  which guarantees that for large enough $n_0$, $\|\hat\theta_0 - \theta^*\|_1 \leq \sigma$. In the second stage, using $\hat\theta_0$ as the pilot estimator, it runs $\popart$ on the remaining half of the samples.



\begin{algorithm}[h]
\caption{\wpopart}
\begin{algorithmic}[1]

\STATE \textbf{Input:} Samples $\{(\cova_t, \resp_t)\}_{t=1}^{n_0}$, the population covariance matrix $Q\in\RR^{d\times d}$, an upper bound 
$R_{\max}$ of $\max_{a \in \cA} |\langle \theta^* , a \rangle| $, number of samples $n_0$, failure rate $\delta$.

\STATE \textbf{Output:} $\hat{\theta}$, an estimate of $\theta^*$
\STATE Run $\popart( \{( \cova_i, \resp_i )\}_{i=1}^{\lfloor n_0/2 \rfloor}, Q,  \vec{0}, \delta, R_{\max})$ to obtain ${\hat\theta_0}$, a coarse estimate of $\theta^*$ for the next step.
\label{step:coarse-estimation}

\STATE Run $\popart( \{( \cova_i, \resp_i )\}_{i= \lfloor n_0/2 \rfloor +1}^{n_0},Q, \hat\theta_0, \delta, \sigma)$ to obtain $\hat{\theta}$, an estimate of $\theta^*$.
\end{algorithmic}
\label{alg:warm-popart}
\end{algorithm}

The following corollary states the estimation error bound of the output estimator $\hat{\theta}$. Compared with \popart's $\ell_1$ recovery guarantee, \wpopart's $\ell_1$ recovery guarantee (Equation~\eqref{eqn:warm-popart-l1}) has no dependence on $R_{\max}$; its dependence on $R_{\max}$ only appears in the lower bound requirement for $n_0$. 



\begin{corollary}\label{cor:warm-popart}
Take Assumption~\ref{ass:popart} without the condition on $R_0$.
Assume that $R_{\max} \geq \max_{a \in \mathcal{A}} |\langle a, \theta^* \rangle|$, and $n_0>\frac{32s^2(R_{\max}^2 + \sigma^2)H^2 (Q(\mu))}{\sigma^2} \log \frac{2d}{\delta}$.
Then, \wpopart has, with probability at least $1-2\delta$,
\begin{equation}
    \|\hat{\theta}-\theta^*\|_1 \leq 8 s \sigma  \sqrt{\frac{H^2(Q(\mu))\ln \frac{2d}{\delta}}{n_0}}.
    \label{eqn:warm-popart-l1}
\end{equation}
\end{corollary}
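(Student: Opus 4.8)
The plan is to peel apart the two \popart calls inside \wpopart and apply Theorem~\ref{thm: main bounds of estimator} once to each, exploiting the sample split so that the warmup output $\hat\theta_0$ acts as a \emph{data-independent} pilot for the second stage. The only genuinely new ingredient beyond Theorem~\ref{thm: main bounds of estimator} is verifying that the radius $\sigma$ fed to the second call is a legitimate upper bound on $\max_{a\in\cA}|\langle a,\theta^*-\hat\theta_0\rangle|$; everything else is substitution and a union bound.

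First I would analyze the warmup call (line~\ref{step:coarse-estimation}), which runs \popart on the first $\lfloor n_0/2\rfloor$ samples with pilot $\mathbf 0$ and radius $R_{\max}$. Since $R_{\max}\ge \max_{a\in\cA}|\langle a,\theta^*\rangle|=\max_{a\in\cA}|\langle a,\theta^*-\mathbf 0\rangle|$, Assumption~\ref{ass:popart} holds for this call, and Theorem~\ref{thm: main bounds of estimator}(iii) gives, with probability at least $1-\delta$,
\[
\|\hat\theta_0-\theta^*\|_1 \le 2s\sqrt{\frac{4(R_{\max}^2+\sigma^2)H^2(Q(\mu))}{\lfloor n_0/2\rfloor}\log\frac{2d}{\delta}}.
\]
Using $\lfloor n_0/2\rfloor\ge n_0/2$ (the floor is harmless) and squaring, the right-hand side is at most $\sigma$ precisely when $n_0\ge \frac{32 s^2(R_{\max}^2+\sigma^2)H^2(Q(\mu))}{\sigma^2}\log\frac{2d}{\delta}$, which is the hypothesis. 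The constant $32$ is exactly $4\cdot 4\cdot 2$: the squared prefactor $(2s)^2=4s^2$, the constant $4$ from Theorem~\ref{thm: main bounds of estimator}(iii), and the factor $2$ lost by using only half the samples. So on this good event, $\|\hat\theta_0-\theta^*\|_1\le\sigma$.

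Next I would pass to the second call, which uses the pilot $\hat\theta_0$, radius $\sigma$, and the remaining $n_0-\lfloor n_0/2\rfloor\ge n_0/2$ samples. The bridge is that every $a\in\cA$ satisfies $\|a\|_\infty\le 1$, so by H\"older's inequality, on the warmup good event,
\[
\max_{a\in\cA}|\langle a,\theta^*-\hat\theta_0\rangle| \le \Big(\max_{a\in\cA}\|a\|_\infty\Big)\,\|\theta^*-\hat\theta_0\|_1 \le \|\theta^*-\hat\theta_0\|_1 \le \sigma,
\]
so the radius $\sigma$ is valid and Assumption~\ref{ass:popart} is met with $\theta_0=\hat\theta_0$, $R_0=\sigma$. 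Because the second stage draws on the second half of the samples, which are independent of $\hat\theta_0$, conditioning on the warmup good event freezes $\hat\theta_0$ into a fixed vector and Theorem~\ref{thm: main bounds of estimator} applies verbatim. Its part (iii), now with $R_0^2+\sigma^2=2\sigma^2$ and $n_0/2$ samples, yields with conditional probability at least $1-\delta$
\[
\|\hat\theta-\theta^*\|_1 \le 2s\sqrt{\frac{4\cdot 2\sigma^2\,H^2(Q(\mu))}{n_0/2}\log\frac{2d}{\delta}} = 8s\sigma\sqrt{\frac{H^2(Q(\mu))\log\frac{2d}{\delta}}{n_0}}.
\]
A union bound over the two failure events (each of probability $\delta$) then gives the claim with probability at least $1-2\delta$.

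The hard part will not be any single inequality but the correctness of the conditioning/independence step: one must be careful that the data split makes $\hat\theta_0$ independent of the second batch, so that inside the second call the Catoni variance proxy $R_0^2+\sigma^2$ in Theorem~\ref{thm: main bounds of estimator} is genuinely controlled by $2\sigma^2$ on the good event, rather than $\hat\theta_0$ being entangled with the very samples used to estimate $\theta^*$. The rest—tracking $\lfloor n_0/2\rfloor$ and confirming that the constant $32$ is exactly what forces the warmup error below $\sigma$—is routine bookkeeping.
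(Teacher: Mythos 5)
Your proposal is correct and follows essentially the same route as the paper's proof: apply Theorem~\ref{thm: main bounds of estimator}(iii) to the warmup call with $R_0=R_{\max}$ and the lower bound on $n_0$ to get $\|\hat\theta_0-\theta^*\|_1\le\sigma$, use H\"older with $\|a\|_\infty\le 1$ to certify $R_0=\sigma$ for the second call, apply the theorem again, and take a union bound. (One trivial bookkeeping slip: $\lfloor n_0/2\rfloor\le n_0/2$, not $\ge$, but this off-by-one slack is immaterial and the paper's own proof does not track the floor at all.)
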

\begin{remarks}
In Algorithm~\ref{alg:warm-popart}, we choose $\popart$ as our coarse estimator, but we can freely change the coarse estimation step (step~\ref{step:coarse-estimation}) to other principled estimation methods (such as Lasso) without affecting the main estimation error bound~\eqref{eqn:warm-popart-l1}; the only change will be the lower bound requirement of $n_0$ to another problem-dependent constant.
\end{remarks}

\begin{remarks} \wpopart requires the knowledge of $R_{\max}$, an upper bound of $\max_{a \in \cA} |\langle \theta^* , a \rangle|$; this requirement can be relaxed by changing the last argument of the coarse estimation step (step~\ref{step:coarse-estimation}) from $R_{\max}$, to some function $f(n_0)$ such that $f(n_0) = \omega(1)$ and $f(n_0) = o(\sqrt{n_0})$ (say, $\sigma n_0^{\frac14}$);
with this change, a result analogous to Corollary~\ref{cor:warm-popart} can be proved with a different lower bound requirement of $n_0$.
\end{remarks}

\textbf{A novel and efficient experimental design for sparse linear estimation.~}
%
In the experimental design setting where the learner has freedom to design the underlying sampling distribution $\mu$, the $\ell_1$ error bound of \popart and \wpopart naturally motivates a design criterion.
Specifically, we can choose $\mu$ that minimizes $H^2(Q(\mu))$, which gives the lowest estimation error guarantee. 
We denote the optimal value of $H^2(Q(\mu))$ by
\begin{gather}
    H_*^2 := \underset{\mu \in \mathcal{P}(\cA)}{\textrm{min}}\max_{i\in[d]} (Q(\mu)^{-1})_{ii} ~. \label{def: H2}
\end{gather}

The minimization of $H^2(Q(\mu))$ is a convex optimization problem, which admits efficient methods for finding the solution. 
Intuitively, $H_*^2$ captures the geometry of the action set $\cA$.



To compare with previous studies that design a sampling distribution for Lasso, we first review the standard $\ell_1$ error bound of Lasso.
\begin{theorem}
\label{thm:lasso}
(\citet[Theorem 6.1]{bv11}) With probability at least $1-2\delta$, the $\ell_1$-estimation error of the optimal Lasso solution $\hat{\theta}_{\Lasso}$ \cite[Eq. (2.2)]{bv11} with $\lambda = \sqrt{2\log(2d/\delta)/n}$ satisfies 
$$ \| \hat{\theta}_{\Lasso}-\theta^*\|_1 \leq \frac{s \sigma}{\phi_0^2 (\hat{\Sigma}, s)}\sqrt{\frac{2 \log (2d/\delta) }{n}},$$

where $\phi_0 (\hat{\Sigma}, s)^2$ is the compatibility constant with respect to the empirical covariance matrix $\hat{\Sigma} = \frac{1}{n} \sum_{t=1}^n X_t X_t^\top$  and the sparsity $s$ in Eq. \eqref{def: comp const}.
\end{theorem}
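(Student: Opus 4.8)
The plan is to reproduce the standard Lasso $\ell_1$-oracle inequality argument underlying \citet[Theorem 6.1]{bv11}, which runs through the \emph{basic inequality} and then invokes the compatibility condition~\eqref{def: comp const}. First I would write the estimator as $\hat{\theta}_{\Lasso} \in \argmin_\theta \tfrac1n\|\resp - X\theta\|_2^2 + 2\lambda\|\theta\|_1$, where $X$ is the $n\times d$ matrix with rows $\cova_t^\top$ and columns normalized so that $\hat{\Sigma}_{jj}=1$. Comparing the objective at $\hat{\theta}_{\Lasso}$ with its value at $\theta^*$ and substituting $\resp = X\theta^* + \eta$ gives the basic inequality
\[
\tfrac1n\|X(\hat{\theta}_{\Lasso}-\theta^*)\|_2^2 + 2\lambda\|\hat{\theta}_{\Lasso}\|_1 \le \tfrac2n\,\eta^\top X(\hat{\theta}_{\Lasso}-\theta^*) + 2\lambda\|\theta^*\|_1 .
\]

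Next I would control the stochastic term on a high-probability event. By Hölder's inequality it is at most $\big(\tfrac2n\|X^\top\eta\|_\infty\big)\,\|\hat{\theta}_{\Lasso}-\theta^*\|_1$, so it suffices to bound $\tfrac2n\|X^\top\eta\|_\infty = \max_j \tfrac2n\big|\sum_t X_{tj}\eta_t\big|$. Since each $\eta_t$ is $\sigma$-subgaussian and $\sum_t X_{tj}^2 = n$ under the normalization, each coordinate is $\tfrac{2\sigma}{\sqrt n}$-subgaussian, and a union bound over the $d$ coordinates yields $\tfrac2n\|X^\top\eta\|_\infty \le \lambda_0$ with $\lambda_0 \asymp \sigma\sqrt{\log(2d/\delta)/n}$ on an event of probability at least $1-2\delta$. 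Taking $\lambda$ proportional to $\lambda_0$ places us on the event where the penalty dominates the noise.

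The third step converts the basic inequality into a cone condition on the error $v := \hat{\theta}_{\Lasso}-\theta^*$. Writing $S=\supp(\theta^*)$ with $|S|=s$ and splitting $\|\theta^*\|_1 - \|\hat{\theta}_{\Lasso}\|_1$ across $S$ and its complement by the triangle inequality, the basic inequality rearranges (on the good event) into $\tfrac1n\|Xv\|_2^2 + \lambda\|v_{-S}\|_1 \le 3\lambda\|v_S\|_1$, so $v$ satisfies the cone condition $\|v_{-S}\|_1 \le 3\|v_S\|_1$ underlying the compatibility constant~\eqref{def: comp const}. I would then apply that definition with $v^\top\hat{\Sigma}v = \tfrac1n\|Xv\|_2^2$ to get $\|v_S\|_1 \le \sqrt{s\,v^\top\hat{\Sigma}v}\,/\,\phi_0(\hat{\Sigma},s)$, substitute into the cone inequality to obtain a self-bounding inequality for $\tfrac1n\|Xv\|_2^2$, solve it, and use $\|v\|_1 \le 4\|v_S\|_1$ to conclude $\|v\|_1 \lesssim s\lambda/\phi_0^2(\hat{\Sigma},s)$; inserting $\lambda\asymp\sigma\sqrt{\log(2d/\delta)/n}$ delivers the stated bound.

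The hard part will be bookkeeping rather than conceptual: matching the leading constant in $\tfrac{s\sigma}{\phi_0^2(\hat{\Sigma},s)}\sqrt{2\log(2d/\delta)/n}$ and the exact failure probability $2\delta$ requires a careful choice of $\lambda$ relative to $\lambda_0$ together with a tight subgaussian maximal inequality. A secondary subtlety is that the column-normalization convention $\hat{\Sigma}_{jj}=1$ is implicit in the statement and must be maintained throughout, since otherwise the per-coordinate subgaussian parameter (and hence $\lambda_0$) and the compatibility constant would both acquire diagonal factors of $\hat{\Sigma}$.
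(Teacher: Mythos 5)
Your proposal is correct and is essentially the paper's own proof: the paper does not prove this theorem itself but imports it from B\"uhlmann and van de Geer (Theorem 6.1), whose argument is exactly the route you describe---the basic inequality, a subgaussian maximal inequality for the noise term $\tfrac{2}{n}\|X^\top \eta\|_\infty$, the resulting cone condition $\|v_{-S}\|_1 \le 3\|v_S\|_1$, and then the compatibility constant of $\hat{\Sigma}$. The constant-matching and column-normalization caveats you flag are real but are only bookkeeping, handled the same way in the cited source.
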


Ideally, for Lasso, experiment design which minimizes the compatibility constant will guarantee the best estimation error bound within a fixed number of samples $n$. However, naively, the computation of the compatibility constant is intractable since Eq.~\eqref{def: comp const} is a combinatorial optimization problem which is usually difficult to compute.
One simple approach taken by~\citet{hao2020high} is to use the following computationally tractable surrogate of $\phi_0^2 (\hat{\Sigma}, s)$:
\begin{gather}
    \mathcal{C}_{\min}  (\cA):= \underset{\mu \in \mathcal{P}(\cA)}{\textrm{max}}\lambda_{\tmin} (Q(\mu)) \label{def: Cmin}
\end{gather}
\ja{we will use $\cC_{\min}$ instead of $\cC_{\min}(\cA)$ when it is clear from the context. } where $\lambda_{\tmin}(A)$ denotes the minimum eigenvalue of a matrix $A$.
With the choice of sampling distribution $\mu = \underset{\mu \in \mathcal{P}(\cA)}{\textrm{argmax}}\lambda_{\tmin} (Q(\mu))$, and $n \geq \tilde{\Omega}(\frac{s \cdot \polylog(d)}{{\Cmin}^2})$, with high probability, $\phi_0^2 (\hat{\Sigma},s) \geq \Cmin/2$ holds~\cite[][Theorem 1.8]{rudelson2012reconstruction},
and one can replace $\phi_0 (\hat{\Sigma}, s)$ to ${\Cmin}/2$ in Theorem \ref{thm:lasso} to get the following corollary:
\begin{corollary}
\label{cor:lasso with Cmin} With probability at least $1-\exp(-c n)-2\delta$ for some universal constant $c$, the $\ell_1$-estimation error of the optimal Lasso solution $\hat{\theta}_{\Lasso}$ satisfies
\begin{equation}\label{eqn:lasso with Cmin}
    \| \hat{\theta}_{\Lasso}-\theta^*\|_1 \leq \frac{2s \sigma}{{\Cmin}}\sqrt{\frac{2 \log (2d/\delta) }{n}},
\end{equation} 
\end{corollary}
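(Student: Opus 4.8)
The plan is to combine the standard Lasso $\ell_1$ bound of Theorem~\ref{thm:lasso}, which is stated in terms of the \emph{empirical} compatibility constant $\phi_0^2(\hat{\Sigma}, s)$, with a lower bound on that random quantity in terms of the deterministic design parameter $\Cmin$. First I would fix the sampling distribution to be the maximizer $\mu^* = \argmax_{\mu \in \mathcal{P}(\cA)} \lambda_{\tmin}(Q(\mu))$ from Eq.~\eqref{def: Cmin}, so that by construction the population covariance satisfies $\lambda_{\tmin}(Q(\mu^*)) = \Cmin$; the $n$ i.i.d.\ samples drawn from $\mu^*$ are then fed to Lasso.

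The next step is to pass from this minimum-eigenvalue control of the population covariance to a lower bound on its population compatibility constant. Using the variational form in Eq.~\eqref{def: comp const}, for any $S$ with $|S| = s$ and any feasible $v$, I would bound $v^\top Q(\mu^*) v \geq \Cmin \|v\|_2^2 \geq \Cmin \|v_S\|_2^2 \geq \Cmin \|v_S\|_1^2 / s$, where the last step is Cauchy--Schwarz applied to the at most $s$ nonzero coordinates of $v_S$. Substituting into the objective gives $\frac{s\, v^\top Q(\mu^*) v}{\|v_S\|_1^2} \geq \Cmin$, and minimizing over $S$ and $v$ yields $\phi_0^2(Q(\mu^*), s) \geq \Cmin$; i.e., a spectral lower bound on the population covariance automatically produces a compatibility lower bound.

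The main obstacle is that Theorem~\ref{thm:lasso} requires the \emph{empirical} compatibility constant $\phi_0^2(\hat{\Sigma}, s)$, a data-dependent random object rather than the population quantity just bounded. To control it I would invoke the reconstruction/restricted-eigenvalue concentration result of \cite[Theorem 1.8]{rudelson2012reconstruction}: provided $n \geq \tilde{\Omega}(s\cdot\polylog(d)/\Cmin^2)$, with probability at least $1-\exp(-cn)$ the empirical compatibility constant stays above half its population counterpart, so that $\phi_0^2(\hat{\Sigma}, s) \geq \Cmin/2$. This matrix-concentration step is the technical crux; the remaining algebra is elementary.

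Finally, I would take a union bound over the two failure events---the Lasso bound of Theorem~\ref{thm:lasso} failing (probability $2\delta$) and the compatibility concentration failing (probability $\exp(-cn)$)---and substitute $\phi_0^2(\hat{\Sigma}, s) \geq \Cmin/2$ into the Lasso bound. Replacing $1/\phi_0^2(\hat{\Sigma}, s)$ by $2/\Cmin$ merely doubles the leading constant, giving $\|\hat{\theta}_{\Lasso}-\theta^*\|_1 \leq \frac{2 s \sigma}{\Cmin}\sqrt{\frac{2\log(2d/\delta)}{n}}$ with probability at least $1-\exp(-cn)-2\delta$, which is exactly the claimed bound.
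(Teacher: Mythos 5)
Your proposal is correct and follows essentially the same route as the paper: sample from the design maximizing $\lambda_{\tmin}(Q(\mu))$, invoke \cite[Theorem 1.8]{rudelson2012reconstruction} to get $\phi_0^2(\hat{\Sigma},s) \geq \Cmin/2$ with probability $1-\exp(-cn)$ under the stated sample-size condition, substitute into Theorem~\ref{thm:lasso}, and union bound. Your intermediate Cauchy--Schwarz step showing $\phi_0^2(Q(\mu^*),s) \geq \lambda_{\tmin}(Q(\mu^*)) = \Cmin$ is a harmless (and clarifying) elaboration of what the paper leaves implicit in its appeal to the Rudelson--Zhou transfer result.
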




The following proposition shows that our estimator has a better error bound compared to the surrogate experimental design for Lasso of~\citet{hao2020high}.

\begin{proposition}\label{prop:H2 vs Cmin}
We have $ H_*^2 \leq \mathcal{C}_{\min}^{-1} \leq d H_*^2$. Furthermore, there exist arm sets for which either of the inequalities is tight up to a constant factor.
\end{proposition}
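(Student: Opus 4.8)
The plan is to reduce the sandwich to two elementary matrix inequalities applied pointwise in $\mu$, and then to exhibit two explicit arm sets witnessing tightness. First I would rewrite the right-hand quantity in a form parallel to $H_*^2$. Since $\lambda_{\max}(Q(\mu)^{-1}) = 1/\lambda_{\min}(Q(\mu))$, minimizing over $\mu \in \mathcal{P}(\cA)$ gives
\[
\mathcal{C}_{\min}^{-1} = \frac{1}{\max_{\mu} \lambda_{\min}(Q(\mu))} = \min_{\mu \in \mathcal{P}(\cA)} \lambda_{\max}(Q(\mu)^{-1}),
\]
so that both $H_*^2$ and $\mathcal{C}_{\min}^{-1}$ are minima over $\mu$ of functionals of $M(\mu) := Q(\mu)^{-1}$, namely $f(\mu) := \max_i M(\mu)_{ii}$ and $g(\mu) := \lambda_{\max}(M(\mu))$.

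Next I would record the pointwise comparison: for any positive definite $M \in \RR^{d \times d}$, using $M_{ii} = e_i^\top M e_i$ and $\tr(M) = \sum_i M_{ii}$,
\[
\max_i M_{ii} \;\le\; \lambda_{\max}(M) \;\le\; \tr(M) \;\le\; d\,\max_i M_{ii},
\]
i.e. $f(\mu) \le g(\mu) \le d\, f(\mu)$ for every $\mu$. The sandwich then follows by monotonicity of the minimum: letting $\mu_H, \mu_C$ attain $H_*^2$ and $\mathcal{C}_{\min}^{-1}$, the lower bound is $H_*^2 = f(\mu_H) \le f(\mu_C) \le g(\mu_C) = \mathcal{C}_{\min}^{-1}$, and the upper bound is $\mathcal{C}_{\min}^{-1} = g(\mu_C) \le g(\mu_H) \le d\, f(\mu_H) = d\, H_*^2$.

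For tightness of the left inequality I would take $\cA = \{e_1,\dots,e_d\}$: every $\mu$ is a distribution $(p_1,\dots,p_d)$ with $Q(\mu) = \diag(p)$, so $f(\mu) = g(\mu) = \max_i 1/p_i$, both minimized at the uniform $p$, giving the exact equality $H_*^2 = \mathcal{C}_{\min}^{-1} = d$. For tightness of the right inequality I would use a set with a single, maximally spread-out hard-to-explore direction $v = \mathbf{1}/\sqrt d$: take $\cA$ to be the sign vectors $a \in \{\pm1\}^d$ whose coordinates sum to a fixed value of constant magnitude, so that $\|a\|_\infty = 1$ yet $|\langle a, v\rangle| = O(1/\sqrt d)$. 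Since every arm has tiny projection on $v$, one gets $v^\top Q(\mu) v = O(1/d)$ for all $\mu$, hence $\mathcal{C}_{\min}^{-1} = \Omega(d)$; a direct second-moment calculation for the uniform distribution over this set yields $Q(\mu) \approx (1+\tfrac1d)I - \tfrac1d \mathbf{1}\mathbf{1}^\top$, whose inverse has $\lambda_{\max} = \Theta(d)$ concentrated along $v$ but diagonal entries $(Q(\mu)^{-1})_{ii} = \Theta(1)$, so $H_*^2 = O(1)$. The matching universal bound $H_*^2 \ge 1$ comes for free from the Cauchy--Schwarz inequality $(Q(\mu)^{-1})_{ii} \ge 1/Q(\mu)_{ii} \ge 1$, using $Q(\mu)_{ii} = \EE_\mu[a_i^2] \le 1$. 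Thus $\mathcal{C}_{\min}^{-1}/H_*^2 = \Theta(d)$.

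The sandwich itself is routine; the main obstacle is the second tightness example, where I must verify both that the spread-out direction forces $\mathcal{C}_{\min}^{-1} = \Theta(d)$ and that the explicit design keeps every diagonal entry of $Q(\mu)^{-1}$ at $\Theta(1)$. The delicate point is engineering the arm set so that the large eigenvalue $\Theta(d)$ of $Q(\mu)^{-1}$ lives along the fully spread direction $v = \mathbf{1}/\sqrt d$, which contributes only $\Theta(d) \cdot \tfrac1d = \Theta(1)$ to each diagonal entry; this realizes the near-equality case $\lambda_{\max}(M) \approx d\max_i M_{ii}$ of the matrix inequality, and pinning down the constants requires the explicit second-moment computation over the balanced sign vectors.
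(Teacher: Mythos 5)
Your proof is correct, and its first two parts coincide with the paper's own argument: the sandwich is proved there by exactly your chain $\max_i (Q^{-1})_{ii} = \max_i e_i^\top Q^{-1} e_i \le \lambda_{\max}(Q^{-1}) \le \mathrm{tr}(Q^{-1}) \le d \max_i (Q^{-1})_{ii}$, applied to the two optimizers $\mu_H$ and $\mu_C$, and the left-tightness example $\mathcal{A} = \{e_1,\dots,e_d\}$ with $H_*^2 = \mathcal{C}_{\min}^{-1} = d$ is also the paper's. Where you genuinely diverge is the right-tightness example. The paper takes $\mathcal{A} = \{\tfrac{1}{\sqrt d}e_1\} \cup \{e_1 + \tfrac{1}{\sqrt d}e_i : i \ge 2\}$ and runs a fairly heavy computation: a symmetry/concavity argument showing the optimal design puts equal mass on $a_2,\dots,a_d$, then explicit determinant/cofactor formulas and derivative optimizations, ending with $H_*^2 = \Theta(d^2)$ and $\mathcal{C}_{\min}^{-1} = \Theta(d^3)$. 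Your construction---sign vectors $a \in \{\pm 1\}^d$ with $\sum_i a_i = c$ for a fixed constant $c$---is simpler and buys two things. First, the bound $\mathcal{C}_{\min}^{-1} \ge d/c^2$ needs no optimization over designs at all: every arm has identical projection $c/\sqrt d$ onto $v = \mathbf{1}/\sqrt d$, so $v^\top Q(\mu) v = c^2/d$ for \emph{every} $\mu$, whereas the paper must reason about the optimal design. Second, $H_*^2 = \Theta(1)$ follows from one Sherman--Morrison computation for the uniform design (exactly, $Q = (1-\beta)I + \beta \mathbf{1}\mathbf{1}^\top$ with $\beta = \tfrac{c^2-d}{d(d-1)}$, giving $\lambda_{\min}(Q) = c^2/d$ and $(Q^{-1})_{ii} \approx (1+c^2)/c^2$), together with your free lower bound $(Q^{-1})_{ii} \ge 1/Q_{ii} \ge 1$. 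One point you should make explicit: $c$ must be nonzero (with the parity of $d$), since with $c = 0$ every $Q(\mu)$ annihilates $\mathbf{1}$ and both quantities are infinite; your displayed approximation is the $c=1$ case, and your verbal description (``a fixed value of constant magnitude'') should be read as excluding $c=0$. Conceptually your example is also more transparent about the mechanism: the $\Theta(d)$ eigenvalue of $Q^{-1}$ lies along the fully delocalized direction $\mathbf{1}/\sqrt d$ and hence contributes only $\Theta(1)$ to each diagonal entry, which is precisely how $\lambda_{\max}(Q^{-1})$ and $\max_i (Q^{-1})_{ii}$ can separate by a factor of $d$; the paper's instance realizes the same ratio but at scales $d^2$ versus $d^3$, and its main side benefit is that the same instance is reused as the hard case in the paper's experiments.
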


Therefore, our new estimator has $\ell_1$ error guarantees at least a factor $\mathcal{C}_{\min}^{-1/2}$ better
than that provided by~\cite{hao2020high}, as follows: when we choose the $\mu$ as the solution of the Eq. \eqref{def: H2}, then
$$ (\text{RHS of \eqref{eqn:warm-popart-l1}}) \lesssim s \sigma H_* \sqrt{\frac{\ln(2d/\delta)}{n}} \lesssim s \sigma {\Cmin}^{-1/2} \sqrt{\frac{\ln(2d/\delta)}{n}} \lesssim s\sigma {\Cmin}^{-1} \sqrt{\frac{\ln(2d/\delta)}{n}} \lesssim (\text{RHS of \eqref{eqn:lasso with Cmin}})$$

In addition, we also prove that there exists a case where our estimator has an $d/s$-order better error bound compared to the traditional lasso bound in Theorem~\ref{thm:lasso}, although this is not in terms of the compatibility constant of the empirical covariance matrix $\hat{\Sigma}$.

\begin{proposition}\label{prop:H2 vs compat} There exists an action set $\mathcal{A}$ and an absolute constant $C_1>0$ such that $$H_* <C_1 \frac{s}{d} \times \frac{1}{\phi_0^2 (\Sigma, s)}$$
\end{proposition}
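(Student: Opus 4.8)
The plan is to exhibit a single explicitly described action set \cA together with a sampling distribution $\mu$, set $\Sigma := Q(\mu)$, and reduce the claim to two independent estimates: an upper bound on $H_*$ (for which it suffices to bound $H^2(\Sigma) = \max_i (\Sigma^{-1})_{ii}$, since $H_*^2 = \min_\nu H^2(Q(\nu)) \le H^2(\Sigma)$ by definition of $H_*^2$ as a minimum), and an upper bound on the compatibility constant $\phi_0^2(\Sigma, s)$ (I only need it to be small, so no matching lower bound is required). Writing $S = \{1,\dots,s\} = \supp(\theta^*)$ and $u = \tfrac{1}{\sqrt s}\mathbf{1}_S$, the entire construction is driven by the idea of making the single ``balanced'' direction $u$ nearly null for $\Sigma$ while keeping every direction orthogonal to it of order $\Theta(1)$.

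For the construction I would mix three families of $\ell_\infty$-bounded actions: (i) balanced sign vectors on $S$ (entries $\pm 1$ on $S$ summing to zero, $0$ off $S$), whose second moment equals $\tfrac{s}{s-1}(I_s - uu^\top)$ and therefore annihilates $u$; (ii) the single vector $\mathbf{1}_S$ carried with a small weight $w_O$, which reintroduces a tiny positive component $\propto w_O\, uu^\top$ along $u$ so that $\Sigma$ stays invertible; and (iii) free sign vectors on $S^c$, giving an identity block there. With mixing weights $(w_B,w_O,w_F)$ this makes $\Sigma$ block diagonal, with $\lammin(\Sigma)$ equal to the $u$-eigenvalue $\Theta(w_O s)$ and all remaining eigenvalues $\Theta(1)$. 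The crux of the argument is that placing the near-null direction on the \emph{spread-out} vector $u$ rather than on a coordinate $e_i$ forces the blow-up of $\Sigma^{-1}$ to be shared equally over the $s$ coordinates of $S$: computing the $S$-block inverse gives $(\Sigma^{-1})_{ii} = \tfrac{1}{w_O s^2} + \Theta(1)$ for $i \in S$ and $\Theta(1)$ otherwise, so $H^2(\Sigma) = \Theta\!\big(\tfrac{1}{w_O s^2}\big)$ rather than the $\Theta(\tfrac{1}{w_O s})$ one would incur by concentrating the null direction on a single coordinate.

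For the compatibility constant I use $u$ itself as a witness: $u$ is supported on $S$ and hence lies in the compatibility cone, so with $\|u\|_1 = \sqrt s$,
\[
 \phi_0^2(\Sigma, s) \;\le\; \frac{s\, u^\top \Sigma u}{\|u_S\|_1^2} \;=\; \frac{s\cdot \Theta(w_O s)}{s} \;=\; \Theta(w_O s).
\]
Combining the two estimates, $H_* \le \sqrt{H^2(\Sigma)} \lesssim \tfrac{1}{s\sqrt{w_O}}$ while $\tfrac{s}{d\,\phi_0^2(\Sigma,s)} \gtrsim \tfrac{1}{d\,w_O}$, so the target inequality $H_* < C_1 \tfrac{s}{d}\cdot\tfrac{1}{\phi_0^2(\Sigma,s)}$ reduces to $\tfrac{1}{s\sqrt{w_O}} \lesssim \tfrac{1}{d w_O}$, i.e. $\sqrt{w_O} \lesssim s/d$. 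Choosing $w_O = \Theta(s^2/d^2)$ (valid in the high-dimensional regime $d \gtrsim s^{3/2}$ where $w_O s$ remains the smallest eigenvalue; smaller $w_O$ handles other regimes and only strengthens the gap) makes both sides of order $d/s^2$ and yields the claim with an absolute constant $C_1$; this is exactly the choice realizing the advertised $d/s$-factor improvement over the Lasso rate. I would close by verifying the two routine points: that $w_O = \Theta(s^2/d^2)$ keeps $w_O s$ the minimal eigenvalue (so the $S$-diagonal dominates $H^2$) and that all three action families satisfy $\|a\|_\infty \le 1$, so $\Sigma$ is genuinely realizable as $Q(\mu)$.

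The main obstacle is conceptual rather than computational: the two requirements pull against each other, since a small $\phi_0^2$ demands a near-null direction lying inside the $S$-supported cone, whereas such near-singularity normally inflates $\max_i(\Sigma^{-1})_{ii}$ and hence $H^2$. The resolution — spreading the near-null direction uniformly over $S$ through the balanced sign vectors, so the $\ell_\infty$/diagonal cost is diluted by a factor $s$ relative to the $\ell_1$/compatibility gain — is the single idea that opens up the $d/s$ gap, and getting the weight bookkeeping and eigenvalue ordering right is the only place where care is needed.
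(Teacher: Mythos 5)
There is a genuine gap, and it is about the quantifier on $\Sigma$, not about your computations (which are essentially correct). In your argument $\Sigma = Q(\mu)$ is a covariance \emph{you} choose adversarially — the design that puts the tiny weight $w_O$ on $\mathbf{1}_S$ — so you only establish that \emph{some} sampling distribution over your $\cA$ has a bad compatibility constant. The paper proves the uniform-over-designs statement: in its appendix (subsection ``Lower bound of $1/\phi_0^2(Q(\pi),s)$'') it shows that for its action set, \emph{every} sampling distribution $\pi$ satisfies $\phi_0^2(Q(\pi),s) \le s\min_i Q(\pi)_{ii} = O(s/d^2)$, while $H_* = \Theta(d)$, and it explicitly concludes that ``even for the best case of the compatibility constant cannot beat the recovery bound of \popart in this action set.'' That uniformity is the content of the proposition: it is what licenses the claim that \popart has a $d/s$-order better error bound than Lasso \emph{on this instance}, rather than merely better than Lasso run with a sabotaged design. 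Under your existential reading the statement is close to vacuous: already on the hypercube $\{\pm 1\}^d$ (where $H_*=1$), putting mass $1-\epsilon$ on actions with $a_1=a_2$ and $\epsilon$ on the uniform distribution gives $\phi_0^2(\Sigma,s) \le s\epsilon/2$ via the witness $v=e_1-e_2$, so the displayed inequality holds trivially once $\epsilon$ is small.

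Moreover, your construction cannot be repaired to give the uniform statement, because your action set is too symmetric. Mixing your three families with weights $1/3$ each yields $\lambda_{\min}(\Sigma) \ge 1/3$, and since for any $v$ supported on $S$ one has $s\,v^\top \Sigma v/\|v_S\|_1^2 \ge \lambda_{\min}(\Sigma)\, s\|v_S\|_2^2/\|v_S\|_1^2 \ge \lambda_{\min}(\Sigma)$, this design has $\phi_0^2(\Sigma,s) \ge 1/3$; on the other hand, for any $\ell_\infty$-bounded action set one always has $H_* \ge 1$ (Cauchy--Schwarz gives $(Q^{-1})_{ii} \ge 1/Q_{ii} \ge 1$). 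Hence for $d \gg s$ your $\cA$ violates $H_* < C_1 \tfrac{s}{d}\phi_0^{-2}(\Sigma,s)$ once Lasso is allowed this well-conditioned design. The paper's action set, $\{\tfrac{1}{\sqrt d}e_1\} \cup \{e_1 + \tfrac{1}{\sqrt d}e_i\}_{i\ge 2}$, avoids this by baking the ill-conditioning into the actions themselves: coordinate $i \ge 2$ appears only with magnitude $1/\sqrt d$, so $Q(\pi)_{ii} = \pi(a_i)/d$ and some $\pi(a_i) \le 1/(d-1)$ for every $\pi$, which forces $\min_i Q(\pi)_{ii} = O(1/d^2)$ and hence poor compatibility no matter what design is used. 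A secondary, fixable slip: your regime claim $d \gtrsim s^{3/2}$ is insufficient even for your own reading — you need $d \gtrsim s^2$ for the term $1/(w_O s^2)$ to dominate the $\Theta(1)$ diagonal contributions; in the window $s^{3/2} \ll d \ll s^2$ your bound gives $H^2(\Sigma) = \Theta(1)$ while your right-hand side is $\Theta(d/s^2) = o(1)$, so the inequality fails there. Since the claim is existential in $\cA$, choosing $d \ge 2s^2$ repairs that point, but not the quantifier issue above.
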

For the detailed proof about Proposition \ref{prop:H2 vs Cmin} and Proposition \ref{prop:H2 vs compat}, see Section \ref{appendix: example of H2 and Cmin} in Appendix. 


\section{Improved Sparse Linear Bandits using \wpopart}
\label{sec:bandits}


We now apply our new \wpopart sparse estimation algorithm to design new sparse linear bandit algorithms.
Following prior work~\cite{hao2020high}, we adopt the  classical Explore-then-Commit (ETC) framework for algorithm design, and use \popart with experimental design to perform exploration. 
As we will see, the tighter $\ell_1$ estimation error bound of our \popart-based estimators helps us obtain an improved regret bound. 

\begin{algorithm}[h]
\caption{Explore then commit with \wpopart}
\begin{algorithmic}[1]
\STATE Input: time horizon $n$, action set $\cA$, warm-up exploration length $n_0$, failure rate $\delta$, reward threshold parameter $R_{\max}$, an upper bound of $\max_{a \in \cA} |\langle \theta^* , a \rangle| $.
\STATE Solve the optimization problem in Eq.~\eqref{def: H2} and denote the solution as $\mu_*$
\FOR{$t=1, \ldots, n_0$}
\STATE Independently pull the arm $A_t$ according to $\mu_*$ and receives the reward $r_t$
\ENDFOR 
\STATE Run $\wpopart(\{A_t\}_{t=1}^{n_0}, \{r_t\}_{t=1}^{n_0}, Q(\mu_*), \delta, R_{\max})$ to obtain $\hat{\theta}$, an estimate of $\theta^*$.
\FOR{$t=n_0 +1,\ldots,n$}
\STATE Take action $A_t = \argmax_{a \in \cA} \inner{\hat{\theta}}{a}$, receive reward $r_t = \inner{\theta^*}{A_t} + \eta_t$
\ENDFOR
\end{algorithmic}
\label{alg:etc-sparse}
\end{algorithm}

\textbf{Sparse linear bandit with \wpopart.~}
%
Our first new algorithm, Explore then Commit with \wpopart (Algorithm~\ref{alg:etc-sparse}), proceeds as follows. For the exploration stage, which consists of the first $n_0$ rounds, it solves the optimization problem~\eqref{def: H2} to find $\mu_*$, the optimal sampling distribution for $\popart$ and samples from it to collect a dataset for the estimation of $\theta^*$. Then, we use this dataset to compute the \wpopart estimator $\hat{\theta}$. Finally, in the commit stage, which consists of the remaining $n-n_0$ rounds, we take the greedy action with respect to $\hat{\theta}$. We prove the following regret guarantee of Algorithm~\ref{alg:etc-sparse}:


\begin{theorem}
\label{thm:etc-sparse}
If Algorithm~\ref{alg:etc-sparse} has input time horizon $n>16\sqrt{2}\frac{R_{\max} (R_{\max}^2 + \sigma^2)^{3/2} H_*^2 s^2}{\sigma^4} \log \frac{2d}{\delta}$, action set $\cA \subset [-1,+1]^d$, and exploration length $n_0 = 4(s^2 \sigma^2 H_*^2 n^2 \log \frac{2d}{\delta}R_{\max}^{-2})^{\frac{1}{3}}$, $\lambda_1 = 4\sigma \sqrt{\frac{H_*^2}{n_0}\log \frac{2d}{\delta}}$, then with probability at least $1-2\delta$,
$
\Reg(n) \leq 8R_{\max}^{1/3}(s^2 \sigma^2 H_*^2 n^2 \log \frac{2d}{\delta})^{\frac{1}{3}}
$.
\end{theorem}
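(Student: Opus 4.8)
The plan is to carry out a standard Explore-then-Commit regret decomposition: split $\Reg(n)$ into the regret accrued over the $n_0$ exploration rounds and the regret over the $n-n_0$ commitment rounds, bound each piece separately, and let the prescribed $n_0$ balance the two. The whole argument rides on the $\ell_1$ guarantee for \wpopart already established in Corollary~\ref{cor:warm-popart}, so the only new content is the regret bookkeeping and the verification that the hypotheses of that corollary are met.

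First I would dispatch the exploration phase. Since every action obeys $\|a\|_\infty\le 1$ and $R_{\max}$ upper bounds $\max_{a\in\cA}|\inner{\theta^*}{a}|$, each round incurs instantaneous regret at most $2R_{\max}$, so the exploration regret is \emph{deterministically} at most $2R_{\max}\,n_0$. Substituting $n_0=4(s^2\sigma^2 H_*^2 n^2\log\frac{2d}{\delta}\,R_{\max}^{-2})^{1/3}$ already makes this term of order $R_{\max}^{1/3}(s^2\sigma^2 H_*^2 n^2\log\frac{2d}{\delta})^{1/3}$, i.e.\ exactly the target rate.

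Next I would control the commitment phase, where the algorithm repeatedly plays the fixed greedy action $A=\argmax_{a\in\cA}\inner{\hat\theta}{a}$. Writing $a^*=\argmax_a\inner{\theta^*}{a}$ and using $\inner{\hat\theta}{A}\ge\inner{\hat\theta}{a^*}$, the instantaneous regret decomposes as $\inner{\theta^*}{a^*}-\inner{\theta^*}{A}\le\inner{\theta^*-\hat\theta}{a^*}+\inner{\hat\theta-\theta^*}{A}\le 2\max_{a\in\cA}|\inner{\hat\theta-\theta^*}{a}|$, which by H\"older and $\|a\|_\infty\le1$ is at most $2\|\hat\theta-\theta^*\|_1$. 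The crux is then bounding $\|\hat\theta-\theta^*\|_1$: since the exploration draws are i.i.d.\ from $\mu_*$ and $Q(\mu_*)$ attains the optimal value $H^2(Q(\mu_*))=H_*^2$ of \eqref{def: H2}, the hypotheses of Assumption~\ref{ass:popart} and Corollary~\ref{cor:warm-popart} hold, and the corollary gives $\|\hat\theta-\theta^*\|_1\le 8s\sigma\sqrt{H_*^2\log(2d/\delta)/n_0}$ with probability $1-2\delta$. Multiplying by $2(n-n_0)\le 2n$ and plugging in $n_0$ shows the commitment regret is likewise of order $R_{\max}^{1/3}(s^2\sigma^2 H_*^2 n^2\log\frac{2d}{\delta})^{1/3}$; the $n_0^{-1/2}$ scaling here and the linear $n_0$ scaling in the exploration term are precisely what make the chosen schedule balance the two phases.

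I do not expect a deep obstacle, only two careful steps. The first is the probabilistic/bookkeeping check that the exploration samples genuinely satisfy the i.i.d.\ and $Q=Q(\mu_*)$ requirements, so that the high-probability $\ell_1$ bound on $\hat\theta$ is legitimately invocable and the commitment regret is deterministic given $\hat\theta$ on the good event. The second, and the most error-prone, is the algebraic verification that the stated horizon condition $n>16\sqrt2\,R_{\max}(R_{\max}^2+\sigma^2)^{3/2}H_*^2 s^2\sigma^{-4}\log\frac{2d}{\delta}$ is exactly the translation, under the schedule $n_0=4(\cdots)^{1/3}$, of the corollary's requirement $n_0>\tfrac{32 s^2(R_{\max}^2+\sigma^2)H_*^2}{\sigma^2}\log\frac{2d}{\delta}$ — this is a cube-and-rearrange computation that I would check recovers the constant $16\sqrt2$. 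Combining the two phase bounds then yields the claimed rate up to the stated constant, completing the argument.
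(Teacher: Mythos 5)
Your proposal is correct and follows essentially the same route as the paper's proof: the ETC split of $\Reg(n)$ into exploration and commitment phases, the invocation of Corollary~\ref{cor:warm-popart} for the $\ell_1$ bound $\|\hat\theta-\theta^*\|_1 \le 8s\sigma\sqrt{H_*^2\log(2d/\delta)/n_0}$, and substitution of the prescribed $n_0$ to balance the two terms. The only difference is bookkeeping: your (more careful) factors of $2$ --- per-round exploration regret $2R_{\max}$ and per-round commitment regret $2\|\hat\theta-\theta^*\|_1$ via H\"older --- yield a final constant of $16$ rather than the stated $8$, which the paper's own proof attains only by writing $\Reg(n)\le R_{\max}n_0+(n-n_0)\|\hat\theta-\theta^*\|_1$ without those factors.
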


\begin{proof}
From Corollary \ref{cor:warm-popart}, $ \|\hat{\theta} - \theta^*\|_1 \leq 2s \lambda_1$ with probability at least $1-2\delta$. Therefore, with probability $1-2\delta$, 
\begin{align*}
    \textrm{Reg}(n) &\leq  R_{\max} n_0 + (n-n_0)\|\hat{\theta} - \theta^*\|_1 \leq R_{\max} n_0 + 2sn\lambda_1 = R_{\max} n_0 + 8sn\sigma \sqrt{\frac{H_*^2}{n_0}\log \frac{2d}{\delta}}
\end{align*}
and optimizing the right hand side with respect to $n_0$ leads to the desired upper bound.  
\end{proof}


Compared with~\citet{hao2020high}'s regret bound $\tilde{O}((R_{\max} s^2 \sigma^2 {\Cmin}^{-2} n^2)^{1/3})$\footnote{This is implicit in~\cite{hao2020high} -- they assume that $\sigma=1$ and do not keep track of the dependence on $\sigma$.}
, Algorithm~\ref{alg:etc-sparse}'s regret bound $\tilde{O}((R_{\max} s^2 \sigma^2 H_*^2 n^2)^{1/3})$
is at most $\tilde{O}((R_{\max} s^2 \sigma^2 {\Cmin}^{-1} n^2)^{1/3})$, which is at least a factor ${\Cmin}^{\frac13}$ smaller. As we will see in Section~\ref{sec:lower-bound}, we show that the regret upper bound provided by Theorem~\ref{thm:etc-sparse} is unimprovable in general, answering an open question of~\cite{hao2020high}.



%
\textbf{Improved upper bound with minimum signal condition.~} Our second new algorithm, Algorithm~\ref{alg:phase-elim}, similarly uses \wpopart under an additional minimum signal condition.

\begin{assumption}[Minimum signal]There exists a known lower bound $m > 0$ such
that $\min_{j\in \textrm{supp}(\theta^*)} |\theta_j^*| > m$.
\end{assumption}

At a high level, Algorithm~\ref{alg:phase-elim} uses the first $n_2$ rounds for identifying the support of $\theta^*$; the $\ell_\infty$ recovery guarantee of $\wpopart$ makes it suitable for this task. Under the minimal signal condition and a large enough $n_2$, it is guaranteed that $\hat{\theta}_2$'s support equals exactly the support of $\theta^*$. After identifying the support of $\theta^*$, Algorithm~\ref{alg:phase-elim} treats this as a $s$-dimensional linear bandit problem by discarding the remaining $d-s$ coordinates of the arm covariates, 
and perform phase elimination algorithm \citep[Section 22.1]{lattimore18bandit} therein. The following theorem provides a regret upper bound of Algorithm \ref{alg:phase-elim}. 

\begin{algorithm}[h]
\caption{Restricted phase elimination with \wpopart}
\begin{algorithmic}[1]
\STATE Input: time horizon $n$, finite action set $\cA$, minimum signal $m$, failure rate $\delta$, reward threshold parameter $R_{\max}$, an upper bound of $\max_{a \in \cA} |\langle \theta^* , a \rangle|$
\STATE Solve the optimization problem in Eq. \ref{def: H2} and denote the solutions as $Q$ and $\mu_*$, respectively.
\STATE Let $n_2 = \max(\frac{256\sigma^2 H_*^2}{m^2} \log \frac{2d}{\delta} , \frac{32s^2(R_{\max}^2 + \sigma^2)H_*^2}{\sigma^2} \log \frac{2d}{\delta})$
\FOR{$t=1,\ldots,n_2$}
    \STATE Independently pull the arm $A_t$ according to $\mu_*$ and receives the reward $r_t$
\ENDFOR 
\STATE $\hat{\theta}_2 = \wpopart(\{ A_t\}_{t=1}^n, \{ R_t\}_{t=1}^n, Q, \delta,R_{\max})$
\STATE Identify the support $\hS = \mathrm{supp}(\hat{\theta}_2)$
\FOR{$t=n_2+1,\ldots,n$}
\STATE Invoke phased elimination algorithm for linear bandits on $\hS$
\ENDFOR
\end{algorithmic}
\label{alg:phase-elim}
\end{algorithm}

\begin{theorem} \label{thm:with minimum signal}
If Algorithm \ref{alg:phase-elim} has input time horizon $n>\max(\frac{2^8\sigma^2 H_*^2}{m^2} , \frac{2^5 s^2(R_{\max}^2 + \sigma^2)H_*^2}{\sigma^2} )\log \frac{2d}{\delta}$, action set $\cA \subset [-1,1]^d$, upper bound of the reward $R_{\max}$, then with probability at least $1-2\delta$, the following regret upper bound of the Algorithm \ref{alg:phase-elim} holds: for universal constant $C>0$, 
$$ \textrm{Reg} (n) \leq \max(\frac{2^8\sigma^2 H_*^2}{m^2} \log \frac{2d}{\delta} , \frac{2^5s^2 (R_{\max}^2 + \sigma^2)H_*^2}{\sigma^2} \log \frac{2d}{\delta}) + C\sigma\sqrt{sn \log (|\cA|n)}$$

\end{theorem}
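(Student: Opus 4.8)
The plan is to split the pseudo-regret into the exploration phase (rounds $1,\ldots,n_2$) and the restricted phase elimination phase (rounds $n_2+1,\ldots,n$), and to bound the two contributions on a single high-probability event. The exploration phase is the easy part: each round's instantaneous pseudo-regret $\max_{a\in\cA}\inner{\theta^*}{a}-\inner{\theta^*}{A_t}$ is controlled by a bounded per-round gap (via $\cA\subset[-1,1]^d$ and the reward bound $R_{\max}$), so the total exploration regret is at most a constant multiple of $n_2$, which is precisely the first term $\max(\tfrac{2^8\sigma^2 H_*^2}{m^2},\tfrac{2^5 s^2(R_{\max}^2+\sigma^2)H_*^2}{\sigma^2})\log\tfrac{2d}{\delta}$, since that expression equals $n_2$ by construction.

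The heart of the argument is a support-recovery lemma: with probability at least $1-2\delta$, $\hS:=\supp(\hat\theta_2)=\supp(\theta^*)$. The inclusion $\supp(\hat\theta_2)\subseteq\supp(\theta^*)$ is inherited from the hard-thresholding guarantee (Theorem~\ref{thm: main bounds of estimator}(ii)) used inside \wpopart, so there are no false positives. For the reverse inclusion I would combine the per-coordinate confidence bound of Proposition~\ref{prop:individual conf bound} with the minimum-signal assumption: for $j\in\supp(\theta^*)$ the intermediate estimate satisfies $|\theta'_j|\ge|\theta^*_j|-\lambda_j>m-\lambda_j$, so coordinate $j$ survives the threshold as soon as $\lambda_j<m/2$. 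Here the choice $n_2\ge\tfrac{2^8\sigma^2 H_*^2}{m^2}\log\tfrac{2d}{\delta}$, together with $(Q^{-1})_{jj}\le H_*^2$, the factor-of-two sample split inside \wpopart, and the use of $R_0=\sigma$ in its second stage, forces $\lambda_j\le m/4$. The second lower bound $n_2\ge\tfrac{2^5 s^2(R_{\max}^2+\sigma^2)H_*^2}{\sigma^2}\log\tfrac{2d}{\delta}$ is exactly the hypothesis of Corollary~\ref{cor:warm-popart}, which supplies the warmup estimate $\|\hat\theta_0-\theta^*\|_1\le\sigma$ that validates taking $R_0=\sigma$ in the second stage.

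On this event the problem reduces cleanly in dimension: since $\theta^*$ is supported on $\hS$, we have $\inner{\theta^*}{a}=\inner{\theta^*}{a_{\hS}}$ for every $a\in\cA$, so projecting the arms onto the coordinates of $\hS$ leaves all rewards and the identity of the optimal arm unchanged, yielding a genuine $s$-dimensional linear bandit. I would then invoke the standard finite-armed phase elimination regret bound \citep[Section 22.1]{lattimore18bandit} in dimension $|\hS|=s$ with $|\cA|$ arms over the at most $n$ remaining rounds, which contributes $C\sigma\sqrt{sn\log(|\cA|n)}$ (the noise scale $\sigma$ entering through the subgaussian parameter, with the failure probability of that bound absorbed into the logarithmic factor). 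Summing the two phases and taking a union bound over the single failure event of probability $2\delta$ yields the stated bound.

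The step I expect to be the main obstacle is the exact support recovery, specifically tracking the constants so that $\lambda_j<m/2$ follows from the prescribed $n_2$; this requires carefully threading the $R_0=\sigma$ substitution in \wpopart's second stage, the sample split, and the bound $(Q^{-1})_{jj}\le H_*^2$. Once recovery is established, the dimension reduction and the black-box phase elimination guarantee are routine, so the whole argument hinges on getting this recovery threshold right.
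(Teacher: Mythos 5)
Your proposal is correct and follows essentially the same route as the paper's proof: the paper likewise uses \wpopart's $\ell_\infty$ recovery guarantee (Corollary~\ref{cor:warm-popart-fullversion}(i), which packages the per-coordinate bounds and $R_0=\sigma$ bookkeeping you unpack by hand) together with the minimum-signal condition to get exact support recovery with probability $1-2\delta$, and then invokes the phase elimination bound of \citet[Theorem 22.1]{lattimore18bandit} on the recovered $s$ coordinates. Your treatment is in fact slightly more careful than the paper's, which leaves the exploration-phase regret accounting and the dimension-reduction step implicit.
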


For sufficiently large $n$, the second term dominates, and we obtain an $O(\sqrt{sn})$ regret upper bound. Theorem~\ref{thm:with minimum signal} provides two major improvements compared to~\citet[][Algorithm 2]{hao2020high}. First,
when $m$ is moderately small (so that the first subterm in the first term dominates), 
it shortens the length of the exploration phase $n_2$ by a factor of $s \cdot \frac{{\Cmin}}{H_*^2}$. Second, compared with the regret bound  
$\tilde{O}( \sqrt{\frac{9\lambda_{\max}(\sum_{i=1}^{n_2}A_i A_i^\top/n_2)}{\mathcal{C}_{\min}}} \sqrt{sn} )$ provided by~\cite{hao2020high},
our main regret term $\tilde{O}(\sqrt{sn})$ is more interpretable and can be much lower. 


\vspace{-6pt}
\section{Matching lower bound}
\label{sec:lower-bound}
\vspace{-6pt}
We show the following theorem that establishes the optimality of Algorithm~\ref{alg:etc-sparse}. This solves the open problem of \citet[][Remark 4.5]{hao2020high} on the optimal order of regret in terms of sparsity and action set geometry in sparse linear bandits. 
\begin{theorem}\label{thm:lower}
For any algorithm, any $s, d, \kappa$ that satisfies
$s>2000, \kappa \in (0,1), n>8\kappa s^2$ and $d \geq \max (n^{1/3} s^{4/3} \kappa^{-4/3},(s+1)^2)$, there exists a linear bandit environment an action set $\cA$ and a $s$-sparse $\theta \in \RR^d$, such that $\mathcal{C}_{\min}^{-1} \leq \kappa^{-2}$, $R_{\max} \leq 2$, $\sigma = 1$, and 
\[
\textrm{Reg}_n \geq \Omega( \kappa^{-2/3} s^{2/3} n^{2/3})~.
\]
\end{theorem}

We give an overview of our lower bound proof techniques, and defer the details to Appendix \ref{sec:proof-lb}.

\paragraph{Change of measure technique.~}
Generally, researchers prove the lower bound by comparing two instances based on the information theory inequalities, such as  Pinsker's inequality, or Bregtanolle-Huber inequality. In this proof, we also use two instances $\theta$ and $\theta'$, but we use the change of measure technique, to help lower bound the probability of events more freely. Specifically, for any event $A$,
\begin{align}\label{eqn:change-of-measure}
    \PP_\th (A) 
    = \EE_{\th} [\one_A]
    = 
    \EE_{\th'}\sbr{ \one_A \prod_{t=1}^n\frac{p_\th(r_t |a_t)}{p_{\theta'}(r_t |a_t)} } 
    \gtrsim \EE_{\theta'} \sbr{ \one_A \exp\del{-\sum_{t=1}^n \langle A_t , \theta-\theta' \rangle^2} }~.
\end{align}

\paragraph{Symmetrization.~}
We utilize the algorithmic  symmetrization technique of~\citet{simchowitz2017simulator, bubeck11pure-tcs}, which makes it suffice to focus on proving lower bounds against  symmetric algorithms.

\begin{definition}[Symmetric Algorithm]
An algorithm $\textsf{Alg}$ is \emph{symmetric} if for any permutation $\pi \in \textit{Sym}(d)$, $\theta \in \mathbb{R}^{d}$, $\{a_t\}_{t=1}^n \in \mathcal{A}^n$,
$$ \PP_{\theta, \textsf{Alg}} (A_1 = a_1, \cdots , A_n = a_n) = \PP_{{\pi} (\theta) , \textsf{Alg}}(A_1 = \pi(a_1 ), \cdots , A_n = \pi(a_n ))$$
where for vector $v$, $\pi(v) \in \RR^d$ denotes its permuted version that moves $v_i$ to the $\pi(i)$-th position.
\end{definition}
This approach can help us to exploit the symmetry of $\theta'$ to lower bound the right hand side of~\eqref{eqn:change-of-measure}; below, $\Pi := \cbr{\pi': \pi(\theta') = \theta'}$ is the set of permutations that keep $\theta'$ invariant, {and $A$ is an event invariant under $\Pi$}:
\begin{align*}
    \text{~\eqref{eqn:change-of-measure}}
    \geq
    \frac{1}{|\Pi|} \sum_{\pi \in \Pi} 
    \EE_{ \theta'} \sbr{ \one_A \exp(-\sum_{t=1}^n \langle \pi^{-1}(A_t) , \theta-\theta' \rangle^2) }    \geq  
    \EE_{ \theta'} \sbr{ \one_A \exp\del{ -\sum_{t=1}^n \frac{1}{|\Pi|} \sum_{\pi \in \Pi} \langle \pi^{-1}(A_t) , \theta-\theta' \rangle^2} }
\end{align*}
which helps us use combinatorial tools over the actions for the lower bound proof.

\vspace{-6pt}
\section{Experimental results}
\label{sec:expr}
\vspace{-6pt}

We evaluate the empirical performance of \popart and our proposed experimental design, along with its impact on sparse linear bandits. One can check our code from here: \url{https://github.com/jajajang/sparse}. 

\begin{figure}[h]        
    \centering
    \begin{tabular}{cc}
           \toprule
            Case 1 & Case 2 \\
            \midrule
                 \begin{tabular}{l}
                 \includegraphics[width=0.4\linewidth]{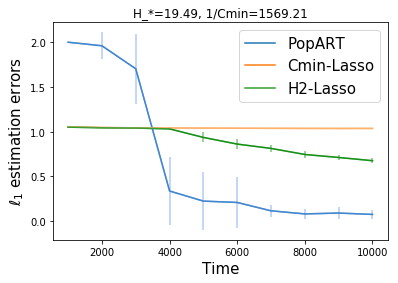}
                 \end{tabular}&
                 \begin{tabular}{l}
                 \includegraphics[width=0.4\linewidth]{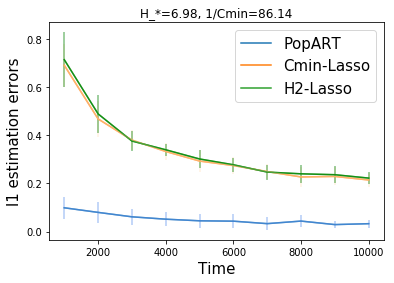}
                 \end{tabular}
    \\
    \begin{tabular}{l}
    \includegraphics[width=0.4\linewidth]{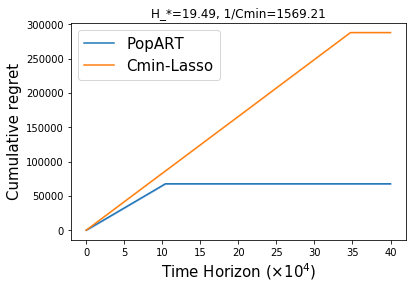}\end{tabular}&\begin{tabular}{l} \includegraphics[width=0.4\linewidth]{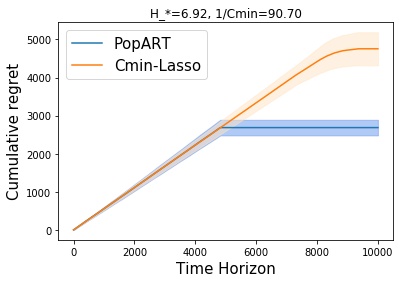}
    \end{tabular}
   \\\bottomrule
        \end{tabular}
    \caption{Experiment results on $\ell_1$ estimation error cumulative regret. 
    }
    \label{fig:table_of_figures}
\end{figure}

For sparse linear regression and experimental design, we compare our algorithm \popart with $\mu$ being the solution of~\eqref{def: H2} with two baselines.
The first baseline denoted by $C_{\min}$-Lasso is the method proposed by~\citet{hao2020high} that uses Lasso with sampling distribution $\mu$ defined by~\eqref{def: Cmin}.
The second baseline is $H^2$-Lasso, uses Lasso with sampling distribution $\mu$ defined by~\eqref{def: H2}, which is meant to observe if Lasso can perform better with our experimental design and to see how \popart is compared with Lasso as an estimator since they are given the same data. 
Of course, this experimental design is favored towards \popart as we have optimized the design for it, so our intention is to observe if there ever exists a case where \popart works better than Lasso.



For sparse linear bandits, we run a variant of our Algorithm~\ref{alg:etc-sparse} that uses \wpopart in place of \popart for simplicity.
As a baseline, we use ESTC~\cite{hao2020high}.
For both methods, we use the exploration length prescribed by theory.
We consider two cases:
\begin{itemize}
    \item \textbf{Case 1: Hard instance where $H_*^2 \ll \mathcal{C}_{\min}^{-1}$.~} We use the action set constructed in Appendix \ref{example:worst case of Cmin and H2} where $H_*^2$ and $\mathcal{C}_{\min}$ shows a gap of $\Theta(d)$. We choose $d=10$, $s=2$, $\sigma=0.1$.
    \item \textbf{Case 2. General unit vectors.~} In this case, we choose $d=30$, $s=2$, $\sigma=0.1$ and the action set $\mathcal{A}$ consists of $|\mathcal{A}|=3d=90$ uniformly random vectors on the unit sphere. 
\end{itemize}

We run each method 30 times and report the average and standard deviation of the $\ell_1$ estimation error and the cumulative regret in Figure~\ref{fig:table_of_figures}.


\paragraph{Observation.} As we expected from the theoretical analysis, our estimator and bandit algorithm outperform the baselines. 
In terms of the $\ell_1$ error, for both cases, we see that \popart converges much faster than ${\Cmin}$-Lasso for large enough $n$.
Interestingly, $H^2$-Lasso also improves by just using the design computed for \popart in case 1.
At the same time, $H^2$-Lasso is inferior than \popart even if they are given the same data points.
While the design was optimized for \popart and \popart has the benefit of using the population covariance, which is unfair, it is still interesting to observe a significant gap between \popart and Lasso.
For sparse linear bandit experiments, while ESTC requires exploration time almost the total length of the time horizon, ours requires a significantly shorter exploration phase in both cases and thus suffers much lower regret.



\vspace{-8pt}
\section{Conclusion}
\label{sec:conclusion}
\vspace{-8pt}

We have proposed a novel estimator \popart and experimental design for high-dimensional linear regression. 
\popart has not only enabled accurate estimation with computational efficiency but also led to improved sparse linear bandit algorithms.
Furthermore, we have closed the gap between the lower and upper regret bound on an important family of instances in the data-poor regime.

Our work opens up numerous future directions.
For \popart, we speculate that $(Q(\mu)^{-1})_{ii}$ is the statistical limit for testing whether $\theta^*_i = 0$ or not -- it would be a valuable investigation to prove or disprove this.
We believe this will also help investigate whether the dependence on $H_*^2$ in our regret upper bound is unimprovable (note our matching lower bound is only for a particular family of instances).
Furthermore, it would be interesting to investigate whether we can use \popart without relying on the population covariance; e.g., use estimated covariance from an extra set of unlabeled data or find ways to use the empirical covariance directly.
For sparse linear bandits, it would be interesting to develop an algorithm that achieves the data-poor regime optimal regret and data-rich regime optimal regret $\sqrt{sdn}$ simultaneously.
Furthermore, it would be interesting to extend our result to changing arm set, which poses a great challenge in planning.

\begin{ack}
We thank Ning Hao for helpful discussions on theoretical guarantees of Lasso.
Kwang-Sung Jun is supported by Data Science Academy and Research Innovation \& Impact at University of Arizona. 
\end{ack}

\bibliographystyle{abbrvnat}
\bibliography{library-shared}

\appendix


\clearpage
\part{Appendix} 


\parttoc 
  




\section{Related work}
\label{sec:related}

\paragraph{Sparse linear bandits.} The sparse linear bandit problem is a natural extension of sparse linear regression to the bandit setup where the goal is to enjoy low regret in the high-dimensional setting by levering the sparsity of the unknown parameter $\th^*$. 
The first study we are aware of is \citet{ay12online} that achieves a $\tilde O(\sqrt{sdn})$ regret bound with a computationally intractable method, which is later shown to be optimal by~\citet[Section 24]{lattimore18bandit} yet is not computationally efficient.
Since then, several approaches have been proposed.
A large body of literature either assumes that the arm set is restricted to a continuous set (e.g., a norm ball) ~\cite{carpentier2012bandit,lattimore2015linear}
or that the set of available arms at every round is drawn in a time-varying manner, and playing arms greedily still induces a `nice' arm distribution such as satisfying compatibility or restricted eigenvalue conditions~\cite{bastani2020online,kim19doubly,sivakumar2020structured,oh2021sparsity}.
These assumptions allow them to leverage existing theoretical guarantees of Lasso.
In contrast, we follow~\citet{hao2020high} and consider arm sets that are fixed throughout the bandit game without making further assumptions about the arm set.  
While this setup is interesting in its own for not having restrictive assumptions, it is also an important stepping stone towards efficient bandit algorithms for the more generic yet challenging setup of changing arm sets without any distributional assumptions.
Our work is a direct improvement over~\citet{hao2020high}, in that we close the gap between upper and lower bounds on the optimal worst-case regret; we refer to Table~\ref{table: results} for a detailed comparison.




\paragraph{Sparse linear regression.} 
Natural attempts for solving sparse linear bandits are to turn to existing results from sparse linear regression.
While best subset selection (BSS) is a straightforward approach of trying all the possible sparsity patterns that achieves good guarantees, its computational complexity is prohibitive~\cite{foster94risk}. 
As a computationally efficient alternative, Lasso is arguably the most popular approach for sparse linear regression for its simplicity and effectiveness~\cite{tibshirani96regression}. 
However, Lasso has an inferior $\ell_1$ norm error bound than BSS, perhaps due to its bias~\cite{vandegeer18ontight}. 
Rather than turning to existing results from sparse linear regression, we propose a novel estimator, \popart, by leveraging the fact that the setup allows us to design the sampling distribution, which allows a better $\ell_1$ norm error bound than Lasso except for the dependence on the range of the mean response variable.


\paragraph{Experimental design.}
In the linear bandit field, researchers often use experimental design to get the best estimator within the limited budget \cite{soare2014best,tao2018best,camilleri2021high,fiez2019sequential, mason2021nearly}. Especially, there were a few attempts using the population covariance based estimator instead of the traditional empirical covariance matrix \cite{mason2021nearly, tao2018best}. 
However, our study is the first approach that designs the experiment for minimizing the variance of each coordinate of the estimator uniformly, to the best of our knowledge.

For experimental design for sparse linear regression, \citet{ravi2016experimental} propose heuristic approaches that ensures the design distribution satisfy incoherence conditions and restricted isometry property (RIP). \citet{eftekhari2020design} study the design of $c$-optimal experiments in sparse regression models, where the goal is to estimate $\inner{c}{\theta^*}$ with low error for some $c \in \RR^d$; our experimental design task can be seen as simultaneously estimating $\inner{c}{\theta^*}$ for all $c = e_1, \ldots, e_d$. \citet{huang2020optimal} propose algorithms for optimal experimental design, tailored to minimizing the asymptotic variance of the debiased Lasso estimator~\cite{javanmard2014confidence}. 
In contrast, our results are based on finite-sample analyses.

In the theoretical computer science literature, a line of work on sketching also provides provably efficient compressed sensing and sparse recovery algorithms~\cite[See][for an overview]{gilbert2010sparse}; however, they mostly focus on using measurements (covariates) that are in $\cbr[0]{0,1}^d$  and $\cbr[0]{-1,1}^d$, as opposed to general measurement sets in $\RR^d$. 

\paragraph{Regression with the population covariance matrix.}
There are a few studies that consider regression with the population covariance matrix:
\citet{camilleri2021high} devise the novel scheme for the experimental design for the kernel bandits and obtain a new estimator called RIPS that leverages the population covariance matrix and robust mean estimator like \popart. 
\citet{mason2021nearly} solve the level set estimation problem using RIPS.
Tao et al. \cite{tao2018best} also employ a similar estimator, but they do not use robust mean estimators and result in a weaker form of error bound involving additional lower order terms.
The main difference of our work from all these papers is that they do not address sparse linear models. 
In particular, they do not perform thresholding nor  provide $l_\infty$ or $l_1$ recovery guarantees for the sparse parameter.

\section{Catoni's Estimator}
\label{sec:catoni}

\begin{definition}[Catoni's estimator~\cite{catoni2012challenging}] \label{def: Catoni}
For the i.i.d random variables $Z_1, \cdots, Z_n$, Catoni's mean estimator $\textrm{Catoni}(\{Z_{i}\}_{i=1}^n, \delta, \alpha)$ with error rate $\delta$ and the weight parameter $\alpha$ is defined as the unique value $y$ which satisfies 
$$ \sum_{i=1}^n \psi(\alpha(Z_n - y))=0$$
where $\psi(x):= \textrm{sign}(x)\log (1+|x|+x^2/2)$.
\end{definition}

\begin{lemma}[Catoni's estimator guarantee~\cite{catoni2012challenging}]
\label{lem:catoni-error}
For the i.i.d random variable $X_1, \cdots, X_n$ with mean $\mu$, let $\hat{\mu}$ be their Catoni's estimator with error rate $\delta$ with the weight parameter  $\alpha:= \sqrt{\frac{2\log \frac{1}{\delta}}{n\textrm{Var}(X_1)(1+ \frac{2\log \frac{1}{\delta}}{n-2 \log \frac{1}{\delta}})}} $.
Then with probability at least $1- 2\delta$, the following inequality holds:
$$ |\hat{\mu}-\mu|< \sqrt{\frac{2\textrm{Var}(X_1) \log \frac{1}{\delta} }{n-\log \frac{1}{\delta}}}
$$
\end{lemma}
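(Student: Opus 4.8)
The plan is to prove this through the standard analysis of Catoni's M-estimator, whose defining feature is that $\hat{\mu}$ is specified only implicitly, as the root of a nonlinear estimating equation; consequently a direct concentration argument applied to $\hat{\mu}$ is unavailable, and the first task is to reduce control of $\hat{\mu}$ to a tail bound for an i.i.d.\ sum. I would write $\hat r(y) := \tfrac1n\sum_{i=1}^n \psi(\alpha(X_i - y))$, where $\psi(x) = \sign(x)\log(1+|x|+x^2/2)$ is non-decreasing; hence $y\mapsto \hat r(y)$ is strictly decreasing and $\hat{\mu}$ is its unique zero. The key reduction is that, by monotonicity, for any $b>0$ the event $\{\hat{\mu} > \mu + b\}$ is contained in $\{\hat r(\mu+b) > 0\}$, and symmetrically $\{\hat{\mu} < \mu - b\} \subseteq \{\hat r(\mu-b) < 0\}$.

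The crux is the influence-function inequality $\psi(x) \le \log(1 + x + x^2/2)$ valid for all $x\in\RR$ (with the matching lower bound $\psi(x) \ge -\log(1 - x + x^2/2)$ following from the oddness of $\psi$); I would verify this by splitting into the two sign cases, the nontrivial case reducing to the elementary fact $(1+t^2/2)^2 \ge 1 + t^2$. Applying the upper bound to each summand with $Z_i := \alpha(X_i - \mu - b)$ yields $\hat r(\mu+b) \le \tfrac1n\sum_i \log(1 + Z_i + Z_i^2/2)$. I would then exponentiate, use $1+u\le e^u$ together with independence, giving
\[
\PP(\hat r(\mu+b) > 0) \le \EE\sbr{\prod_{i=1}^n \del{1 + Z_i + \tfrac12 Z_i^2}} \le \exp\del{ n\del{ \alpha\,\EE[X_1-\mu-b] + \tfrac{\alpha^2}{2}\,\EE[(X_1-\mu-b)^2] } }.
\]
Since $\EE[X_1-\mu-b] = -b$ and $\EE[(X_1-\mu-b)^2] = \textrm{Var}(X_1) + b^2$, the right-hand side equals $\exp\del{ n\del{ -\alpha b + \tfrac{\alpha^2}{2}(\textrm{Var}(X_1)+b^2) } }$.

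To finish the one-sided bound I would require this quantity to be at most $\delta$, i.e.\ $-\alpha b + \tfrac{\alpha^2}{2}(\textrm{Var}(X_1)+b^2) + \tfrac{\log(1/\delta)}{n} \le 0$, which is a quadratic inequality in $b$. The prescribed weight $\alpha$ is reverse-engineered precisely so that this quadratic's discriminant simplifies to a perfect square and its smaller root recovers the stated deviation $\sqrt{2\,\textrm{Var}(X_1)\log(1/\delta)/(n - \log(1/\delta))}$; substituting $\alpha$ and simplifying is the main bookkeeping step. The symmetric argument on the lower tail, using $\psi(x)\ge -\log(1-x+x^2/2)$, gives an identical bound, and a union bound over the two failure events produces the two-sided conclusion with probability at least $1-2\delta$.

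The main obstacle is conceptual rather than computational: because $\hat{\mu}$ has no closed form, Chernoff/Markov cannot be invoked on it directly, so the entire argument rests on the monotonicity reduction combined with the sharp influence-function inequality for $\psi$. That inequality is exactly what tames the heavy tails---the truncated influence function has a second moment controlled by $\textrm{Var}(X_1)$ rather than by any higher moment---so that the Chernoff step above only sees the variance. Once these two ingredients are established, the remainder is a routine exponential-moment bound followed by solving a quadratic and tuning $\alpha$.
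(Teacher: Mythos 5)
Your proof outline follows the standard (and essentially the only known) argument for Catoni's bound; the paper itself offers no proof of this lemma and simply cites Catoni (2012), whose Proposition 2.4 is proved exactly by your route: the monotonicity reduction from $\hat{\mu}$ to the sign of $\hat r(\mu \pm b)$, the influence-function inequality $-\log(1 - x + x^2/2) \le \psi(x) \le \log(1 + x + x^2/2)$ (your check via $(1+x^2/2)^2 \ge 1 + x^2$ is the right one), the exponential-moment step that only involves $\mathrm{Var}(X_1)$, and the resulting quadratic in $b$. All of these steps are sound, and the union bound over the two tails gives the $1-2\delta$ confidence level.

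The gap is in the final bookkeeping claim, and it is a genuine one relative to the statement as printed. Write $L = \log(1/\delta)$ and $V = \mathrm{Var}(X_1)$. The prescribed weight satisfies $1 + \tfrac{2L}{n-2L} = \tfrac{n}{n-2L}$, hence $\alpha^2 = \tfrac{2L(n-2L)}{n^2 V}$, and the discriminant factor of your quadratic is
\[
1 - \alpha^2 V - \tfrac{2L}{n} \;=\; 1 - \tfrac{2L(n-2L)}{n^2} - \tfrac{2L}{n} \;=\; \tfrac{(n-2L)^2}{n^2},
\]
indeed a perfect square; but the smaller root is then $b_- = \tfrac{1 - (n-2L)/n}{\alpha} = \tfrac{2L}{n\alpha} = \sqrt{\tfrac{2VL}{\,n-2L\,}}$. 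So the deviation your argument delivers has $n - 2\log(1/\delta)$ in the denominator, not $n - \log(1/\delta)$ as stated, and since $\sqrt{2VL/(n-2L)} > \sqrt{2VL/(n-L)}$ you have proved a strictly weaker inequality than the lemma claims. No variant of this method closes the gap (e.g., retaining $1-e^{-L/n}$ instead of linearizing it to $L/n$ improves the root only at order $L^2/n^2$, which is not enough). In fact, the lemma as printed appears to mis-transcribe Catoni's Proposition 2.4, whose bound is $\sqrt{2V\log(1/\epsilon)/(n - 2\log(1/\epsilon))}$; you should either prove that (correct) version, stating explicitly the requirement $n > 2\log(1/\delta)$ needed for $\alpha$ and the root to be well defined, or flag that the stated form is unobtainable by this analysis rather than assert that the root "recovers the stated deviation."
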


\section{Proofs for \popart and \wpopart}
\label{sec:proof-popart}
\subsection{Proof for Proposition \ref{prop:individual conf bound}}\label{appendix:proof-prop}
\begin{proof}
To lighten the notation, in this proof, we write $Q= Q(\mu)$, and let $(\tilde{\theta}, A, \eta)$ denote random vectors distributed identically to $(\tilde{\theta}_1, A_1, \eta_1)$. 
First, observe that $\EE\sbr{\tilde{\theta}}  = \theta^*$. 
We now use the law of total variance to decompose the covariance matrix of  $\tilde{\theta}$, by first conditioning on $\cova$:
\begin{align*}
  \EE \sbr{ (\tilde{\th}-\th^*)(\tilde{\th}-\th^*)^\T }
  = &
  \EE \sbr{ (\EE[\tilde{\th} \mid \cova] - \th^*)(\EE[\tilde{\th} \mid \cova ] - \th^*)^\T }
  +
  \EE \sbr{ (\tilde{\th} -\EE[\tilde{\th} \mid \cova])(\tilde{\th}-\EE[\tilde{\th} \mid \cova ])^\T }
\end{align*}


For the first term, 
\begin{align*}
\EE \sbr{ (\EE[\tilde{\th} \mid \cova] - \th^*)(\EE[\tilde{\th} \mid \cova ] - \th^*)^\T } 
&= 
\EE \sbr{ (\EE[\tilde{\th} \mid \cova]-\theta_0)(\EE[\tilde{\th} \mid \cova ]-\theta_0)^\T } - (\th^*-\th_0) (\th^*-\th_0)^\T \\
\preceq &
\EE \sbr{ (\EE[\tilde{\th} \mid \cova]-\theta_0)(\EE[\tilde{\th} \mid \cova ]-\theta_0)^\T } 
\\= & \EE \sbr{ Q^{-1} \cova (\cova^\T (\theta^*-\theta_0))^2 \cova^\T Q^{-1} }\\
\preceq & R_0^2 \EE \sbr{ Q^{-1} \cova \cova^\T Q^{-1} } = R_0^2 Q^{-1}
\end{align*}

For the second term, 
\begin{align*}
\EE \sbr{ (\tilde{\th}-\EE[\tilde{\th} \mid \cova])(\tilde{\th}-\EE[\tilde{\th} \mid \cova ])^\T }
= \EE[ Q^{-1} \cova \cova^\T Q^{-1} \eta^2 ]
= \sig^2 Q^{-1}.
\end{align*}

Combining the above two bounds, we have $\var(\tilde{\theta}) =  \EE \sbr{ (\tilde{\th}-\th^*)(\tilde{\th}-\th^*)^\T } \preceq (R_0^2 + \sigma^2) Q^{-1}$. Therefore, we can bound $\textrm{Var}(\tilde{\theta}_i)$ as follows:
\begin{align*}
    \textrm{Var}(\tilde{\theta}_i) &= \EE[(e_i^\top (\tilde{\theta} - \theta^*))^2] \leq (R_0^2 + \sigma^2) (Q^{-1})_{ii}
\end{align*} 
By the theoretical guarantee of the Catoni's estimator (Lemma~\ref{lem:catoni-error} in the Appendix), the desired inequality holds. 
\end{proof}

\subsection{Full version of Corollary~\ref{cor:warm-popart} and its proof}

\begin{corollary}\label{cor:warm-popart-fullversion}
If \wpopart receives inputs $\cbr{\cova_t, \resp_t}_{t=1}^{n_0}$ drawn from $\mu$, $Q(\mu)$, failure probability $\delta$, and $R_{\max}$ such that $R_{\max} \geq \max_{a \in \mathcal{A}} |\langle a, \theta^* \rangle|$, 
and $n_0>\frac{32s^2(R_{\max}^2 + \sigma^2)H^2 (Q(\mu))}{\sigma^2} \log \frac{2d}{\delta}$,
then all the following items hold with probability at least $1-2\delta$:
\begin{enumerate}[label=(\roman*)]
    \item  $\|\hat{\theta}-\theta^*\|_\infty \leq 8 \sigma H(Q) \sqrt{\frac{\ln \frac{2d}{\delta}}{n_0}}$
    \item $\textrm{supp}(\hat{\theta})\subset \textrm{supp}({\theta}^*)$ so $\|\hat{\theta}-\theta^*\|_0 \leq s$
    \item $\|\hat{\theta}-\theta^*\|_1 \leq 8 s \sigma H(Q) \sqrt{\frac{\ln \frac{2d}{\delta}}{n_0}}$
\end{enumerate}
\end{corollary}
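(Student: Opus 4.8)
The plan is to analyze the two nested calls to \popart inside \wpopart separately, control each by Theorem~\ref{thm: main bounds of estimator}, and combine them by a union bound. The one genuine subtlety is that the pilot estimator $\hat\theta_0$ fed into the second call is itself random; I would handle this by exploiting that \wpopart splits the data into two disjoint halves. Concretely, I would condition on the first half $\{(\cova_i,\resp_i)\}_{i=1}^{\lfloor n_0/2\rfloor}$, under which $\hat\theta_0$ is a fixed vector, while the second half $\{(\cova_i,\resp_i)\}_{i=\lfloor n_0/2\rfloor+1}^{n_0}$ remains a fresh i.i.d.\ sample from $\mu$ with the same population covariance $Q(\mu)$. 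This conditional independence is exactly what lets me treat $\hat\theta_0$ as a legitimate deterministic pilot when invoking Theorem~\ref{thm: main bounds of estimator} on the second call.

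\textbf{First stage.} The warmup call runs \popart with pilot $\theta_0=\mathbf 0$ and bound $R_0=R_{\max}$, which is valid since $R_{\max}\ge \max_{a\in\cA}|\langle a,\theta^*\rangle| = \max_{a\in\cA}|\langle a,\theta^*-\mathbf 0\rangle|$, so Assumption~\ref{ass:popart} holds on the $\lfloor n_0/2\rfloor$ samples. Applying part~(iii) of Theorem~\ref{thm: main bounds of estimator} gives, with probability at least $1-\delta$,
\[
\|\hat\theta_0-\theta^*\|_1 \le 2s\sqrt{\frac{4(R_{\max}^2+\sigma^2)H^2(Q)}{\lfloor n_0/2\rfloor}\log\frac{2d}{\delta}}.
\]
I would then substitute the hypothesis $n_0>\tfrac{32 s^2(R_{\max}^2+\sigma^2)H^2(Q)}{\sigma^2}\log\frac{2d}{\delta}$, using that $\lfloor n_0/2\rfloor$ is within a constant of $n_0/2$ (the constant $32$ is calibrated precisely so that, after absorbing the floor rounding, the right-hand side is at most $\sigma$) to conclude $\|\hat\theta_0-\theta^*\|_1\le\sigma$.

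\textbf{Second stage.} Conditioning on the (probability $\ge 1-\delta$) event $\|\hat\theta_0-\theta^*\|_1\le\sigma$, I would first verify that $R_0=\sigma$ is a valid bound for the second call: since $\cA\subset[-1,1]^d$, for every $a\in\cA$ we have $|\langle a,\theta^*-\hat\theta_0\rangle|\le\|a\|_\infty\|\theta^*-\hat\theta_0\|_1\le\|\theta^*-\hat\theta_0\|_1\le\sigma$. Thus Assumption~\ref{ass:popart} holds for the second \popart call with $R_0=\sigma$ on the $n_0-\lfloor n_0/2\rfloor=\lceil n_0/2\rceil\ge n_0/2$ fresh samples, so $R_0^2+\sigma^2=2\sigma^2$. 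Parts (i)--(iii) of Theorem~\ref{thm: main bounds of estimator} then yield, with conditional probability at least $1-\delta$, the three claims; for instance part~(iii) gives
\[
\|\hat\theta-\theta^*\|_1 \le 2s\sqrt{\frac{4\cdot 2\sigma^2 H^2(Q)}{n_0/2}\log\frac{2d}{\delta}} = 8 s\sigma H(Q)\sqrt{\frac{\log\frac{2d}{\delta}}{n_0}},
\]
and the identical substitution into part~(i) produces the matching $\ell_\infty$ bound $8\sigma H(Q)\sqrt{\log(2d/\delta)/n_0}$, while part~(ii) (support containment, hence $\|\hat\theta-\theta^*\|_0\le s$) transfers verbatim. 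A final union bound over the two stages gives total failure probability at most $2\delta$.

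\textbf{Main obstacle.} I expect the only delicate point to be the independence argument underpinning the second stage: the proof of Proposition~\ref{prop:individual conf bound} (and hence Theorem~\ref{thm: main bounds of estimator}) bounds $\mathrm{Var}(\tilde\theta_{ti})$ while treating the pilot as a \emph{fixed} vector, so one must argue that the random $\hat\theta_0$ is independent of the second-half sample—which is guaranteed by the disjoint split and justifies conditioning on the first half. Everything else reduces to constant bookkeeping and the floor/ceiling accounting for $n_0/2$.
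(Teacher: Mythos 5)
Your proposal is correct and follows essentially the same route as the paper's own (very terse) proof: apply Theorem~\ref{thm: main bounds of estimator} to the first half with $R_0=R_{\max}$ to get $\|\hat\theta_0-\theta^*\|_1\le\sigma$, then apply it again to the second half with $R_0=\sigma$ and take a union bound. Your additional care—conditioning on the first half so the random pilot is legitimately treated as fixed, and the H\"older step verifying $R_0=\sigma$ via $\cA\subset[-1,1]^d$—fills in details the paper leaves implicit, and your constant bookkeeping matches the stated bounds.
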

\begin{proof}
    Since $n_0$ is sufficiently large, from the  Theorem~\ref{thm: main bounds of estimator} with $R_0 = R_{\max}$ we can say that $\|\theta_0 -\theta^*\|_1 \leq \sigma$ with probability $1-\delta$. Applying Theorem~\ref{thm: main bounds of estimator} again with $R_0=\sigma$ we can get all (i), (ii), (iii) directly with probability $1-\delta$.
    The corollary follows from a union bound.
\end{proof}

\section{Proof of Proposition~\ref{prop:H2 vs Cmin} and Proposition~\ref{prop:H2 vs compat}}
\label{appendix: example of H2 and Cmin}
First, we will prove 
\begin{equation}
H_*^2 \leq \cC_{\min}^{-1}\leq d H_*^2.
\label{eqn:H-Cmin}
\end{equation}
For each of the two inequalities, We will give a tight example in the next subsection. 
\begin{proof}
For any positive definite matrix $Q\in\RR^{d \times d}$,
\begin{align}
    H^2 (Q)=\max_{i\in[d]}(Q^{-1})_{ii} =\max_{i\in[d]} e_i^\top Q^{-1} e_i \leq \max_{v\in\mathbb{S}^{d-1}} v^\top Q^{-1} v  =\lambda_{\max}(Q^{-1}) \leq \mathsf{tr}(Q^{-1}) \leq d H^2(Q)\label{eqn:ineq of H and Cmin}
\end{align}
Now, let the solution of the Eq. (\ref{def: H2}) and Eq. (\ref{def: Cmin}) as $\mu_H$ and $\mu_C$, respectively. Then, by the rightmost inequality of (\ref{eqn:ineq of H and Cmin}) we have $$\frac{1}{\mathcal{C}_{\min}} = \lambda_{\max} (Q(\mu_C)^{-1}) \leq \lambda_{\max} (Q(\mu_H)^{-1})\leq dH_*^2$$
and by the leftmost inequality of the (\ref{eqn:ineq of H and Cmin}) we have
$$ H_*^2 \leq H^2(Q(\mu_C)) \leq \lambda_{\max}(Q(\mu_C)^{-1}) = \frac{1}{\mathcal{C}_{\min}}$$
Therefore, the inequality part of the Proposition \ref{prop:H2 vs Cmin} holds. 
\end{proof}


\subsection{First equality condition analysis of Eq.~\eqref{eqn:H-Cmin}}

For the case when $\mathcal{C}_{\min}^{-1} = \Theta(H_*^2) $, consider $\cA = \{e_i|i=1, \cdots, d\}$; it can be seen that $H_*^2 = \mathcal{C}_{\min}^{-1}=d$. 

\subsection{Second equality condition analysis of Eq.~\eqref{eqn:H-Cmin}}
\label{example:worst case of Cmin and H2}




For the case when 
$
\mathcal{C}_{\min}^{-1} = \Theta(d H_*^2) 
$, consider $\cA = \cbr{a_1, \ldots, a_d}$, where 
\begin{align*}
    a_1 = & \frac{1}{\sqrt{d}}e_1\\
    a_i = & e_1 + \frac{1}{\sqrt{d}}e_i.
\end{align*} 
and we will calculate $H^2 (Q(\pi))$ and $\lambda_{\min}(Q(\pi))$ for the optimal sampling distributions $\pi$ to achieve $H^2_*$ and $\cC_{\min}$, respectively. 

\subsubsection{Prove that the optimal \texorpdfstring{$\pi$}{} satisfies \texorpdfstring{$\pi(a_2)=\pi(a_3)=\cdots = \pi(a_d)$}{}} \label{subsubsec: specialcase - equal is optimal}


We will first show that for both objectives $H^2 (Q(\pi))$ and $\lambda_{\min}(Q(\pi))$, 
there exists an optimal sampling distribution $\pi$
such that $\pi(a_2)=\pi(a_3)=\cdots = \pi(a_d)$.

Denote by $a:= \pi(a_1)$. Fix $a$.
For notational convenience, let $\pi(a_i):=b_i$ and $\bold{b}=(b_2, b_3 , \cdots, b_d) \in \mathbb{R}^{d-1}$. 

\paragraph{For $H^2(Q(\pi))$:} Then the covariance matrix $Q(\pi)$ (abbreviated as $Q$) has the following form: 
\begin{align}
    Q = \begin{bmatrix}
    \frac{a}{d} + \sum b_i & \frac{b_2}{\sqrt{d}} & \cdots & \frac{b_d}{\sqrt{d}}\\
    \frac{b_2}{\sqrt{d}} &  & & \\
    \vdots & & \frac{1}{d} \mathrm{diag}(\bold{b}) & \\
    \frac{b_d}{\sqrt{d}} &  & &
    \end{bmatrix}
\end{align}
After some calculation, one can get the determinant
$$\det(Q)=\frac{a(\Pi_{i=2}^d b_i)}{d^d}$$
and the cofactor 
\begin{align*} C_{ii} = \begin{cases}
    (\frac{\Pi_{i=1}^d b_i}{d^{d-1}}) & \text{if } i=1 \\
    (\frac{a}{d}+b_i)(\frac{\Pi_{s=2}^d b_s}{b_i d^{d-2}}) & \text{if } i=2, \cdots, d
\end{cases}
\end{align*}
and therefore

\begin{align*} (Q^{-1})_{ii} = \begin{cases}
    (\frac{d}{a}) & \text{if } i=1 \\
    (\frac{a}{d}+b_i)d^2 / (ab_i) & \text{if } i=2, \cdots, d
\end{cases}
\end{align*}
When $a$ is a fixed parameter, $(Q^{-1})_{ii} = \frac{d^2}{a} + \frac{d}{b_i}$ and therefore the $\arg \max_i (Q^{-1})_{ii} = \arg \min_i b_i$. Under the constraint $\sum_{i=2}^d b_i = 1-a$, the optimal solution is reached when $b_2 = b_3 = \cdots = b_d$. 

\paragraph{For $\lambda_{\min}(Q(\pi))$:} we will utilize symmetry of $\lambda_{\min}(Q(\pi))$. Note that $\lambda_{\min} (Q)$ is a concave function w.r.t $Q$. Suppose that the $({a}, b_2', b_3', \cdots , b_d') = \arg \max_{\pi} {\lambda_{\min}(Q(\pi))}$. Then from the symmetry, for any cyclic permutation $P$, all $({a}, b_{P^i (2)}', b_{P^i(3)}', \cdots , b_{P^i (d)}')$ $i=1, \cdots, d-1$ {also achieves} the maximum. Therefore, by Jensen's inequality, $$ {{\cC_{\min}}}=\frac{1}{d}\sum_{i=0}^{d-1} {\lambda_{\min} (Q(a, b_{P^i (2)}', b_{P^i(3)}', \cdots , b_{P^i (d)}'))} \leq {\lambda_{\min} (Q(a, \frac{1-a}{d-1}, \frac{1-a}{d-1}, \cdots , \frac{1-a}{d-1}) )}$$
Therefore, $(a, \frac{1-a}{d-1}, \frac{1-a}{d-1}, \cdots , \frac{1-a}{d-1})$ is also a maximizer of $\lambda_{\min}(Q(\pi))$.


{Therefore,} from now on, consider only the strategy $\pi$ that satisfies $\pi(a_2)=\pi(a_3)= \cdots = \pi (a_d) $ for this section, and let $a=\pi(a_1)$ and $b=\pi(a_2)$. Then $a+(d-1)b = 1$. Now the covariance matrix induced by $\pi$ is of the following form:
\begin{align}
    Q = \begin{bmatrix}
    \frac{a}{d} + (d-1)b & \frac{b}{\sqrt{d}} & \cdots & \frac{b}{\sqrt{d}}\\
    \frac{b}{\sqrt{d}} &  & & \\
    \vdots & & \frac{b}{d}I_{d-1} & \\
    \frac{b}{\sqrt{d}} &  & &
    \end{bmatrix}
\end{align}

\subsubsection{Calculating \texorpdfstring{$H_*^2$}{}}

One can calculate $\det(Q) = \frac{a}{d} (\frac{b}{d})^{d-1}$ (using again the cofactor method) and the cofactor
\begin{align*} C_{ii} = \begin{cases}
    (\frac{b}{d})^{d-1} & \text{if } i=1 \\
    (\frac{a}{d}+b)(\frac{b}{d})^{d-2} & \text{otherwise}
\end{cases}
\end{align*}
and therefore 

\begin{align*} (Q^{-1})_{ii} = \begin{cases}
    (\frac{d}{a}) & \text{if } i=1 \\
    (\frac{a}{d}+b)d^2 / (ab) & \text{otherwise}
\end{cases}
\end{align*}

{For $i = 2, \ldots, d$,} $(Q^{-1})_{ii}$ is always larger than $(Q^{-1})_{11}$, and by taking derivatives, the $b$ that minimizes the $(Q^{-1})_{22}$ is $\sqrt{\frac{d}{d-1}}-1 = \frac{1}{\sqrt{d-1}(\sqrt{d} + \sqrt{d-1})}$ , and the corresponding $(Q^{-1})_{22} = d(\sqrt{d}+\sqrt{d-1})^2 = \Theta(d^2)$. In this case, $a = d - \sqrt{d(d-1)}$ (close to 1/2).

\subsubsection{Calculating \texorpdfstring{$\cC_{\min}$}{}}


\chicheng{I have a somewhat simpler reasoning process that may perhaps give some general techniques that may be useful for the future:

First, note that for equation of the form $\lambda^2 - B \lambda + C = 0$ ($B, C > 0$), the smaller root is 
$\lambda^* = \frac{B - \sqrt{B^2 - 4C}}{2} = \frac{2C}{B + \sqrt{B^2 - 4C}} = \Theta(\frac{C}{B})$. This is because $B \leq B + \sqrt{B^2 - 4C} \leq 2B$. 

The first two roots of the characteristic equation are the roots of $\lambda^2 - B \lambda + C = 0$ for $B = \frac{1}{d} + \frac{d^2 - 2d + 2}{d} b$ and $C = \frac{b - (d-1)b^2}{d^2}$.
Therefore, the smaller root 
satisfies 
\[
\lambda^* = \Theta( \frac{C}{B} ) = 
\frac{ (b - (d-1)b^2)/d^2 }{ 1/d + (d^2-2d+2)b/d }
\leq 
\frac{ b / d^2 }{ b d } 
\leq 
\frac1 {d^3}.
\]
}
The characteristic function of the $Q$ is

$$ (\lambda^2 - B \lambda +C)(\lambda - \frac{b}{d})^{d-2}$$

where $B = \frac{1}{d} + \frac{(d^2 - 2d+2)b}{d} > 0$ and $C = \frac{b - (d-1)b^2}{d^2}>0$. Note that for equation of the form $\lambda^2 - B \lambda + C = 0$ ($B, C > 0$), the smaller root is 
$\lambda^* = \frac{B - \sqrt{B^2 - 4C}}{2} = \frac{2C}{B + \sqrt{B^2 - 4C}} = \Theta(\frac{C}{B})$. This is because $B \leq B + \sqrt{B^2 - 4C} \leq 2B$. 

Therefore, the smaller root of the quadratic equation $\lambda^2 - B\lambda + C = 0$
satisfies 
\[
\lambda^* = \Theta( \frac{C}{B} ) = 
\frac{ (b - (d-1)b^2)/d^2 }{ 1/d + (d^2-2d+2)b/d }
\leq 
\frac{ b / d^2 }{ b d } 
\leq 
\frac1 {d^3}.
\]

\ja{The eigenspectrum of $Q$ is 
\begin{align*}
    \lambda_1, \lambda_2 &= \frac{[\frac{1}{d} + \frac{(d^2 - 2d+2)b}{d}]\pm\sqrt{[\frac{1}{d} + \frac{(d^2 - 2d+2)b}{d}]^2 - \frac{4b - 4 (d-1)b^2}{d^2}}}{2}\\
    \lambda_3&=\lambda_4 = \cdots = \lambda_d = \frac{b}{d}
\end{align*}

Therefore, $\frac{1}{\lambda_{min}}= \max (\frac{d}{b}, \frac{d}{2b} (\frac{1+(d^2 -2d+2)b + \sqrt{(1+(d^2 -2d+2)b)^2 - 4b + 4 (d-1)b^2}}{1-(d-1)b}))$.

Note that for the optimal $\pi$, $a > 0$; otherwise $Q(\pi)$ is not invertible. Therefore, $b = \frac{1-a}{d-1} < \frac1{d-1}$.
Hence, one can check that $\sqrt{(1+(d^2 -2d+2)b)^2 - 4b + 4 (d-1)b^2} \geq 1$, so $\frac{1}{\lambda_{min}}= \frac{d}{2b} (\frac{1+(d^2 -2d+2)b + \sqrt{(1+(d^2 -2d+2)b)^2 - 4b + 4 (d-1)b^2}}{1-(d-1)b})$. Now we investigate $\arg\min_{b} \frac{1}{\lambda_{min}(b)}$.

Recall that 
$a+(d-1)b=1$, $b<1/(d-1)$. Thus, $-4b + 4(d-1)b^2 < 0$. Therefore, the big square root term is smaller than $(1+(d^2 -2d+2)b)$, which means if we let $f(b)=\frac{d (1+ (d^2 -2d+2)b)}{2b(1-(d-1)b)}$, then $$ f(b) \leq \frac{1}{\lambda_{min}(b)} \leq 2f(b)$$ 
It remains to calculate $\min_b f(b)$. A few derivative calculations show that $b^*= \arg \min_b f(b) = \frac{-(d-1) + \sqrt{d^3 -1}}{d^3-1-(d-1)^2} = \Theta (d^{-3/2})$ and $\min_b f(b) = \Theta(d^3)$.}

\subsubsection{Lower bound of $\frac{1}{\phi_0^2(Q(\pi),s)}$}

It is difficult to directly calculate the compatibility constant of $Q(\pi)$, but we can bound it using the diagonal entries of $Q(\pi)$. Note that 
$$\phi^2 = \min_{S \subset [d]}\min_{v\in\mathcal{C}_S} \frac{s v^\top Q v}{\|v_S\|_1^2} \leq s\min_{v \in \{e_i \}_{i=1}^d} \frac{v^\top Q v}{1} = s\min_{i \in [d]}(Q_{ii})$$ and therefore $\frac{1}{\phi^2} \geq \frac{1}{\min_{i \in [d]} sQ_{ii}}$. 

We will use the same notation in \ref{subsubsec: specialcase - equal is optimal}: denote by $a:= \pi(a_1)$ and let $\pi(a_i):=b_i$ and $\bold{b}=(b_2, b_3 , \cdots, b_d) \in \mathbb{R}^{d-1}$. Then the covariance matrix $Q(\pi)$ (abbreviated as $Q$) has the following form: 
\begin{align}
    Q = \begin{bmatrix}
    \frac{a}{d} + \sum b_i & \frac{b_2}{\sqrt{d}} & \cdots & \frac{b_d}{\sqrt{d}}\\
    \frac{b_2}{\sqrt{d}} &  & & \\
    \vdots & & \frac{1}{d}Diag(\bold{b}) & \\
    \frac{b_d}{\sqrt{d}} &  & &
    \end{bmatrix}
\end{align}

From the basic constraint $a+\sum_{i=2}^d b_i=1$, $\min_{i \in [d]} Q_{ii} = \min_{i \in \{2, \cdots, d\}} \frac{b_i}{d} =O(\frac{1}{d^2})$. Therefore $\frac{1}{\phi^2} = \Omega(d^2/s)$.
This means even for the best case of the compatibility constant cannot beat the recovery bound of \popart {for} this action set. 

\section{Proofs for Sparse Linear Bandits}
\label{sec:proof-slb}

\subsection{Proof of Theorem \ref{thm:with minimum signal}}

\begin{proof}
From the (i) in Corollary \ref{cor:warm-popart-fullversion}, when $n_2 = \frac{256\sigma^2 H_*^2}{m^2} \log \frac{d}{\delta}$, with probability at least $1-2\delta$, $$\|\hat{\theta}-\theta^*\|_\infty < 8\sigma\sqrt{\frac{H_*^2}{n_2} \log \frac{2d}{\delta}} = \frac{m}{2}$$
Therefore, with probability at least $1-2\delta$, for any index $i \in \textrm{supp}(\theta^*)^C$, $\hat{\theta}_i =0$, and for any index $ j \in \textrm{supp}(\theta^*)$, $|\hat{\theta}_j|> |\theta_j^*|-\frac{m}{2}>0$. Thus, $\textrm{supp}(\theta^*)=\textrm{supp}(\hat{\theta})$ with probability at least $1-2\delta$. 
After that, we use the following result about the {phased elimination} \cite{lattimore18bandit}:
\begin{theorem}(\citet{lattimore18bandit}, Theorem 22.1)
The $n$-step regret of phased elimination algorithm satisfies
$$\textrm{Reg}_n \leq C \sigma
\sqrt{nd \log(|\cA| n)}$$
for an appropriately chosen universal constant $C > 0$.
\end{theorem}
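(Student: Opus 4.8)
The plan is to prove this minimax lower bound by combining the algorithmic symmetrization reduction with the Bayesian change-of-measure argument already sketched in Section~\ref{sec:lower-bound}, and then tuning the signal strength so that the explore--exploit tension yields the $n^{2/3}$ rate. First I would reduce to symmetric algorithms: I construct a family of instances that is closed under coordinate permutations, so by \citet{simchowitz2017simulator} it suffices to lower bound the worst-case regret of symmetric algorithms (in the sense of the symmetric-algorithm definition stated above), since any algorithm can be symmetrized by pre-composing with a uniformly random permutation of $[d]$ without increasing its worst-case regret over a permutation-invariant family.

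For the construction I would take the reference parameter $\theta' = 0$ (trivially $s$-sparse) together with a family of competitors $\theta^{(v)} = \Delta \sum_{i \in S} v_i e_i$, indexed by an $s$-subset $S \subseteq [d]$ and a balanced sign pattern $v \in \{\pm 1\}^S$, where the magnitude $\Delta$ is left free and fixed at the end. The action set $\cA \subseteq [-1,1]^d$ is designed so that (i) it spans $\RR^d$; (ii) its optimal design in Eq.~\eqref{def: Cmin} attains $\mathcal{C}_{\min} = \Theta(\kappa^2)$, hence $\mathcal{C}_{\min}^{-1} \le \kappa^{-2}$; and crucially (iii) exploration and exploitation \emph{conflict} --- the only actions carrying information about the support are suboptimal, with the information rate throttled by $\kappa$ --- so that learning $\theta^{(v)}$ is expensive. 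With $\|a\|_\infty \le 1$ and the signal bounded, this also yields $R_{\max} \le 2$ and $\sigma = 1$.

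Fixing a competitor $\theta^{(v)}$, I would take the event $A$ that its optimal action is played fewer than $n/2$ times. Under the symmetric reference $\theta'=0$ no action is preferred, so a symmetric algorithm cannot single out any one of the many permutation-equivalent candidate optima, giving $\PP_{\theta'}(A) \gtrsim 1$. I then transfer this to the true instance via \eqref{eqn:change-of-measure}; since $\theta'=0$ is invariant under $\Pi = \mathrm{Sym}(d)$ and $A$ is $\Pi$-invariant, the Jensen step of Section~\ref{sec:lower-bound} replaces the per-round term by its permutation average,
\[
\PP_{\theta^{(v)}}(A) \gtrsim \EE_{\theta'}\!\left[\one_A \exp\!\left(-\sum_{t=1}^n c(A_t)\right)\right], \quad c(a) := \frac{1}{d!}\sum_{\pi \in \mathrm{Sym}(d)} \langle \pi^{-1}(a), \theta^{(v)}\rangle^2 .
\]
The key computation is that $c(a)$ depends on $a$ only through symmetric quantities such as $\|a\|_2^2$ and $(\sum_k a_k)^2$, and is \emph{diluted by the ambient dimension} $d$. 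Consequently, whenever the algorithm has not paid enough exploration regret, $\sum_t c(A_t)$ stays $O(1)$, the exponential factor is $\Omega(1)$, so $\PP_{\theta^{(v)}}(A) \gtrsim 1$ and the algorithm incurs regret $\gtrsim (\text{gap}) \cdot n$ on the commit phase. Balancing this commitment regret against the exploration regret required to make $\sum_t c(A_t)$ large --- with $\Delta$ chosen at the balance point --- produces the $\Omega(\kappa^{-2/3} s^{2/3} n^{2/3})$ rate; the hypothesis $d \geq \max(n^{1/3} s^{4/3} \kappa^{-4/3}, (s+1)^2)$ is precisely the condition making this balance point feasible while keeping $\Delta$ small enough that $R_{\max} \le 2$, and $n > 8\kappa s^2$ keeps the instance in the data-poor regime.

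The main obstacle is twofold. First, one must design $\cA$ so that informative actions are genuinely suboptimal with information rate scaling as $\kappa^2$, and verify that its optimal design attains $\mathcal{C}_{\min} = \Theta(\kappa^2)$. Second --- and this is the heart --- one must evaluate the symmetrized information $c(a)$ tightly across all realizable play sequences and extract the \emph{improved} $s^{2/3}$ exponent rather than the $s^{1/3}$ of \citet{hao2020high}. As the authors note, it is the lower bound for the sparsity dependence, not the algorithm, that was loose; so the delicate step is squeezing the extra $s^{1/3}$ factor out of the interplay between the permutation average and the $s$-sparse structure of the competitors, which is exactly the combinatorics that the symmetrization and change-of-measure machinery (otherwise used as black boxes) is set up to expose.
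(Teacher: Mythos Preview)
Your proposal addresses the wrong statement entirely. The statement you were asked to prove is the cited result from \citet{lattimore18bandit} (their Theorem~22.1): an \emph{upper bound} on the regret of the phase elimination algorithm, namely $\textrm{Reg}_n \leq C\sigma\sqrt{nd\log(|\cA|n)}$. This is a statement about the performance of one specific algorithm, and the paper does not prove it --- it is simply cited as a black box inside the proof of Theorem~\ref{thm:with minimum signal}. Your entire write-up instead sketches a proof of Theorem~\ref{thm:lower}, the minimax \emph{lower bound} $\Omega(\kappa^{-2/3}s^{2/3}n^{2/3})$ against any algorithm in the data-poor regime. These are different theorems with opposite inequality directions, different proof techniques, and different roles in the paper.

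If your intent was truly the phase elimination upper bound, none of the machinery you invoke --- symmetrization, change of measure, hard-instance construction, explore/exploit balancing --- is relevant; the correct argument is the standard analysis of phased elimination via G-optimal design and elimination confidence sets in Chapter~22 of \citet{lattimore18bandit}. If instead you meant Theorem~\ref{thm:lower}, your outline is in the right spirit but differs from the paper in ways that matter: the paper uses an explicit two-part action set $\cA=\cI\cup\cH$ (low-regret uninformative arms versus high-regret informative arms indexed by $\kappa$), takes the reference $\theta'$ to be a \emph{nonzero} $s$-sparse vector (not $\theta'=0$), and compares it against a $2s$-sparse $\theta$; the key step (Proposition~\ref{claim: main lower bound} and Claim~\ref{claim: KL divergence}) counts support ``mistakes'' $M_\theta(\hat S)$ rather than the event ``optimal arm played fewer than $n/2$ times.'' Your choice $\theta'=0$ makes the permutation average trivial but erases the $s$-dependent combinatorics (the ratio $|Q_{\frac34 s-l}|/|Q'_{\frac14 s-l}|$ in Lemma~\ref{claim: scale of Y}) that is precisely what upgrades $s^{1/3}$ to $s^{2/3}$.
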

\end{proof}

\section{Proof of Lower Bound}
\label{sec:proof-lb}

\chicheng{It just occurs to me that $C_{\min}$ and $C_{\min}(\cA)$ are both used. Can we use one consistently throughout the paper?}
\chicheng{Same comment applies to $\Reg_n$ vs. $\Reg(n)$} \ja{I added some abuse of notation in 'Notations' part}
\chicheng{I am sorry, I am not seeing the additional clarification you added.. Will check with you when we meet.}
\ja{TODO: Change the condition of theorem below as $\kappa \in (0, \frac{1}{2s}), d\geq (s+1)^2 , n \in (8\kappa s^2, ...)$}

{
In this section, we prove Theorem~\ref{thm:lower}. We start with a restatement of it.



\begin{theorem}(Restatement of the Theorem \ref{thm:lower}) 
\label{thm:lower-restated}
For any algorithm, any $s, d, \kappa$ that satisfies $s>2000, \kappa \in (0,1), n>8\kappa s^2, d \geq \max(n^{1/3} s^{4/3} \kappa^{-4/3},(s+1)^2)$, there exists a linear bandit environment with an action set $\cA$ and a $s$-sparse $\theta \in \RR^{d}$, such that $\mathcal{C}_{\min}(\cA)^{-1} \leq \kappa^{-2}$, $R_{\max} \leq 2$, $\sigma = 1$, and 
\[
\textrm{Reg}_n \geq \Omega( \kappa^{-2/3} s^{2/3} n^{2/3})~.
\]
\end{theorem}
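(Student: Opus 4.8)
The plan is to exhibit a single action set $\cA$ together with a family of $s$-sparse parameters on which no algorithm can be uniformly good, and to drive the argument through the change-of-measure identity~\eqref{eqn:change-of-measure} combined with the symmetrization reduction already set up above. First I would build $\cA$ from signed size-$s$ coordinate blocks, i.e. actions of the form $a = \sum_{i\in T}\varepsilon_i e_i$ with $|T|=s$ and $\varepsilon_i\in\{\pm 1\}$ (so $\|a\|_\infty = 1$ and $\|a\|_2^2 = s$), possibly augmented by rescaled canonical directions used to certify the design quality. The reference parameter $\theta'$ is taken $\Pi$-invariant for a large coordinate-permutation subgroup $\Pi$, and each competitor $\theta$ raises a distinguished block of $s$ coordinates by a gap $\delta$ to be optimized. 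A short linear-algebra computation with the symmetric design certifies $\mathcal{C}_{\min}^{-1}\le\kappa^{-2}$, and the choice of $\delta$ (with an overall rescaling if needed) keeps $R_{\max}=\max_a\inner{a}{\theta^*}\le 2$ and $\sigma=1$.

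By the symmetrization step it suffices to lower bound the regret of symmetric algorithms, for which the law of $(A_1,\dots,A_n)$ under the $\Pi$-invariant $\theta'$ is itself $\Pi$-invariant. Taking the failure event $A$ (``the learner commits to a block overlapping the true support in only a small fraction of coordinates'') to be $\Pi$-invariant, the displayed symmetrized bound gives
\[
\PP_\theta(A) \gtrsim \EE_{\theta'}\sbr{\one_A \exp\del{-\sum_{t=1}^n \frac{1}{|\Pi|}\sum_{\pi\in\Pi}\inner{\pi^{-1}(A_t)}{\theta-\theta'}^2}}.
\]
The crucial computation is to evaluate the permutation-averaged quadratic form: writing $u=\theta-\theta'$, for a transitive enough $\Pi$ and a balanced $u$ one finds $\frac{1}{|\Pi|}\sum_\pi \inner{\pi^{-1}(A_t)}{u}^2 \asymp \frac{\|A_t\|_2^2\,\|u\|_2^2}{d}$, so the averaging destroys any favorable alignment and spreads the squared signal over all $d$ coordinates. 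With $\|A_t\|_2^2 = s$ and $\|u\|_2^2 \asymp s\delta^2$, the total exponent is $\asymp n s^2\delta^2/d$, and one checks that the data-poor hypothesis $d \ge n^{1/3}s^{4/3}\kappa^{-4/3}$ (at the optimized $\delta$) pins this at $O(1)$; hence $\PP_\theta(A)\gtrsim \PP_{\theta'}(A)$.

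It then remains to show $\PP_{\theta'}(A)$ is bounded below by a constant: under the symmetric reference the learner has no way to prefer the distinguished block over its permuted copies, so with constant probability it misidentifies a constant fraction of the $s$ support coordinates, and on those rounds its action is suboptimal by $\asymp s\delta$ against the competitor $\theta$. Averaging the competitor family against the single symmetric reference and passing to a worst-case instance yields $\Reg_n \gtrsim n s\delta$. Optimizing the confusion constraint $n s^2\delta^2/d \asymp 1$ against the design and reward constraints gives $\delta \asymp \kappa^{-2/3}s^{-1/3}n^{-1/3}$, and therefore $\Reg_n \gtrsim \kappa^{-2/3}s^{2/3}n^{2/3}$, as claimed.

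The main obstacle is the combinatorial heart of the middle step: arranging $\Pi$ and the support geometry of $\theta,\theta'$ so that the permutation average genuinely contracts to the $\frac{1}{d}$-spread bound, and verifying that this contraction, under the precise constraints on $d$ and $n$, still permits a gap $\delta$ as large as $\kappa^{-2/3}s^{-1/3}n^{-1/3}$ while keeping the confusion exponent a constant. This is exactly where the improved $s^{2/3}$ (rather than the $s^{1/3}$ of~\cite{hao2020high}) dependence must originate, and it also forces care in coupling the whole competitor family to one symmetric reference. By contrast, checking $\mathcal{C}_{\min}^{-1}\le\kappa^{-2}$, $R_{\max}\le 2$, $\sigma=1$, and the $s$-sparsity of each $\theta$ is routine bookkeeping once the action scaling and $\delta$ are fixed.
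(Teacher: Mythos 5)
There is a genuine gap, and it sits exactly where you flagged ``the main obstacle.'' Your construction cannot simultaneously satisfy the design constraint $\mathcal{C}_{\min}^{-1}\le\kappa^{-2}$ and support your indistinguishability argument. For any sampling distribution $\mu$ over your signed $s$-sparse blocks, $\tr(Q(\mu)) = \EE\|a\|_2^2 = s$, so $\lambda_{\min}(Q(\mu)) \le s/d$ and hence $\mathcal{C}_{\min}^{-1} \ge d/s$; under the theorem's hypothesis $d \ge \max(n^{1/3}s^{4/3}\kappa^{-4/3},(s+1)^2)$ this exceeds $\kappa^{-2}$ unless $\kappa \lesssim (ns)^{-1/2}$, so the claim for arbitrary $\kappa\in(0,1)$ is out of reach. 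Rescaled canonical directions cannot repair this: any action supported on few coordinates with $\|a\|_\infty\le 1$ keeps the trace, and hence $\lambda_{\min}$, small --- certifying $\mathcal{C}_{\min}\ge\kappa^2$ forces genuinely dense actions whose coordinates all have magnitude $\approx\kappa$ (the paper's $\cH$). But once such dense actions are in $\cA$, your central step --- ``the data-poor hypothesis pins the confusion exponent at $O(1)$'' --- fails: by your own spreading formula, a dense action contributes $\approx \|a\|_2^2\|u\|_2^2/d = \kappa^2 s\epsilon^2$ per pull, so a learner that plays dense actions for all $n$ rounds accumulates exponent $\approx n s\kappa^2\epsilon^2 = n^{1/3}s^{1/3}\kappa^{2/3}$ at your choice of $\epsilon$, which grows with $n$. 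Such a learner can identify the support and then exploit, so $\PP_\theta(A)\gtrsim\PP_{\theta'}(A)$ is simply false without further structure; your argument covers only the low-regret sparse pulls.

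The missing ideas are the two devices the paper uses to resolve this tension. First, informative actions must be made \emph{costly}: the paper appends a $(d+1)$-th coordinate with $\theta_{d+1}=-1$ and $a_{d+1}=+1$ exactly on the dense set $\cH$, so every informative pull pays $\Omega(1)$ regret while pulls from the sparse set $\cI$ pay only $O(s\epsilon)$. Second --- and this is where the improved $s^{2/3}$ actually originates --- one must lower bound the \emph{number} of informative pulls: Proposition~\ref{claim: main lower bound} shows $\EE[T(\cH)] \ge \Omega(1/(\kappa^2\epsilon^2))$ by pitting a sparsity-$s$ hypothesis against a sparsity-$2s$ one and showing, via a counting argument over outputs of symmetric algorithms (ratios of binomial coefficients with $\ln Y = \Omega(s)$, Lemma~\ref{claim: scale of Y}), that avoiding $s/4$ support mistakes requires $\Omega(s)$ nats of information, while each $\cH$ pull yields only $O(s\kappa^2\epsilon^2)$ nats after symmetrization (Lemma~\ref{lemma: KL rand to Const}). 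The final bound is the minimum over the two branches --- mistake regret $\approx \epsilon s n$ and exploration regret $\approx 1/(\kappa^2\epsilon^2)$ --- both equal to $\Omega(\kappa^{-2/3}s^{2/3}n^{2/3})$ at $\epsilon = \kappa^{-2/3}s^{-1/3}n^{-1/3}$. Your proposal contains only the first branch; without the costly-information device and the pull-counting argument it cannot be completed into a proof of the theorem as stated.
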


In the lower bound instance that establishes Theorem~\ref{thm:lower-restated}, we will prove that ${2\kappa^{-2}} \geq \cC_{\min}(\cA)^{-1} \geq H_*^2 $ (see Section \ref{subsubsec:kappa vs cmin}), and conclude that our $\tilde{O}( H_*^{2/3} s^{2/3} n^{2/3} )$ regret upper bound of Algorithm \ref{alg:etc-sparse} has a matching lower bound and conclude that the algorithm and the lower bound are both optimal in this setting.

For convenience, throughout the rest of this section,
we prove the following slight variant of Theorem~\ref{thm:lower-restated}, where the  dimensonality is $d+1$ as opposed to $d$, and the sparsity is $2s+1$ as opposed to $s$; note that the changes of these parameters do not affect the orders of the regret bounds in terms of them.

\begin{theorem} 
\label{thm:lower-restated-2s}
For any algorithm, any $s, d, \kappa$ that satisfies $s>1000$ and is a multiple of 4, $\kappa \in (0,1), n>8\kappa s^2, d \geq \max(n^{1/3} s^{4/3} \kappa^{-4/3},(s+1)^2)$, there exists a linear bandit environment an action set $\cA$ and a $(2s+1)$-sparse $\theta \in \RR^{d+1}$, such that $\mathcal{C}_{\min}(\cA)^{-1} \leq 2\kappa^{-2}$, $R_{\max} \leq 2$, $\sigma = 1$, and 
\[
\textrm{Reg}_n \geq \Omega( \kappa^{-2/3} s^{2/3} n^{2/3})~.
\]
\end{theorem}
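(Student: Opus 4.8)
The plan is to construct a single family of $(2s+1)$-sparse instances against which every algorithm pays $\Omega(\kappa^{-2/3}s^{2/3}n^{2/3})$ regret, using the two ingredients flagged in the overview — the algorithmic symmetrization reduction and the change-of-measure inequality \eqref{eqn:change-of-measure} — and I would let the \emph{explore-then-commit} upper bound of Algorithm~\ref{alg:etc-sparse} guide the design. The guiding principle is that the reward-optimal (``exploit'') actions must be made essentially \emph{uninformative} about the sparse perturbation, so that every round spent gathering information about $\theta$ forgoes a $\Theta(1)$ chunk of reward (matching $R_{\max}=\Theta(1)$), while recovering $\theta$ well enough to commit profitably requires accurately estimating \emph{all} $\Theta(s)$ active coordinates. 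Writing $m$ for the number of such costly informative rounds, I would then show $\textrm{Reg}_n\gtrsim m + n\cdot\Delta(m)$ where $\Delta(m)\gtrsim s\kappa^{-1}/\sqrt{m}$ is an information-theoretic floor on the attainable $\ell_1$ estimation error (and hence on the per-round commit gap); minimizing the right-hand side over $m$ yields the balance $m\asymp (n s \kappa^{-1})^{2/3}$ and the claimed rate. Crucially, the factor $s$ inside $\Delta(m)$ — the full $\ell_1$ error over $s$ coordinates rather than a scalar functional — is what upgrades the earlier $s^{1/3}$ lower bound to $s^{2/3}$.

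\textbf{Construction.} I would fix a reference $\theta'\in\RR^{d+1}$ invariant under a large permutation group $\Pi$ acting on a block $T$ of $2s$ ``active'' coordinates, together with a rich family of competitors $\{\theta^{(S)}\}$ obtained by perturbing $\theta'$ by $\pm v$ on $T$ according to an $s$-subset $S\subseteq T$ (so $\theta^{(S)}$ is $(2s+1)$-sparse and $\theta^{(S)}-\theta'$ is supported on the full orbit $T$ and sums to zero). The action set $\cA\subset[-1,1]^{d+1}$ would be scaled so that its best design satisfies $\lambda_{\min}(Q(\mu))=\Theta(\kappa^2)$, giving $\mathcal{C}_{\min}^{-1}\le 2\kappa^{-2}$, and so that informative actions are $\Theta(1)$-suboptimal relative to the exploit action; the extra ``$+1$'' coordinate carries a fixed signal used to pin $R_{\max}\le 2$ with $\sigma=1$. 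The conditions $d\ge\max(n^{1/3}s^{4/3}\kappa^{-4/3},(s+1)^2)$ and $n>8\kappa s^2$ supply the room to embed this block and to keep the normalizations valid.

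\textbf{Core analysis.} By symmetrization it suffices to lower bound symmetric algorithms, averaged over a uniformly random competitor $\theta^{(S)}$. I would transport a $\Pi$-invariant ``failure'' event (the committed actions being $\Omega(\Delta(m))$-suboptimal) from $\theta'$ to $\theta^{(S)}$ through \eqref{eqn:change-of-measure}. The decisive step is the permutation-averaging identity displayed after the symmetrization definition: it replaces the data-dependent exponent $\sum_t\inner{A_t}{\theta^{(S)}-\theta'}^2$ by its $\Pi$-average $\sum_t\tfrac1{|\Pi|}\sum_{\pi\in\Pi}\inner{\pi^{-1}(A_t)}{\theta^{(S)}-\theta'}^2$. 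Because $\theta^{(S)}-\theta'$ is spread over all of $T$ and sums to zero, a second-moment computation over the symmetric group collapses the inner average, \emph{for every action}, to a direction-independent per-round quantity of order $v^2\norm{(A_t)_T}^2\lesssim v^2 s\kappa^2$; summing over the at most $m$ informative rounds gives a total distinguishing exponent that no longer depends on the algorithm's adaptive choices, and it stays $O(1)$ unless $m\gtrsim s^2\kappa^{-2}/v^2$. Reading this off at $v\asymp\Delta(m)$ gives the estimation floor $\Delta(m)\gtrsim s\kappa^{-1}/\sqrt m$ used above.

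\textbf{Obstacle.} I expect the crux to be exactly this permutation-averaging estimate: one must prove that spreading the perturbation across $\Theta(s)$ coordinates (rather than concentrating it on a single coordinate, as in the previous $s^{1/3}$ argument) keeps the averaged per-round information uniformly small over \emph{all} $A_t\in\cA$ yet still forces the algorithm to resolve all $\Theta(s)$ coordinates — this is the precise place where the extra $s^{1/3}$ factor is won. It is delicate because the very same scaling of $\cA$ must simultaneously deliver $\mathcal{C}_{\min}^{-1}\le 2\kappa^{-2}$, $R_{\max}\le 2$, and the $\Theta(1)$-separation between exploit and informative actions. A secondary technical point is to define the group $\Pi$ and the failure event so that the event is genuinely $\Pi$-invariant with constant probability under $\theta'$, which is what makes the symmetrization bite.
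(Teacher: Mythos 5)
Your plan has the same skeleton as the paper's proof: reduce to symmetric algorithms via symmetrization, run a change of measure between a reference hypothesis and competitors whose difference is spread over $\Theta(s)$ coordinates, replace the data-dependent exponent $\sum_t \langle A_t,\theta-\theta'\rangle^2$ by its permutation average, and close with the explore/commit dichotomy balanced at $m \asymp (ns\kappa^{-1})^{2/3}$. The differences in construction (sign-flips on a fixed $2s$-block versus the paper's nested supports $\Theta_s \subset \Theta_{2s}$ with all-positive entries; a "suboptimal commit" failure event versus the paper's mistake count $M_\theta(\hat S)$ for an augmented algorithm forced to output a set of $d/2$ coordinates) are secondary relative to the architecture.

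The genuine gap is in the information-budget accounting, and it sits exactly where the $s^{1/3}\!\to\! s^{2/3}$ improvement must be won. Take your own per-round averaged bound $v^2 s\kappa^2$ (with $v$ the per-coordinate perturbation): after $m$ informative rounds the exponent is $\asymp m v^2 s \kappa^2$. If, as you write, the change of measure is affordable only while this exponent "stays $O(1)$", then the indistinguishability threshold is $m \gtrsim 1/(s\kappa^2 v^2)$, the $\ell_1$ floor is $\Delta(m) = \Theta(sv) \gtrsim \sqrt{s}\,\kappa^{-1}/\sqrt{m}$, and balancing $m \asymp n\Delta(m)$ yields only $\Omega(\kappa^{-2/3} s^{1/3} n^{2/3})$ --- the old rate of Hao et al., not the claimed one. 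Your stated threshold $m \gtrsim s^2 \kappa^{-2}/v^2$ is inconsistent with your per-round bound under an $O(1)$ budget: it corresponds to letting the exponent reach $\Theta(s^3)$ (or $\Theta(s)$, if the $v$ in that display silently means the $\ell_1$ error $\Theta(sv)$ rather than the per-coordinate perturbation). To actually obtain $s^{2/3}$ one must allow the exponent to grow to $\Omega(s)$ --- i.e., tolerate an $e^{-\Omega(s)}$ loss in the change of measure --- and then recoup that loss combinatorially. That recoupment is the paper's central device and is absent from your sketch: by symmetry, the mistake event under $\theta$ decomposes into $|Q_{\frac34 s - l}|$ equiprobable atoms while the corresponding event under $\theta'$ has $|Q'_{\frac14 s - l}|$ atoms, and the ratio $Y$ satisfies $\ln Y = \Omega(s)$ (Stirling), exactly offsetting the $e^{-\Omega(s)}$ factor; making this work also requires the augmentation device and the lemmas comparing $|R_l|$ with $|Q'_{\frac14 s-l}|$, precisely so that the event being transported is permutation-compatible and countable. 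Your "Obstacle" paragraph gestures at "forcing the algorithm to resolve all $\Theta(s)$ coordinates," but without this counting mechanism the argument as outlined cannot beat $s^{1/3}$.
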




\paragraph{Construction} Following the standard minimax lower bound and hypothesis testing terminology, 
we will often refer to an underlying reward predictor $\theta \in \RR^{d+1}$ as a \emph{hypothesis}. 
Let $$ \Theta_s = \Big\{ \theta \in \RR^{d+1}| \theta_i \in \{-\epsilon, 0, \epsilon\} \text{ for } i \in [d], \theta_{d+1}=-1, \|\theta\|_0=s+1\Big\},$$
where $\epsilon= \kappa^{-2/3}s^{-1/3}n^{-1/3}$.
We will use $\Theta_{s}$ and $\Theta_{2s}$ as our hypothesis space throughout the proof.

We construct a low-regret action set $\cI$ and an informative action set $\cH$ as follows:

\begin{align*}
    \cI &= \Big\{ x \in \RR^{d+1} | x_j \in \{-1,0,1\} \text{ for } j \in [d], \|x\|_0 = 2s, x_{d+1}=0 \Big\}\\
    \cH &= \Big\{ x \in \RR^{d+1} | x_j \in \{-\kappa,\kappa\} \text{ for } j \in [d], |\sum_{j=1}^d x_j| \leq {\kappa} \sqrt{2d \ln 2d}, x_{d+1}=1 \Big\}
\end{align*}
where $\kappa \in (0,1)$ is a constant. The action set is the union $\cA = \cI \cup \cH$.


Our linear bandit environment parameterized by $\theta \in \RR^{d+1}$ is defined as: given action taken $A_t$, its reward $r_t = \inner{\theta}{A_t} + \eta_t$, where $\eta_t \sim N(0, 1)$ is an independently drawn standard Gaussian noise. 
Note that by construction, $\eta_t$ is $\sigma^2$-subgaussian with $\sigma = 1$.


\paragraph{Notations} In this section, we will use $\bA = (A_1, \cdots, A_n) \in \cA^n$ as the random variable about the history of actions. For $\ba {= (a_1, \ldots, a_n)} \in \cA^n$, let $T(\cH; \ba) = \sum_{t=1}^n \one (a_t \in \cH)$, {which represents the total number of pulls of arms in $\cH$ in the learning process}. For brevity of notation, we will write the random variable $T(\cH; \bA)$ as $T(\cH)$, and $\cC_{\min}(\cA)$ as $\cC_{\min}$ throughout this section. Let $\mathsf{Sub}_x = \{S \subset [d] | |S|=x \}$, the set of subsets of $[d]$ which has $x$ elements. In subsequent proofs, given a bandit algorithm \alg and an bandit environment $\theta$, we use $\PP_{\theta,\alg}$ and $\EE_{\theta, \alg}$ to denote probability and expectation under the probability space induced by the interaction history between them. For any set of indices $S \subset [d+1]$, let $\sym(S)$ be the symmetric group of the set $S$ {(i.e. the collection of all permutations over $S$)}, and let $\Pi_S = \cbr{ \sigma \in \sym([d+1]): {\sigma(j)=j \text{ for all } j \in [d+1]\backslash S} }$ be the set of permutations which permutes only the indices in $S$, and let $\Pi_{a:b} = \Pi_{\{a, a+1, \cdots, b\}}$.

\paragraph{Structure of the section} Here is the high-level idea of the proof structure. 
\begin{itemize}
     \item Reduction {to symmetric algorithms using algorithmic symmetrization} (Section \ref{subsec:symmetric reduction}) : First, we will prove that the regret lower bound of symmetric sparse linear bandit algorithms (see Definition~\ref{def:symmetry}) is also the lower bound of the general sparse linear bandit algorithms (Lemma \ref{lem:algs then alg}). Keen readers may note that our action set construction is symmetric except for the $(d+1)$-th coordinate, and this is for exploiting the symmetry. By {focusing} on proving lower bounds for symmetric algorithms, we can exploit the favorable combinatorial properties of our action spaces to establish tighter lower bounds. 
    \item Count the number of mistakes (Section \ref{subsubsec: count mistake}): Next, we will prove the core proposition of the lower bound proof, Proposition \ref{claim: main lower bound}. This proposition can be summarized as, `the learning agent has to pull \edit{}{a} sufficiently large number of arms in $\cH$ (informative actions with high regret) to make less mistakes', where  `mistakes' refers to coordinates in the support of $\theta$ that has not been `hit' sufficiently by the agent via  pulling the low-regret arms $\cI$ (See Equation~\eqref{eqn:m-theta} for a formal definition).
    This implies an inherent tension between pulling informative, high regret arms $\cH$ and pulling low regret arms $\cI$, which eventually leads to the desired lower bound in  Theorem~\ref{thm:lower}.
    \item Lower bound on symmetric algorithms (Section \ref{subsubsec: proof of main prop}) : Now it remains to show the proof of Proposition \ref{claim: main lower bound}. Here, to improve the $\Omega(s^{1/3} n^{2/3})$ regret lower bound proved by~\citet{hao2020high} to $\Omega(s^{2/3} n^{2/3})$, 
    we deviate from their usage of Bretagnolle-Huber inequality for binary hypothesis testing, and 
    take a novel combination of various techniques such as a change of measure technique, combinatorial
    calculation by utilizing symmetry (Claim~\ref{claim: KL divergence}). 
\end{itemize}
}



\subsection{Algorithmic symmetrization: reducing lower bounds for general algorithms to symmetric algorithms} \label{subsec:symmetric reduction}

In this section, we show how {proving} a lower bound for generic algorithms can be reduced to that of permutation-symmetric (abbrev. symmetric) (augmented) algorithms (Definition~\ref{def:symmetry}), specifically Lemma~\ref{lem:algs then alg}. 
To introduce symmetric  algorithms, let us first define some useful terminology. 
{We first define a frequent coordinate set, which is the set of coordinates that are frequently 'hit' by low-regret arm pulls ($\cI$.)}



%


\begin{definition}[Frequent coordinate set]
Let ${\cU(\ba)} = \{U \in \mathsf{Sub}_{d/2}| \forall i\in U, \sum_{t=1}^n |a_{ti}|\one(a_t \in \cI) \ge \kmax{(d/2)} \{\sum_{t=1}^n |a_{tj}|\one(a_t \in \cI)\}_{j=1}^{d}  \}$ where $\kmax{k} S$ for a set $S \subseteq \RR$ is the $k$-th largest {element} of $S$. 
\end{definition}

{We also define coordinate-selection bandit algorithm which outputs top $d/2$-coordinates that are most frequently hit.}

\begin{definition}[]({Coordinate-selection bandit algorithm;} Coordination of an algorithm).
\label{def:augment}

\begin{enumerate}
\item Define a coordinate-selection bandit algorithm $\mathsf{B}$ as: at time {step} $t$, choose action $A_t$ based on its historical observations $(A_s, r_s)_{s=1}^{t-1}$; 
finally it outputs $\hat{S} \in \cU(\bA)$.
In other words, all elements in $\hat{S}$ are among the top $\frac{d}{2}$ most frequently chosen coordinates (including ties) when restricted to arm pull history on $\cI$.

\item Given a bandit algorithm \alg, define its coordination \alga as:
at time step $t$, use \alg to output $A_t$ based on all historical observations $(A_s, r_s)_{s=1}^{t-1}$; finally, output $\hat{S} \in \cU(\bA)$ with the lowest dictionary index\footnote{{The choice of dictionary order here is merely for concreteness; the proof would also go through if we break ties in other orders.}}.
In other words, the elements in $\hat{S}$ are the top $\frac{d}{2}$ most frequently chosen coordinates when restricted to arm pull history on $\cI$. 
\end{enumerate}
\end{definition}

 With this notation, for any coordinate-selection bandit algorithm, its output $\hat{S} \in \cU(\bA)$ with probability 1.
As a result, we will mainly focus on $(\ba, U) \in \cA^n \times \mathsf{Sub}_{d/2}$ such that $U \in \cU(\ba)$, but for the lemmas we keep the generality and consider any $U \subset [d]$. 

\begin{remarks}
From the above definitions, it can be readily seen that \alg's coordination, \alga, is a valid coordinate-selection bandit algorithm. {However, a coordinate-selection bandit algorithm does not need to break ties in dictionary order. }
\end{remarks}

\begin{remarks}
\alga outputs $\hat{S}$ by breaking ties in dictionary order. While this breaks symmetry by favoring coordinates with lower indices, as we will see in our reduction proof (proof of Lemma~\ref{lem:algs then alg}), we do not require \alga to be symmetric (we will define symmetry momentarily in  Definition~\ref{def:symmetry}); instead, we will work on a symmetrized version of \alga (Definition~\ref{def:symmetrized-algap}). 
\end{remarks}

\begin{definition}[Permutaion over sets of coordinates, and vectors in $\RR^{d+1}$]
Given a permutation $\sigma \in \Pi_{1:d}$:
\begin{itemize}
    
    \item For a subset of coordinates $S \subset [d]$, define $\sigma(S) \subset [d]$ as $\sigma(S):= \cbr{ \sigma(i): i \in S }$.

    \item For vector $v \in \RR^{d+1}$, define $\sigma(v) = ( v_{\sigma^{-1}(1)}, \ldots, v_{\sigma^{-1}(d+1)} ) = P_\sigma v \in \RR^{d+1}$ as the permuted version of $v$ using $\sigma$, where $P_\sigma = (e_{\sigma(1)}, \ldots, e_{\sigma(d+1)}) \in \RR^{(d+1) \times (d+1)}$ is the  permutation matrix\footnote{Here we use $\sigma$'s row representation.} induced by $\sigma$ and $e_j$ denotes $j$-th standard basis. Note that for every $i \in [d+1]$, 
    $\sigma(e_i) = e_{\sigma(i)}$.
    
    \item For sequence of actions $\ba = (a_1, \ldots, a_n) \in \cA^n$, define $\sigma(\ba) = (\sigma(a_1), \ldots, \sigma(a_n))$ as its  permuted version using $\sigma$.
\end{itemize}
\end{definition}

Intuitively, $\sigma( \cdot )$ ``moves'' the $i$-th entry of input vector $v$ to the vector's $\sigma(i)$-th coordinate. 
We will frequently apply the above vector permutation operation in our subsequent proofs, where the vector $v \in \RR^{d+1}$ are often taken as actions $A_t$ or hypotheses (underlying reward predictors) $\theta$.

\begin{definition}[Permutation-invariant action space]
An action space $\cA$ is said to be permutation-invariant, if for any $\pi \in \Pi_{1:d}$, 
\[
\pi(\cA) := \cbr{ \pi(a): a \in \cA } = \cA.
\]
\end{definition}
By our construction in the beginning of Section~\ref{sec:proof-lb}, our action space $\cA = \cI \cup \cH$ is permutation invariant.

Now we are ready to define symmetric coordinate-selection bandit algorithms, a special class of bandit algorithms we will focus on.

\begin{definition}[Symmetric coordinate-selection bandit algorithm]
\label{def:symmetry}
a coordinate-selection bandit algorithm $\mathsf{B}$ {over a permutation invariant action space $\cA$} is said to be \emph{symmetric}, if for any $\sigma \in \Pi_{1:d}$ and any $\ba = (a_1, \ldots, a_n) \in \cA^n$ and $U \subset [d]$,
    \[
    \PP_{\theta, \mathsf{B}}(\bA = \ba, \hat{S} = U)
    = 
    \PP_{\sigma(\theta), \mathsf{B}}(\bA = \sigma(\ba), \hat{S} = \sigma(U) ).
    \]
\end{definition}

Note that the above permutation symmetry notion is slightly different from~\citet{simchowitz2017simulator} -- here we only consider permutations in $\Pi_{1:d}$, i.e., over the first $d$ coordinates (out of all $d+1$ coordinates), whereas~\citet{simchowitz2017simulator} consider permutations over all coordinates (arms).

For symmetric coordinate-selection bandit algorithms, we have the following elementary property. {Hereafter, all proofs are deferred to Section~\ref{sec:deferred}.}

\begin{lemma}
\label{lem:algas}
For {any} symmetric coordinate-selection bandit algorithm $\mathsf{B}$, any $\sigma \in \Pi_{1:d}$ and any function $f: \cA^n \times 2^{[d]} \to \RR$,
    \[
    \EE_{\theta, \mathsf{B}}\sbr{ f( \bA, \hat{S}) }
    = 
    \EE_{\sigma(\theta), \mathsf{B}}\sbr{ f( \sigma^{-1}(\bA), \sigma^{-1}(\hat{S}) ) }
    \]
\end{lemma}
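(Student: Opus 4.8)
The plan is to reduce the claimed identity to the pointwise symmetry of Definition~\ref{def:symmetry} by a change of summation variable. Since $\cA = \cI \cup \cH$ is a finite set and $\hat{S}$ ranges over the finite family $2^{[d]}$, the joint law of $(\bA, \hat{S})$ is a discrete distribution, so for the given $\sigma \in \Pi_{1:d}$ I would begin by writing the right-hand side as a finite double sum:
\[
\EE_{\sigma(\theta), \algas}\sbr{ f(\sigma^{-1}(\bA), \sigma^{-1}(\hat{S})) }
= \sum_{\ba \in \cA^n} \sum_{U \subseteq [d]} f(\sigma^{-1}(\ba), \sigma^{-1}(U)) \, \PP_{\sigma(\theta), \algas}(\bA = \ba, \hat{S} = U).
\]

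First I would record the structural fact that makes everything work: both $\cI$ and $\cH$ are invariant under permutations of the first $d$ coordinates, because their defining constraints (the $\ell_1$-norm, the entrywise value sets $\{-1,0,1\}$ and $\{-\kappa,\kappa\}$, and the quantity $|\sum_{j} x_j|$) are all symmetric in the first $d$ coordinates, while the $(d+1)$-st coordinate is fixed by every $\sigma \in \Pi_{1:d}$. Hence $a \mapsto \sigma(a)$ is a bijection of $\cA$ onto itself, so $\ba \mapsto \sigma(\ba)$ is a bijection of $\cA^n$ and $U \mapsto \sigma(U)$ is a bijection of $2^{[d]}$ (indeed of each $\mathsf{Sub}_x$).

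Then I would substitute $\ba = \sigma(\ba')$ and $U = \sigma(U')$. Because the two maps above are bijections, $(\ba', U')$ ranges over exactly $\cA^n \times 2^{[d]}$, and $\sigma^{-1}(\ba) = \ba'$, $\sigma^{-1}(U) = U'$, so the sum becomes
\[
\sum_{\ba' \in \cA^n} \sum_{U' \subseteq [d]} f(\ba', U') \, \PP_{\sigma(\theta), \algas}(\bA = \sigma(\ba'), \hat{S} = \sigma(U')).
\]
Now I invoke Definition~\ref{def:symmetry} with action sequence $\ba'$ and coordinate set $U'$, which gives $\PP_{\sigma(\theta), \algas}(\bA = \sigma(\ba'), \hat{S} = \sigma(U')) = \PP_{\theta, \algas}(\bA = \ba', \hat{S} = U')$; substituting this back collapses the expression to $\sum_{\ba', U'} f(\ba', U')\,\PP_{\theta, \algas}(\bA = \ba', \hat{S} = U') = \EE_{\theta, \algas}[f(\bA, \hat{S})]$, which is the claim.

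The computation is essentially mechanical, and I expect the only genuine point requiring care to be the invariance $\sigma(\cA) = \cA$ together with the resulting bijectivity of the reindexing, since this is exactly what licenses relabeling the summation variables without altering the range of summation. I do not anticipate any analytic difficulty: the finiteness of $\cA$ and of $2^{[d]}$ sidesteps any measurability concern and lets me treat all the expectations as finite sums throughout.
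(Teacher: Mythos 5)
Your proof is correct and follows essentially the same route as the paper's: expand the expectation as a finite sum, apply Definition~\ref{def:symmetry} pointwise, and reindex the sum. The only differences are cosmetic — you argue from the right-hand side to the left and make explicit the permutation-invariance of $\cA$ (and hence the bijectivity of the reindexing), which the paper's proof uses implicitly.
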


\begin{definition}[Permuted augmented algorithm]
For a coordinate-selection bandit algorithm $\mathsf{B}$ on a permutation-invariant action space $\cA$, 
and a permutation $\pi \in \Pi_{1:d}$, 
define its $\pi$-permuted version $\mathsf{B}\pi$ as: first permute the $[d]$ coordinates using $\pi$, and run $\B$ with the permuted coordinates.
Formally, at every time step $t$:
\begin{itemize}
    \item 
    $\B$ outputs some action $A_t' \in \cA$, and $\B \pi$ accordingly outputs action $A_t = \pi^{-1}(A_t') \in \cA$
    \item  Receives reward $r_t = \inner{\theta}{A_t} + \eta_t
    =
    \inner{\pi(\theta)}{\pi(A_t)} + \eta_t
    =
    \inner{\pi(\theta)}{A_t'} + \eta_t$
\end{itemize}
Finally, $\B$ outputs $\hat{S}'$, and $\B \pi$ outputs $\hat{S} = \pi^{-1}(\hat{S}')$.
\end{definition}




The following lemma follows straightforwardly from the definition of $\B\pi$:
\begin{lemma}
\label{lem:algapi}
\begin{itemize}
    \item For any $\ba = (a_1, \ldots, a_n) \in \cA^n$ and $U \subset [d]$,
    \[ 
    \PP_{\theta,\Bpi}(\bA = \ba, \hat{S} = U)
    = 
    \PP_{\pi(\theta),\B}(\bA = \pi(\ba), \hat{S} = \pi(U)).
    \]
    \item For any function $f: \cA^n \times 2^{[d]} \to \RR$,
    \[
    \EE_{\theta, \Bpi}\sbr{ f(\bA, \hat{S}) }
    = 
    \EE_{\pi(\theta), \B}\sbr{ f( \pi^{-1}(\bA), \pi^{-1}(\hat{S}) ) }.
    \]
\end{itemize}
\end{lemma}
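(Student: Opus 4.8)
The plan is to prove both identities by observing that running $\algapi$ against the environment $\theta$ is, internally, nothing but running the base algorithm $\alga$ against the permuted environment $\pi(\theta)$, with every externally reported action and support set passed through $\pi^{-1}$. Write $(A_t')_t$ and $\hat S'$ for the actions and support set that the embedded copy of $\alga$ produces inside $\algapi$, so that by definition $A_t = \pi^{-1}(A_t')$ and $\hat S = \pi^{-1}(\hat S')$. The single structural fact I would exploit is that the reward channel sees $\theta$ and the played action only through their inner product and that, by the definition of the vector permutation, $\inner{\theta}{A_t} = \inner{\pi(\theta)}{\pi(A_t)} = \inner{\pi(\theta)}{A_t'}$; hence the reward $r_t = \inner{\pi(\theta)}{A_t'} + \eta_t$ fed back to the embedded $\alga$ has exactly the conditional law it would receive when interacting directly with $\pi(\theta)$.

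First I would argue, by induction on $t$, that the joint law of $(A_1', \ldots, A_t', r_1, \ldots, r_t)$ under $(\theta, \algapi)$ equals the joint law of the corresponding history under $(\pi(\theta), \alga)$. The base case is immediate; for the inductive step, the conditional distribution of $A_t'$ given the past is the same randomized policy of $\alga$ in both scenarios, and given $A_t'$ the reward $r_t$ has the identical Gaussian law by the inner-product identity above. Since $\hat S'$ is produced by the same (tie-broken) rule applied to this history, the tuple $(A_1', \ldots, A_n', \hat S')$ under $(\theta, \algapi)$ has the same law as $(\bA, \hat S)$ under $(\pi(\theta), \alga)$. Here $\bA$ and $\hat S$ are discrete random variables, the continuous rewards having been integrated out, so these are genuine probability masses.

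From this coupling the first identity follows by translating the external reports: since $A_t = \pi^{-1}(A_t')$ the event $\{\bA = \ba\}$ coincides with $\{A_t' = \pi(a_t)\ \forall t\}$, and likewise $\{\hat S = U\} = \{\hat S' = \pi(U)\}$, whence
\[
\PP_{\theta,\algapi}(\bA = \ba, \hat S = U)
= \PP_{\theta,\algapi}\bigl(A_t' = \pi(a_t)\ \forall t,\ \hat S' = \pi(U)\bigr)
= \PP_{\pi(\theta),\alga}(\bA = \pi(\ba), \hat S = \pi(U)).
\]
The second identity is then a change of variables: expanding $\EE_{\theta,\algapi}[f(\bA,\hat S)]$ as a sum over $(\ba, U)$, substituting the first identity, and re-indexing by $\bb = \pi(\ba)$, $V = \pi(U)$ — a bijection of $\cA^n \times 2^{[d]}$ because $\cA$ is permutation-invariant and $\pi$ acts bijectively on subsets of $[d]$ — turns $\sum_{\ba,U} f(\ba,U)\,\PP_{\pi(\theta),\alga}(\bA = \pi(\ba), \hat S = \pi(U))$ into $\EE_{\pi(\theta),\alga}[f(\pi^{-1}(\bA), \pi^{-1}(\hat S))]$.

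The only step needing genuine care is the inductive distributional identity, i.e. checking that the permutation commutes with the reward channel at every round so that the embedded $\alga$'s view is exactly its view against $\pi(\theta)$; everything else is bookkeeping. I therefore expect no real obstacle, and would present the induction compactly rather than tracking the full product measure.
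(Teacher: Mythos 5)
Your proposal is correct and follows essentially the same route as the paper's proof: rewrite the events via $\bA = \pi^{-1}(\bA')$, $\hat{S} = \pi^{-1}(\hat{S}')$, switch to $\alga$'s perspective (which the paper justifies directly from the definition of \algapi, and you justify more explicitly by a round-by-round induction using $\inner{\theta}{A_t} = \inner{\pi(\theta)}{A_t'}$), and then obtain the expectation identity by expanding the sum and re-indexing by the bijection $(\ba,U) \mapsto (\pi(\ba), \pi(U))$. The only difference is that you spell out the perspective-switch step in more detail than the paper, which states it in a single annotation.
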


\begin{definition}
\label{def:symmetrized-algap}
For a coordinate-selection bandit algorithm $\B$ on a permutation-invariant action space $\cA$, 
define its symmetrized version 
$\BP$ as: first, choosing $\pi$ uniformly at random from $\Pi_{1:d}$, then, run $\Bpi$ on the bandit environment for $n$ rounds.
\end{definition}

\begin{lemma}
\label{lem:algap}
We have the following: 
\begin{enumerate}
    \item $\PP_{\theta, \BP}\del{ \cdot } = \frac{1}{|\Pi_{1:d}|} \sum_{\pi \in \Pi_{1:d}} \PP_{\theta,\Bpi}\del{ \cdot }$, and
    $\EE_{\theta,\BP}[\cdot] = \frac{1}{|\Pi_{1:d}|} \sum_{\pi \in \Pi_{1:d}} \EE_{\theta,\Bpi}[\cdot]$.
    \item $\BP$ is a symmetric coordinate-selection bandit algorithm. 
\end{enumerate}
\end{lemma}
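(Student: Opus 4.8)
The plan is to derive both parts directly from the definition of \algap (Definition~\ref{def:symmetrized-algap}) as ``draw $\pi \sim \mathrm{Unif}(\Pi_{1:d})$, then run \algapi,'' using Lemma~\ref{lem:algapi} to transfer statements about \algapi back to \alga, and then exploiting the group structure of $\Pi_{1:d}$. Part (1) is essentially the law of total probability, and part (2) is a short re-indexing argument layered on top of part (1).

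For part (1), I would condition on the internal randomness $\pi$ that \algap samples before the bandit interaction begins. Since $\pi$ is uniform on $\Pi_{1:d}$ and independent of both the environment $\theta$ and the reward noise, conditioned on the event $\{\pi = \pi_0\}$ the law of the entire interaction history is exactly that of \algapi with the fixed value $\pi_0$ run against $\theta$. The tower property then yields
\[
\PP_{\theta,\algap}(\cdot) = \frac{1}{|\Pi_{1:d}|}\sum_{\pi \in \Pi_{1:d}} \PP_{\theta,\algapi}(\cdot),
\]
and integrating any bounded measurable functional against this identity gives the stated expectation formula. This step is routine.

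For part (2), I must verify the symmetry condition of Definition~\ref{def:symmetry}: for every $\sigma \in \Pi_{1:d}$, $\ba \in \cA^n$, and $U \subset [d]$,
\[
\PP_{\theta,\algap}(\bA = \ba,\, \hat S = U)
=
\PP_{\sigma(\theta),\algap}(\bA = \sigma(\ba),\, \hat S = \sigma(U)).
\]
Starting from part (1) and applying the first bullet of Lemma~\ref{lem:algapi} to each summand, I would rewrite the left-hand side as
\[
\frac{1}{|\Pi_{1:d}|}\sum_{\pi \in \Pi_{1:d}} \PP_{\pi(\theta),\alga}(\bA = \pi(\ba),\, \hat S = \pi(U)).
\]
Doing the same expansion on the right-hand side, and using that vector/set permutation composes as group multiplication, i.e. $\pi(\sigma(\cdot)) = (\pi\sigma)(\cdot)$ (which follows from $P_\pi P_\sigma = P_{\pi\sigma}$), turns the right-hand side into a sum over $\pi \in \Pi_{1:d}$ of $\PP_{(\pi\sigma)(\theta),\alga}(\bA = (\pi\sigma)(\ba),\, \hat S = (\pi\sigma)(U))$. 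The key move is then the change of summation variable $\pi' = \pi\sigma$: since $\Pi_{1:d}$ is a group and $\sigma \in \Pi_{1:d}$, the map $\pi \mapsto \pi\sigma$ is a bijection of $\Pi_{1:d}$ onto itself, so the sum over $\pi$ equals the sum over $\pi'$, recovering exactly the expression for $\PP_{\theta,\algap}(\bA = \ba,\, \hat S = U)$.

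The only subtlety, and the ``hard part'' (though a mild one), is the permutation bookkeeping: confirming that $\pi(\sigma(v)) = (\pi\sigma)(v)$ under the conventions fixed in the permutation definitions, that $\sigma$ maps $\cA$ into $\cA$ so that $\sigma(\ba) \in \cA^n$ is a legal argument (this is the permutation-invariance of our action set, already established for the construction $\cA = \cI \cup \cH$), and that the discreteness in the actions together with the fixed Gaussian reward densities makes the tower property in part (1) rigorous. With these conventions pinned down, both parts are short.
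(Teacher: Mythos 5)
Your proposal is correct and follows essentially the same route as the paper's proof: part (1) by conditioning on the internal randomness $\pi$, and part (2) by combining part (1) with Lemma~\ref{lem:algapi} and the re-indexing $\{\pi \circ \sigma : \pi \in \Pi_{1:d}\} = \Pi_{1:d}$. The only cosmetic difference is that you expand both sides and match them via the substitution $\pi' = \pi\sigma$, whereas the paper transforms the left-hand side into the right-hand side within a single chain of equalities.
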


The definition below formalizes the (pseudo-)regret notion under a specific hypothesis, which provides useful clarifications 
when using the averaging hammer to argue regret lower bounds.
\begin{definition}
\label{def:reg-theta}
Define 
\[
\Reg(\bA,\theta)
=
n \cdot \max_{a \in \cA} \inner{\theta}{a} - \sum_{t=1}^n \inner{\theta}{A_t}
\]
as the pseudo-regret of a sequence of actions $\bA = (A_t)_{t=1}^n$ under hypothesis $\theta$.
\end{definition}

The main result of this section is the following lemma that reduces proving lower bounds for general algorithms to proving lower bounds for symmetric augmented algorithms.
\begin{lemma}[Algorithmic symmetrization lemma] \label{lem:algs then alg}
If for all symmetric coordinate-selection bandit algorithms $\B$, there exists some $\theta \in \Theta_s \cup \Theta_{2s}$ such that 
$\EE_{\theta, \B} \sbr{ \Reg(\bA, \theta) }
\geq R$, 
then,
for all bandit algorithms \alg, 
there exists some $\theta' \in \Theta_s \cup \Theta_{2s}$ such that 
$\EE_{\theta', \alg} \sbr{ \Reg(\bA, \theta') }
\geq R$.
\end{lemma}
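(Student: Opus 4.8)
The plan is to take an arbitrary bandit algorithm \alg, lift it to a symmetric augmented algorithm by first augmenting and then symmetrizing, apply the assumed lower bound to this symmetric object, and finally ``unaverage'' the symmetrization to extract a single hard hypothesis for \alg itself. First I would form the augmentation \alga of \alg (Definition~\ref{def:augment}(2)); since augmentation only appends a support estimate $\hat S$ and never alters the action-selection rule, the action history---and hence $\Reg(n,\cdot)$, which by Definition~\ref{def:reg-theta} is a function of the actions alone---is distributed identically under \alg and \alga for every hypothesis. I would then form the symmetrized version \algap (Definition~\ref{def:symmetrized-algap}), which draws $\pi$ uniformly from $\Pi_{1:d}$ and runs \algapi. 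By Lemma~\ref{lem:algap}(2), \algap is a symmetric augmented bandit algorithm, so the hypothesis of the lemma applies: there exists $\theta \in \Theta_s \cup \Theta_{2s}$ with $\EE_{\theta,\algap}[\Reg(n,\theta)] \geq R$.

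The core of the argument is a decomposition together with a change-of-hypothesis identity. By Lemma~\ref{lem:algap}(1),
\[
\EE_{\theta,\algap}[\Reg(n,\theta)] = \frac{1}{|\Pi_{1:d}|}\sum_{\pi \in \Pi_{1:d}} \EE_{\theta,\algapi}[\Reg(n,\theta)].
\]
I would then establish, for each $\pi$, the identity $\EE_{\theta,\algapi}[\Reg(n,\theta)] = \EE_{\pi(\theta),\alga}[\Reg(n,\pi(\theta))]$. This follows from Lemma~\ref{lem:algapi}(2) applied to $f(\ba,U) = \Reg(n,\theta)$ (viewed as depending on $\ba$ only), using two facts: permutation matrices are orthogonal, so $\inner{\theta}{\pi^{-1}(A_t)} = \inner{\pi(\theta)}{A_t}$; and $\cA$ is permutation-invariant, so $\max_{a\in\cA}\inner{\theta}{a} = \max_{a\in\cA}\inner{\pi(\theta)}{a}$. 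Together these turn $\Reg(n,\theta)$ evaluated at $\pi^{-1}(\bA)$ into $\Reg(n,\pi(\theta))$ evaluated at $\bA$, which is exactly the claimed identity (one can also read it off directly from the definition of \algapi, whose played actions against $\theta$ are the $\pi^{-1}$-images of \alga's actions against $\pi(\theta)$).

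It then remains to package the conclusion. Because $\Theta_s$ and $\Theta_{2s}$ are invariant under permutations in $\Pi_{1:d}$---such permutations fix the $(d+1)$-th coordinate and preserve both the multiset of entries and the sparsity---we have $\pi(\theta) \in \Theta_s \cup \Theta_{2s}$ for every $\pi$. Combining the display with the identity, and with the fact that augmentation leaves regret unchanged so that $\EE_{\pi(\theta),\alga}[\Reg(n,\pi(\theta))] = \EE_{\pi(\theta),\alg}[\Reg(n,\pi(\theta))]$, yields
\[
R \leq \frac{1}{|\Pi_{1:d}|}\sum_{\pi \in \Pi_{1:d}} \EE_{\pi(\theta),\alg}[\Reg(n,\pi(\theta))].
\]
Since the right-hand side is an average over $\pi$, some $\pi$ must satisfy $\EE_{\pi(\theta),\alg}[\Reg(n,\pi(\theta))] \geq R$, and setting $\theta' = \pi(\theta) \in \Theta_s \cup \Theta_{2s}$ completes the reduction. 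I expect the main obstacle to be stating and verifying the change-of-hypothesis identity cleanly---getting the direction of the permutation correct in $\inner{\theta}{\pi^{-1}(A_t)} = \inner{\pi(\theta)}{A_t}$ and confirming permutation-invariance of the optimal-arm value---since everything else is a bookkeeping composition of the already-established Lemmas~\ref{lem:algap} and~\ref{lem:algapi} with a final averaging (``some $\pi$ beats the mean'') step.
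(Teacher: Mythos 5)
Your proposal is correct and follows essentially the same route as the paper's proof: symmetrize the augmentation \algap, apply the hypothesis to it, unfold via Lemma~\ref{lem:algap} and Lemma~\ref{lem:algapi} using the identity $\inner{\theta}{\pi^{-1}(A_t)} = \inner{\pi(\theta)}{A_t}$ together with $\cA$'s permutation invariance, drop back to \alg since augmentation does not change the action sequence, and finish with the averaging (pigeonhole) step. Your explicit check that $\Theta_s \cup \Theta_{2s}$ is closed under $\Pi_{1:d}$ is a detail the paper leaves implicit, but the argument is the same.
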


In view of this lemma, in Section~\ref{sec:lb-symmetric}, we focus on showing regret lower bounds on symmetric coordinate-selection bandit algorithms under hypotheses in $\Theta_s \cup \Theta_{2s}$. 

\subsubsection{Deferred Proofs}
\label{sec:deferred}

\begin{proof}[Proof of Lemma~\ref{lem:algas}]
For any $f: \cA^n \times 2^{[d]} \to \RR$, 
\begin{align*}
    \EE_{\theta, \B}\sbr{ f( \bA, \hat{S}) }
    &= \sum_{(\ba, U) \in \cA^n \times 2^{[d]}} \PP_{\theta, \B}(\bA = \ba, \hat{S} = U) f( \ba, U)
    \tag{definition of expectation}
    \\
    &= \sum_{(\ba, U) \in \cA^n \times 2^{[d]}} \PP_{\sigma(\theta), \B}(\bA = \sigma(\ba), \hat{S} = \sigma(U)) f( \ba, U) \tag{symmetry}\\
    &= \sum_{(\ba, U) \in \cA^n \times 2^{[d]}} \PP_{\sigma(\theta), \B}(\sigma^{-1}(\bA) = \ba, \sigma^{-1}(\hat{S}) = U) f( \ba, U)
    \tag{algebra}
    \\
    &=\EE_{\sigma(\theta), \B}\sbr{ f( \sigma^{-1}(\bA), \sigma^{-1}(\hat{S}) ) } \tag{definition of expectation}
\end{align*}
\end{proof}


\begin{proof}[Proof of Lemma~\ref{lem:algapi}]
For the first item, 
denote by $\bA' = (A_1', \ldots, A_n')$; 
for any $\ba = (a_1, \ldots, a_n) \in \cA^n$ and $U \subset [d]$,

\begin{align*}
    \PP_{\theta,\Bpi}(\bA = \ba, \hat{S} = U)
    &= \PP_{\theta,\Bpi}(\pi^{-1}(\bA') = \ba, \pi^{-1}(\hat{S}') = U)
    \tag{definition of $\bA'$}
    \\
    & = 
    \PP_{\theta,\Bpi}( \bA' = \pi(\ba), \hat{S}' = \pi(U) ) 
    \tag{algebra}
    \\
    &=\PP_{\pi(\theta),\B}(\bA = \pi(\ba), \hat{S} = \pi(U))
    \tag{switching to $\B$'s perspective}
\end{align*} 

The second item is the direct consequence of the first item by the following calculation.
\begin{align*}
    \EE_{\theta, \Bpi}\sbr{ f(\bA, \hat{S}) }
    &= \sum_{(\ba, U) \in \cA^n \times 2^{[d]}} \PP_{\theta, \Bpi}(\bA = \ba, \hat{S} = U) f( \ba, U)
    \tag{definition of expectation}
    \\
    &= \sum_{(\ba, U) \in \cA^n \times 2^{[d]}} \PP_{\pi(\theta), \B}(\bA = \pi(\ba), \hat{S} = \pi(U)) f( \ba, U) \tag{the first item}\\
    &= \sum_{(\ba, U) \in \cA^n \times 2^{[d]}} \PP_{\pi(\theta), \B}(\pi^{-1}(\bA) = \ba, \pi^{-1}(\hat{S}) = U) f( \ba, U) \tag{algebra}\\
    &= \EE_{\pi(\theta), \B}\sbr{ f( \pi^{-1}(\bA), \pi^{-1}(\hat{S}) ) }.
    \tag{definition of expectation}
\end{align*}


\end{proof}
    
\begin{proof}[Proof of Lemma~\ref{lem:algap}]
The first item follows from the definition of $\BP$. 

For the second item, for any permutation $\sigma \in \Pi_{1:d}$ and action history $\ba \in \mathcal{A}^n$,
\begin{align*}
    \PP_{\th,\BP} (\bA = \ba, \hat{S} = U) 
    &= 
    \frac{1}{|\Pi_{1:d}|}\sum_{\pi \in \Pi_{1:d}}\PP_{\th,\Bpi} (\bA = \ba, \hat{S} = U) 
    \tag{the first item}
    \\
    &= 
    \frac{1}{|\Pi_{1:d}|}\sum_{\pi \in \Pi_{1:d}}\PP_{\pi(\th),\B} (\bA = \pi(\ba), \hat{S} = \pi(U)) 
    \tag{Lemma~\ref{lem:algapi}}
    \\ 
    &= 
    \frac{1}{|\Pi_{1:d}|}\sum_{\pi \in \Pi_{1:d}}\PP_{\pi \circ \sigma(\th), \B} (\bA = \pi \circ \sigma(\ba), \hat{S} = \pi \circ \sigma(U)) 
    \tag{*}
    \\
    &= 
    \frac{1}{|\Pi_{1:d}|}\sum_{\pi \in \Pi_{1:d}}\PP_{ \sigma(\th),\Bpi} (\bA = \sigma(\ba), \hat{S} = \sigma(U)) 
    \tag{Lemma~\ref{lem:algapi}}
    \\
    &=\PP_{\sigma (\th), \BP} (\bA = \sigma (\ba), \hat{S}=\sigma(U)),
    \tag{the first item}
\end{align*}
where in step (*), we use the observation that for any $\sigma \in \Pi_{1:d}$, $\cbr{\pi \circ \sigma: \pi \in \Pi_{1:d}} = \Pi_{1:d}$.
\end{proof}

\begin{proof}[Proof of Lemma~\ref{lem:algs then alg}]
{Given any bandit algorithm \alg, denote by \alga its coordination (Definition~\ref{def:augment}), and denote by \algap the symmetrized version of \alga (Definition~\ref{def:symmetrized-algap}).
Since \algap is a symmetric augmented algorithm,} 
by assumption, 
we have, there exists some $\theta \in \Theta_{s} \cup \Theta_{2s}$, 
\begin{align*}
    R & \leq  \EE_{\theta, \algap} [
    \Reg (\bA, \theta)] \\
    &= \frac{1}{|\Pi_{1:d}|}\sum_{\pi \in \Pi_{1:d}}\EE_{\theta, \algapi} [\Reg (\bA, \theta)] 
    \tag{Lemma~\ref{lem:algap}}
    \\
    & = \frac{1}{|\Pi_{1:d}|}\sum_{\pi \in \Pi_{1:d}}\EE_{\theta, \algapi} [n \cdot \max_{a \in \cA} \inner{\theta}{a} - \sum_{t=1}^n \inner{\theta}{A_t}] \tag{Definition~\ref{def:reg-theta}} \\
    & = \frac{1}{|\Pi_{1:d}|}\sum_{\pi \in \Pi_{1:d}}\EE_{\pi(\theta), \alga} [n \cdot \max_{a \in \cA} \inner{\theta}{a} - \sum_{t=1}^n \inner{\theta}{\pi^{-1}(A_t)}] 
    \tag{Lemma~\ref{lem:algapi}}
    \\
    & = \frac{1}{|\Pi_{1:d}|}\sum_{\pi \in \Pi_{1:d}}\EE_{\pi(\theta), \alga} [n \cdot \max_{a \in \cA} \inner{\pi(\theta)}{a} - \sum_{t=1}^n \inner{\pi(\theta)}{A_t}] \tag{$\langle a, \pi^{-1}(b) \rangle = \inner{\pi(a)}{b}$, and $\cA$'s permutation invariance} \\
    &=\frac{1}{|\Pi_{1:d}|}\sum_{\pi \in \Pi_{1:d}}\EE_{\pi(\theta), \alga} [\Reg ( \bA, \pi(\theta))] 
    \tag{Definition~\ref{def:reg-theta}}
    \\
    &=\frac{1}{|\Pi_{1:d}|}\sum_{\pi \in \Pi_{1:d}}\EE_{\pi(\theta), \alg} [\Reg ( \bA, \pi(\theta))]
    \tag{\alg and \alga take the same action sequence}
\end{align*}
By the {probabilistic method}, there exists $\pi \in \Pi_{1:d}$ which satisfies $\EE_{\pi(\theta), \alg} [\Reg ( \bA, \pi(\theta))]\geq R$, and this $\pi(\theta) \in \Theta_s \cup \Theta_{2s}$ is the desired $\theta'$ in Lemma~\ref{lem:algs then alg}. 
\end{proof}

\subsection{Lower bound against symmetric algorithms}
\label{sec:lb-symmetric}

\subsubsection{Counting the number of mistakes} \label{subsubsec: count mistake}

{From now on, 
by Lemma~\ref{lem:algs then alg}, 
we will focus on proving regret lower bound for any symmetric augmented algorithm $\B$ (Definition~\ref{def:symmetry}). 
For the brevity of notation,
we omit the $\B$ {subscripts} from $\PP$ and $\EE$. {We view the final output $\hat{S}$ as an estimator of $\supp(\theta)$, and } 
define the $M_\theta (\hS)$, the {number of false negative mistakes} respect to $\theta$ as follows:
\begin{align}
  {M_\th(\hS)} &:= \supp(\th) \sm \hS
  \label{eqn:m-theta}
\end{align}

Let ${s}$ be multiple of 4. For $\xi = \frac{1}{4}$, we like to show the following claim.}
\chicheng{Can we plug in the numerical value of $\xi$ throughout the proof? It is a constant anyways.}

\begin{proposition}\label{claim: main lower bound}
    If $P_\th( | M_\th(\hS) | \ge s/4) \le \xi$ for all $\theta\in \Theta_s \cup \Theta_{2s}$, then $\exists \theta' \in \Theta_s$ such that $\EE_{\th'} [T(\cH) ] \ge \Omega(\frac{1}{\kappa^2\epsilon^2})$
\end{proposition}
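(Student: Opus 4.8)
The plan is to invoke the reduction already established in Lemma~\ref{lem:algs then alg} and argue only against a symmetric augmented algorithm, then show that pulling $\cH$ rarely is incompatible with reliably identifying the support. Fix the hypothesis $\theta\in\Theta_s$ that we will certify, with $[d]$-support $S$ ($|S|=s$) and all of its nonzero $[d]$-entries equal to $+\epsilon$; its stabilizer inside $\Pi_{1:d}$ is $\Pi_S\times\Pi_{[d]\setminus S}$. Introduce a comparison hypothesis $\theta''$ (in $\Theta_s$, say with all entries $+\epsilon$) whose $[d]$-support $S''$ is disjoint from $S$, so that the coordinates of $S$ are non-informative under $\theta''$. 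Since $|\hat S|=\tfrac d2\gg s$, a false negative can occur only when the algorithm touches a true support coordinate (through $\cI$-pulls) no more than the bulk of the non-support coordinates; the content of the proposition is that separating $S$ from the other coordinates this way demands information that only the $\kappa$-scaled measurements in $\cH$ can supply.

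Consider the $\Pi_S\times\Pi_{[d]\setminus S}$-invariant event $B=\{\,|S\cap\hat S|\ge \tfrac{3s}{4}\,\}$. On one hand, the low-false-negative hypothesis applied to $\theta$ (recall from~\eqref{eqn:m-theta} that $M_\theta(\hat S)=|S\setminus\hat S|+1$, since $d+1\notin\hat S$) gives $\PP_{\theta}(B)\ge 1-\xi=\tfrac34$. On the other hand, under $\theta''$ the coordinates in $S$ are non-support, and by the symmetry of the algorithm together with the invariance of $\theta''$ under permutations of its non-support coordinates in $[d]$, the restriction of $\hat S$ to those coordinates is exchangeable; hence $|S\cap\hat S|$ is dominated (conditionally on $|\hat S\setminus S''|$) by a hypergeometric variable with mean $\le s\cdot\tfrac{d/2}{d-s}\approx\tfrac{s}{2}$, and a standard tail bound yields $\PP_{\theta''}(B)\le e^{-\Omega(s)}$. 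This exponential gap, which is exactly what forces the sparsity $s$ (rather than $\sqrt s$) into the final rate, is the combinatorial heart of the argument.

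Now link the two bounds by the change-of-measure identity~\eqref{eqn:change-of-measure} with base $\theta''$ and reference $\theta$, followed by the permutation-averaging (Jensen over $\Pi=\Pi_S\times\Pi_{[d]\setminus S}$) displayed just after it:
\[
e^{-\Omega(s)}\ \ge\ \PP_{\theta''}(B)\ \gtrsim\ \EE_{\theta}\Big[\one_B\exp\Big(-\tfrac1{|\Pi|}\sum_{\pi\in\Pi}\sum_{t=1}^n\langle \pi^{-1}(A_t),\theta-\theta''\rangle^2\Big)\Big].
\]
A second application of Jensen, this time to $x\mapsto e^{-x}$ under $\PP_\theta(\cdot\mid B)$, lower-bounds the right-hand side by $\PP_\theta(B)\exp(-\EE_\theta[\mathrm{Exp}\mid B])\ge \tfrac34\exp(-\tfrac43\EE_\theta[\mathrm{Exp}])$, where $\mathrm{Exp}$ denotes the symmetrized exponent and we used $\mathrm{Exp}\ge 0$ and $\PP_\theta(B)\ge\tfrac34$. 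It then remains to bound $\EE_\theta[\mathrm{Exp}]$. Writing $\theta-\theta''=\epsilon(\sum_{j\in S}e_j-\sum_{j\in S''}e_j)$, the symmetrization replaces each $\langle\pi^{-1}(A_t),\cdot\rangle^2$ by an average over random $s$-subsets, so that each $\cH$-pull contributes $\Theta(\epsilon^2\kappa^2 s)$ in expectation (the sign-alignment cross terms average out under the symmetric measure and are held to $O(\kappa^2\log d)$ by the balance constraint $|\sum_j x_j|\le\kappa\sqrt{2d\ln 2d}$ defining $\cH$), while the pulls in $\cI$ contribute only $O(1)$ in total because $d\ge n^{1/3}s^{4/3}\kappa^{-4/3}$ makes $\epsilon^2\cdot\tfrac{s^2 n}{d}=O(1)$. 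Hence $\EE_\theta[\mathrm{Exp}]\le C\epsilon^2\kappa^2 s\,\EE_\theta[T(\cH)]+O(1)$, and taking logarithms in the chain above gives $\Omega(s)\le C\epsilon^2\kappa^2 s\,\EE_\theta[T(\cH)]+O(1)$, i.e. $\EE_\theta[T(\cH)]\ge\Omega\big(\tfrac1{\kappa^2\epsilon^2}\big)$, which is the assertion with $\theta$ as the witnessing $\Theta_s$-hypothesis.

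I expect the genuine difficulty to be the expectation bound on the symmetrized exponent $\mathrm{Exp}$ (the step I would isolate as a self-contained KL-divergence computation): one must show that under the symmetric algorithm the $\cH$-measurements contribute the clean $\epsilon^2\kappa^2 s$-per-pull term rather than a pessimistic $\epsilon^2\kappa^2 s^2$-per-pull term, which is precisely where the sign-balance constraint of $\cH$ and the exchangeability of the trajectory are used, and that the $\cI$-measurements are rendered negligible by the dimension lower bound. Passing to $\EE_\theta[\mathrm{Exp}]$ via Jensen is what lets me avoid any trajectory-wise truncation of $T(\cH)$ and obtain the expectation statement in one shot; the final bookkeeping hurdle is to verify that the residual $O(\kappa^2\log d)$ alignment terms are dominated by the $\epsilon^2\kappa^2 s$ main term under the standing conditions on $s$ and $d$.
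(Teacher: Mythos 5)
Your proposal follows the same skeleton as the paper's proof (reduction to symmetric algorithms, a change of measure linked to a combinatorial ``cannot-preferentially-select'' bound that forces the exponent to be $\Omega(s)$, then a per-pull bound on the exponent that converts this into a lower bound on $\EE[T(\cH)]$), and your hypergeometric bound $\PP_{\theta''}(B)\le e^{-\Omega(s)}$ is a clean repackaging of the paper's counting estimate $\ln Y=\Omega(s)$. However, your choice of hypothesis pair breaks the step you yourself flag as the crux. You take $\theta,\theta''\in\Theta_s$ with \emph{disjoint} supports $S$ and $S''$, put the change-of-measure expectation under $\theta$, and symmetrize over the stabilizer $\Pi_S\times\Pi_{[d]\setminus S}$ of $\theta$. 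But then the difference vector $\theta-\theta''=\epsilon\bigl(\sum_{j\in S}e_j-\sum_{j\in S''}e_j\bigr)$ contains the block supported on $S$, which is \emph{fixed} (setwise) by every permutation you average over; symmetrization only randomizes the placement of $S''$. Hence each $\cH$-pull contributes at least $\epsilon^2\bigl(\sum_{j\in S}A_{tj}\bigr)^2$ minus controllable cross terms, and nothing prevents this from being $\epsilon^2 s^2\kappa^2$: a symmetric algorithm may first spend $O\bigl(\kappa^{-2}\epsilon^{-2}\,\mathrm{polylog}(d)\bigr)$ pulls in $\cH$ identifying $S$ under $\PP_\theta$ (this is perfectly compatible with the premise $\PP_\theta(M_\theta(\hS)\ge s/4)\le\xi$), and thereafter play actions in $\cH$ equal to $+\kappa$ on all of $S$. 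The sign-balance constraint $|\sum_{j=1}^d x_j|\le\kappa\sqrt{2d\ln 2d}$ defining $\cH$ does not forbid this, since it constrains the sum over all $d$ coordinates and $s\kappa\ll\kappa\sqrt{2d\ln 2d}$; the imbalance on $S$ can be compensated on $[d]\setminus S$. For such an algorithm your claimed bound $\EE_\theta[\mathrm{Exp}]\le C\epsilon^2\kappa^2 s\,\EE_\theta[T(\cH)]+O(1)$ is false, and your chain only yields $\EE_\theta[T(\cH)]\ge\Omega\bigl(\tfrac{1}{s\kappa^2\epsilon^2}\bigr)$, which reproduces exactly the old $\Omega(s^{1/3}\kappa^{-2/3}n^{2/3})$ lower bound of Hao et al.\ rather than the $s^{2/3}$ rate this proposition must deliver.

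This is why the paper's construction uses \emph{nested} hypotheses, and why $\Theta_{2s}$ appears in the premise at all: $\theta'\in\Theta_s$ has support $[1\!:\!s]$, $\theta\in\Theta_{2s}$ has support $[1\!:\!2s]$, the change-of-measure expectation is taken under the \emph{smaller} hypothesis $\theta'$, and the symmetrization is over the stabilizer of $\theta'$. Then $\tilde\theta=\theta-\theta'$ is supported on $[s\!+\!1\!:\!2s]$, i.e.\ entirely inside the coordinates that are exchangeable under $\PP_{\theta'}$, so the averaging genuinely randomizes the whole difference support over $\sim d$ coordinates and Lemma~\ref{lemma: KL rand to Const} gives the per-pull cost $O(\epsilon^2 s\kappa^2)$: under $\theta'$ the algorithm has no signal with which to align against the unknown difference set. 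Two secondary gaps in your write-up, both fixable: (a) the $\gtrsim$ in the change of measure hides the mean-zero noise part of the log-likelihood ratio, which your conditional-Jensen step does not automatically control (the paper handles it with the supermartingale argument of Lemma~\ref{lem: emp KL to KL - bigger}; in your route one would need to bound $\EE_\theta[M\mid B]$ by $\sqrt{2\EE_\theta[Q]}/\PP_\theta(B)$, say); and (b) under the standing assumption $d\ge n^{1/3}s^{4/3}\kappa^{-4/3}$ the $\cI$-contribution is $\Theta(\epsilon^2 s^3 n/d)=O(s)$, not $O(1)$, so it competes with the $\Omega(s)$ main term and the constants must be tracked, as the paper does.
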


\begin{remarks}
~\cite{hao2020high} shows a weaker version of this proposition, which (essentially) shows that if we would like to suffer a regret $\leq \Omega(n s \epsilon)$ under $\Theta_{s} \cup \Theta_{2s}$, the number of pulls to the informative arms needs to be at least $\Omega(\frac{1}{s \kappa^2 \epsilon^2})$. 
As our main technical contribution, we improve~\cite{hao2020high}'s lower bound on $\EE_{\th'} [T(\cH)]$ by a factor of $s$. Note that our definition of $\cH$, the set of informative arms, is slightly different from~\cite{hao2020high}. 
\end{remarks}

Given the above proposition, we are now ready to prove Theorem~\ref{thm:lower-restated-2s}.

\begin{proof}[Proof of Theorem~\ref{thm:lower-restated-2s}]
{We begin by showing that for all 
$\theta \in \Theta_s \cup \Theta_{2s}$, 
\[ 
\EE_\theta \sbr{ \Reg_n } \geq \frac{\epsilon s n}{8} \PP_\th( | M_\th(\hS) | \ge s/4).
\]
It suffices to show that, if $|M_\th(\hS) | \ge s/4$, then $\Reg_n \geq \frac{\epsilon s n}{8}$. 
Indeed, 
\begin{align*}
\Reg_n 
&= 
\sum_{t=1}^n \langle \theta, a^*-A_t \rangle  \\
&= 
\epsilon |\supp(\theta)| n  - \sum_{t=1}^n \langle \theta, A_t \rangle \indic(A_t \in \cI) - \sum_{t=1}^n \langle \theta, A_t \rangle \indic(A_t \in \cH)  \\
& \geq 
\epsilon |\supp(\theta)| n  - \sum_{t=1}^n \langle \theta, A_t \rangle \indic(A_t \in \cI) \\
& =  
\epsilon |\supp(\theta)| n  - 
\epsilon \sum_{i \in \supp(\theta) \cap \hat{S}}
\sum_{t=1}^n A_{t,i} \indic(A_t \in \cI)
- 
\epsilon \sum_{i \in \supp(\theta) \setminus \hat{S}}
\sum_{t=1}^n A_{t,i} \indic(A_t \in \cI)
\\
& =  
\epsilon \sum_{i \in \supp(\theta) \cap \hat{S}}
\del{ n - \sum_{t=1}^n A_{t,i} \indic(A_t \in \cI) }
+
\epsilon \sum_{i \in \supp(\theta) \setminus \hat{S}}
\del{ n - \sum_{t=1}^n A_{t,i} \indic(A_t \in \cI) }
\\
& \geq 
\epsilon \sum_{i \in \supp(\theta) \setminus \hat{S}}
\del{ n - \sum_{t=1}^n A_{t,i} \indic(A_t \in \cI) }
\\
& \geq 
\epsilon \sum_{i \in \supp(\theta) \setminus \hat{S}}
\del{ n - \frac{n}{2} }
\\
& \geq \epsilon \cdot \frac{s}{4} \cdot \frac{n}{2} = \frac{\epsilon s n}{8}.
\end{align*}
where the second equality is by decomposing $1 = \indic(A_t \in \cI) + \indic(A_t \in \cH)$; 
the first inequality is because for all $a \in \cH$ and all $\theta \in \Theta_s \cup \Theta_{2s}$, $\inner{\theta}{a} \leq 2s \kappa - 1 \leq 0$; 
\chicheng{Sorry I forgot -- Is this constraint mentioned in (or can be derived by) the assumptions of theorem statement?}
\ja{I will add this assumption if our main theorem does not include it.}

the third equality is by noting that $\inner{\theta}{A_t} = \epsilon \sum_{i \in \supp(\theta)} A_{t,i}$ and decomposing $\supp(\theta)$ to disjoint union $\supp(\theta) \cap \hat{S}$ and $\supp(\theta) \setminus \hat{S}$; 
the fourth equality is by algebra; the second inequality is by observing that $n - \sum_{t=1}^n A_{t,i} \indic(A_t \in \cI) \geq 0$ for all $i$; 
the third inequality is by noting that $\sum_{i=1}^d \sum_{t=1}^n |A_{t,i}| \indic(A_t \in \cI) \leq s n$, and therefore, for all $i \notin \hat{S}$, $\sum_{t=1}^n A_{t,i} \indic(A_t \in \cI) 
 \leq \sum_{t=1}^n |A_{t,i}| \indic(A_t \in \cI) \leq \frac{s n}{d / 2} \leq \frac{n}{2}$; the last two steps are by algebra.
}

{Given the above claim, we lower bound the minimax regret as follows:
\begin{itemize}
\item If there exists a $\theta\in \Theta_s \cup \Theta_{2s}$ that satisfies $\PP_\th( |M_\th(\hS)| \ge s/4) \ge \frac{1}{4}$, then by the claim above, $\EE_\theta[\Reg_n] \geq \frac{\epsilon s n}{8} \cdot \frac{1}{4} = \frac{\epsilon s n}{32}$. 

\item Otherwise, by Proposition~\ref{claim: main lower bound}, there exists $\theta' \in \Theta_{2s}$ such that $\EE_{\theta'}
[T(\cH)] \geq \Omega\del{ \frac{1}{ \kappa^2 \epsilon^2 } }$. 
Note that $\max_{a \in \cA} \inner{\theta}{a} - \max_{a \in \mathcal{H}} \inner{\theta}{a} \geq s\epsilon - (\kappa s \epsilon -1) \geq \Omega(1)$, this implies that $\EE_{\theta'}[\Reg_n] \geq \Omega( \frac{1}{\kappa^2 \epsilon^2} )$. 
\end{itemize}
In summary, for any symmetric augmented algorithm $\B$, there exists some $\theta \in \Theta_s \cup \Theta_{2s}$ such that 
\[
\EE_\theta [ \Reg_n ] \geq \Omega\del{ \min\del{ \epsilon s n, \frac{1}{\kappa^2 \epsilon^2} } }. 
\]
The theorem is concluded by recalling the choice of $\epsilon = \kappa^{-2/3} s^{-1/3} n^{-1/3}$.  
}
\end{proof}




\subsubsection{Proof of Proposition~\ref{claim: main lower bound}}\label{subsubsec: proof of main prop}

{By assumption,} $\forall \th \in \Theta_s \cup \Theta_{2s}, \PP_\th( \abs{ M_\th (\hS) } \ge s/4) \le \xi$. Define 
\begin{align*}
    \theta' &= (\underbrace{\epsilon, \cdots, \epsilon}_{s}, 0, \cdots, 0, -1)\in \Theta_s,\\
    \theta &= (\underbrace{\epsilon, \cdots, \epsilon}_{2s}, 0, \cdots, 0, -1)\in \Theta_{2s},\\
    \tilth &= \theta-\theta' .
\end{align*}
{We aim to show that for this $\theta'$, $\EE_{\th'} [T(\cH) ] \ge \Omega(\frac{1}{\kappa^2\epsilon^2})$.}

The following two lemmas show the main advantage why we set $\theta'$ and $\theta$ in this way. 

\begin{lemma}\label{lemma: symmetry}
Let $\phi \in \RR^d$, $\pi \in \Pi$, $E_1, E_2 \in \mathsf{Sub}_{d/2}$ be the elements which satisfies $\pi(\phi)=\phi$ and $\pi (E_1) = E_2$. Let $\tau \geq 0$. For any symmetric {augmented} algorithm $\B$, we have that $$\PP_{\B,\phi} (\hat{S}=E_1, T(\cH)\leq \tau)=\PP_{\B,\phi} (\hat{S}=E_2, T(\cH)\leq \tau).$$
\end{lemma}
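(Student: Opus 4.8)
The plan is to derive the identity from the symmetry of $\mathsf{AlgS}$ (Lemma~\ref{lem:algas}) combined with one structural observation: the statistic $T(\cH)$ is invariant under any relabeling of the first $d$ coordinates. Indeed, by the construction at the start of Section~\ref{sec:proof-lb}, membership in $\cH$ depends only on the constraints $x_j \in \{-\kappa, \kappa\}$, $\bigl|\sum_{j=1}^d x_j\bigr| \le \kappa\sqrt{2d \ln 2d}$, and $x_{d+1} = 1$, all of which are preserved under any $\pi \in \Pi_{1:d}$; hence $\pi(\cH) = \cH$, and likewise $\pi(\cI) = \cI$, so $\cA$ is permutation invariant. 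Consequently, for any history $\ba \in \cA^n$ and any $t$, we have $a_t \in \cH$ iff $\pi(a_t) \in \cH$, which gives $T(\cH; \pi(\ba)) = T(\cH; \ba)$; that is, $\ba \mapsto \one(T(\cH; \ba) \le \tau)$ is a permutation-invariant function of the history.

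Next I would instantiate Lemma~\ref{lem:algas} with $\theta = \phi$, $\sigma = \pi$, and the test function $f(\ba, U) := \one(U = E_1)\,\one(T(\cH; \ba) \le \tau)$, which maps $\cA^n \times 2^{[d]} \to \RR$ as required. This produces
\[
\EE_{\phi, \mathsf{AlgS}}\bigl[\one(\hat{S} = E_1)\,\one(T(\cH) \le \tau)\bigr]
=
\EE_{\pi(\phi), \mathsf{AlgS}}\bigl[\one(\pi^{-1}(\hat{S}) = E_1)\,\one(T(\cH; \pi^{-1}(\bA)) \le \tau)\bigr].
\]
I would then simplify the right-hand side using the three hypotheses in turn. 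Because $\pi(\phi) = \phi$, the outer expectation is again under $\phi$; because $\pi(E_1) = E_2$, the event $\{\pi^{-1}(\hat{S}) = E_1\}$ is identical to $\{\hat{S} = E_2\}$; and because $T(\cH)$ is permutation invariant (applied to $\pi^{-1} \in \Pi_{1:d}$), we have $T(\cH; \pi^{-1}(\bA)) = T(\cH; \bA)$, so the second indicator is unchanged. The right-hand side therefore collapses to $\EE_{\phi, \mathsf{AlgS}}\bigl[\one(\hat{S} = E_2)\,\one(T(\cH) \le \tau)\bigr]$, and rewriting both expectations as probabilities yields exactly $\PP_{\mathsf{AlgS}, \phi}(\hat{S} = E_1, T(\cH) \le \tau) = \PP_{\mathsf{AlgS}, \phi}(\hat{S} = E_2, T(\cH) \le \tau)$.

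I do not anticipate any real difficulty beyond bookkeeping: the single step that warrants care is the permutation-invariance of $T(\cH)$, since this is precisely what lets the permutation $\pi$ be pushed through the event $\{T(\cH) \le \tau\}$ without altering it. (An equivalent route avoiding Lemma~\ref{lem:algas} is to apply Definition~\ref{def:symmetry} termwise and sum $\PP_{\phi}(\bA = \ba, \hat{S} = E_1)$ over all $\ba$ with $T(\cH; \ba) \le \tau$, then change variables $\ba' = \pi(\ba)$, using that $\pi$ is a bijection on $\cA^n$ preserving $T(\cH)$; both arguments rely on the same invariance.) Everything else follows directly from the symmetry relation, mirroring the calculation in the proof of Lemma~\ref{lem:algas}.
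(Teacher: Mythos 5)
Your proposal is correct and follows essentially the same route as the paper's proof: both rewrite the joint probability as an expectation of an indicator, apply the symmetry property (Lemma~\ref{lem:algas}) with the permutation $\pi$, and then collapse the result using $\pi(\phi)=\phi$, $\pi(E_1)=E_2$, and the permutation-invariance of $\cH$ (hence of $T(\cH)$). Your only addition is to spell out explicitly why $\pi(\cH)=\cH$, a fact the paper uses implicitly in the step $\sum_t \ind(\pi^{-1}(A_t)\in\cH)=T(\cH)$.
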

\begin{proof}
To see this, note that:
\begin{align*}
     & \PP_{\B,\phi} (\hat{S}=E_1, T(\cH)\leq \tau) \\
     = & \EE_{\B, \phi} \sbr{ \ind \del{ \hat{S} = E_1, \sum_{t=1}^n \ind(A_t \in \cH) \leq \tau } } \tag{Definition of expectation} \\
    = & \EE_{\B, \pi(\phi)} \sbr{ \ind \del{ \pi^{-1}(\hat{S}) = E_1, \sum_{t=1}^n \ind( \pi^{-1}(A_t) \in \cH) \leq \tau } } \tag{Lemma~\ref{lem:algas}} \\
    = & \EE_{\B, \pi(\phi)} \sbr{ \ind \del{ \hat{S} = \pi(E_1), T(\cH) \leq \tau } } \tag{algebra}\\
    = & \EE_{\B, \phi} \sbr{ \ind \del{ \hat{S} = E_2, T(\cH) \leq \tau } } \tag{Definition of expectation, $\pi(\phi) = \phi$, $\pi(E_1) = E_2$}
\end{align*}
\end{proof}

Let $Q_{m} := \{ S \in \mathsf{Sub}_{d/2}| M_\th(S) =m\}$ and $Q_{m}'= \{ S \in \mathsf{Sub}_{d/2}| M_{\theta'}(S) =m\}$ be the collections of size $d/2$ sets which has exactly $m$ mistakes with {respect to} $\theta$ and $\theta'$, respectively.

\begin{lemma}\label{lemma: symmetry_E1E2 specific}
{Suppose $\theta, \theta'$ are defined above; fix $m$}.
\begin{itemize}
\item For {any} $E_1, E_2 \in Q_{m}$, there exists $\pi$ such that $\pi(\theta) = \theta$  (i.e. $\pi \in \{\pi_1 \circ \pi_2 | \pi_1 \in \Pi_{1:2s}, \pi_2 \in \Pi_{2s+1:d} \}$) which satisfies $\pi(E_1)=E_2$. 
\item Similarly, for {any} $E_1', E_2' \in Q_{m}'$, there exists $\pi$ such that $\pi(\theta') = \theta'$ (i.e. 
$\pi \in \{\pi_1 \circ \pi_2 | \pi_1 \in \Pi_{1:s}, \pi_2 \in \Pi_{s+1:d} \}$) which satisfies $\pi(E_1')=E_2'$.
\end{itemize}
\end{lemma}
\begin{proof}
Since $|E_1 \cap [1:2s]| = |E_2 \cap [1:2s]|$ and $|E_1 \cap [2s+1:d]| = |E_2 \cap [2s+1:d]|$, there exists $\pi \in \{\pi_1 \circ \pi_2 | \pi_1 \in \Pi_{1:2s}, \pi_2 \in \Pi_{2s+1:d} \}$ which satisfies $\pi(E_1)=E_2$. Similar proof holds also for $E_1' , E_2'$. 
\end{proof}
With foresight, define $\tau = 4 \EE_{\theta'}[T(\cH)]$. Then, by assumption, 
\chicheng{Can this $\frac{3}{4}s- l$ be replaced with $\frac{s}{2} + l$ throughout the proof? Accordingly, $\frac{1}{4}s - l$ can be replaced with $l$. (new $l \in \cbr{0, 1, \ldots, \frac s 4 -1}$)
}
\begin{align*}
  \xi \geq & \PP_\th\del{ \abs{M_\th(\hS)} \ge \frac{s}{4} }
  \\
  \geq &
  \PP_\th\del{ \abs{ M_\th(\hS) } \ge \frac{s}{4}, T(\cH) \le \tau} \\
  \geq & \sum_{l=1}^{\fr14 s} \PP_\th\del{ \abs{ M_\th(\hS) } = \fr34 s - l, T(\cH) \le \tau}
\end{align*}
For $l \in\{1,\ldots,s/4\}$, and let $R_{l} = Q_{\fr34 s-l} \cap Q_{\fr14 s-l}'$. For $a \in \mathbb{R}^d$ and $r\in \mathbb{R}$, 
and $\phi \in \cbr{\theta, \theta'}$,
let $p_\phi (r|a) = \frac{1}{\sqrt{2\pi}} \exp\del{-\frac{(r-\langle a, \phi \rangle)^2}{2}}$ be the probability density function of the reward $r$ when the action $a$ is given under hypothesis $\phi$. 
{Note that for any interaction history $(A_t, r_t)_{t=1}^T$, its probability density function under $\theta$ and $\theta'$ has ratio $\prod_{t=1}^n \frac{p_\th (r_t|A_t) }{p_{\th'}(r_t|A_t)}$. 
}

Now pick one element $E \in R_l$. 
Then, 
\begin{align}
  &\PP_\th\del{ \abs{ M_\th(\hS) } = \fr{3}{4}s -l, T(\cH) \le \tau }
  \\&= \PP_\th\del{ \hS \in Q_{\fr{3}{4}s-l}, T(\cH)\leq \tau }
  \\&= |Q_{\fr{3}{4}s-l}|\PP_\th\del{ \hS =E, T(\cH)\leq \tau } \nonumber\tag{Lemma \ref{lemma: symmetry} and \ref{lemma: symmetry_E1E2 specific}}
\\&= \frac{|Q_{\fr{3}{4}s  - l}|}{|R_l|} \PP_\th\del{\hS \in R_l, T(\cH) \le \tau} \nonumber\tag{Lemma \ref{lemma: symmetry} and \ref{lemma: symmetry_E1E2 specific}}
\\&= \frac{|Q_{\fr{3}{4}s  - l}|}{|R_l|} \EE_{\th'}\sbr{ \one(\hS \in R_l, T(\cH) \le \tau)\prod_{t=1}^n \frac{p_\th (r_t|A_t) }{p_{\th'}(r_t|A_t)} } \nonumber\tag{change of measure}
\\&= \frac{|Q_{\fr{3}{4}s  - l}|}{|R_l|} \EE_{\th'}\sbr{ \one(\hS \in R_l, T(\cH) \le \tau)\exp(-\sum_{t=1}^n \ln\frac{p_{\th'} (r_t|A_t) }{p_{\th}(r_t|A_t)}) } \label{eqn: where claim 1 is going}
\end{align}
{To proceed,} we will use the following {claim} to bound {the probability ratio}. {The proof of this claim requires our novel application of symmetry property of the algorithm $\B$ and is one of our key technical contributions; 
we defer its proof to Section~\ref{sec:claim-kl}}.

\begin{claim}\label{claim: KL divergence}
{For any $\rho, \delta > 0$,}
\begin{align*}
    & \EE_{\th'}\sbr{ \one(\hS \in R_l, T(\cH) \le \tau) \exp(-\sum_{t=1}^n \ln\frac{p_{\th'} (r_t|A_t) }{p_{\th}(r_t|A_t)}) } \\
    \geq & \PP_{\th'} \del{ \hS \in R_l, T(\cH) \le \tau } \exp(-KL(\rho,\delta,\tau))-\delta^{1+\frac{1}{\rho}}
\end{align*}
where $KL(\rho,\delta,\tau)=\frac{1}{2}\epsilon^2 (1+\rho)(\GIANTKLtau)+\frac{1}{\rho} \ln \frac1{\delta}$.
\end{claim}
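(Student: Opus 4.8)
The plan is to prove this as a truncated change-of-measure bound. First I would write the Gaussian log-likelihood ratio explicitly. Under $\PP_{\thp}$ we have $r_t = \inner{A_t}{\thp}+\eta_t$ with $\eta_t \sim N(0,1)$, so writing $\tilth := \th-\thp$ (supported on coordinates $s+1,\dots,2s$ with value $\epsilon$),
\[
Z := \sum_{t=1}^n \ln\frac{p_{\thp}(r_t\mid A_t)}{p_{\th}(r_t\mid A_t)} = \tfrac12 V - M, \quad V := \sum_{t=1}^n \inner{A_t}{\tilth}^2, \quad M := \sum_{t=1}^n \eta_t\inner{A_t}{\tilth}.
\]
Since each $\inner{A_t}{\tilth}$ is predictable and $\eta_t$ is standard Gaussian, a step-by-step conditioning gives the clean identity $\EE_{\thp}[e^{\lambda Z}] = \EE_{\thp}[e^{\frac{\lambda+\lambda^2}{2}V}]$ for $\lambda>0$, which is the engine for all the tail control below.

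Next I would apply an elementary truncation inequality. Writing $E = \cbr{\hS\in R_l,\ T(\cH)\le\tau}$, for any threshold $c$,
\[
\EE_{\thp}\sbr{\one_E e^{-Z}} \ge e^{-c}\PP_{\thp}(E) - e^{-c}\PP_{\thp}(E,\, Z>c) \ge e^{-c}\PP_{\thp}(E) - \PP_{\thp}(T(\cH)\le\tau,\, Z>c),
\]
using $e^{-c}\le 1$ and $E\subseteq\cbr{T(\cH)\le\tau}$. I would set $c = KL(\rho,\delta,\tau)$ and devote the remaining work to showing $\PP_{\thp}(T(\cH)\le\tau,\,Z>c)\le\delta^{1+1/\rho}$ via a Chernoff bound $e^{-\lambda c}\EE_{\thp}[\one(T(\cH)\le\tau)\,e^{\lambda Z}]$ with exponential parameter $\lambda$ tuned as a function of $\rho$; this tuning is exactly what produces the $(1+\rho)$ factor multiplying the variance proxy, the $\frac1\rho\ln\frac1\delta$ additive term in $c$, and the residual of order $\delta^{1+1/\rho}$.

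The crux is controlling the random quadratic variation $V$ on $\cbr{T(\cH)\le\tau}$ by the stated proxy $\epsilon^2\del{\GIANTKLtau}$. Here symmetry is indispensable: since $\algas$ is symmetric and $\thp$ is fixed by every $\sigma\in\Pi_{s+1:d}$, the law of $\bA$ under $\PP_{\thp}$ is $\Pi_{s+1:d}$-invariant, so I can average $V$ over all placements of $\supp(\tilth)$ as a uniformly random size-$s$ subset $S\subseteq\cbr{s+1,\dots,d}$. Computing $\EE_S\sbr{(\sum_{j\in S}A_{tj})^2} = \frac{s}{d-s}\sum_j A_{tj}^2 + \frac{s(s-1)}{(d-s)(d-s-1)}\sum_{j\ne k}A_{tj}A_{tk}$ and splitting the rounds into pulls of $\cI$ and $\cH$: for the $\cI$-pulls I use $\|A_t\|_1=2s$ with entries in $\cbr{-1,0,1}$ together with $d\ge(s+1)^2$ to obtain the $\frac{4s^3n}{d}$ term, while for the at most $\tau$ pulls of $\cH$ I use that entries are $\pm\kap$ with the balance constraint $\envert{\sum_j x_j}\le\kap\sqrt{2d\ln 2d}$, so the diagonal term contributes $\approx s\kap^2$ per step and the cross term is lower order (absorbed into the constant via $s>40$ and $d\ge(s+1)^2$), giving the $77 s\kap^2\tau$ term.

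The main obstacle is the interface between this symmetry-averaging of the \emph{random} $V$ and the exponential-moment tail bound. The difficulty is that $V$ is correlated with the noise martingale $M$, and the truncation $\one(T(\cH)\le\tau)$ is not predictable, so one cannot naively substitute an almost-sure bound into $\EE_{\thp}[e^{\frac{\lambda+\lambda^2}{2}V}]$; the $T(\cH)\le\tau$ restriction must be carried through the martingale reduction carefully. Matching constants so that the exponent is exactly $\tfrac12\epsilon^2(1+\rho)\del{\GIANTKLtau}$ and the residual is exactly $\delta^{1+1/\rho}$ is the delicate part, whereas the likelihood-ratio computation and the subset-averaging identities are routine bookkeeping.
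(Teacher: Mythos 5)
Your proposal assembles the right ingredients (the Gaussian likelihood-ratio decomposition $Z=\tfrac12 V-M$, the exponential supermartingale engine, permutation symmetry, and the combinatorial averaging that produces $\GIANTKLtau$), but it wires them together in an order that cannot be made to work, and the obstacle you flag at the end as "the delicate part" is in fact fatal to your route. You compare $Z$ against the \emph{fixed} threshold $c=KL(\rho,\delta,\tau)$ and plan to prove $\PP_{\thp}\del{T(\cH)\le\tau,\,Z>c}\le\delta^{1+1/\rho}$ by Chernoff, i.e.\ by bounding $\EE_{\thp}\sbr{\one(T(\cH)\le\tau)e^{\lambda Z}}=\EE_{\thp}\sbr{\one(T(\cH)\le\tau)e^{\frac{\lambda+\lambda^2}{2}V}}$. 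But on the event $\cbr{T(\cH)\le\tau}$ the quadratic variation $V$ is \emph{not} bounded by the proxy $\epsilon^2\del{\GIANTKLtau}$; only its \emph{average over permutations} is. Averaging helps exponential moments only in one direction: Jensen gives $\frac{1}{|\Pi|}\sum_\sigma e^{x_\sigma}\ge e^{\bar x}$, with no matching upper bound, so a bound on the permutation-average of $V$ says nothing useful about $\EE\sbr{e^{\lambda' V}}$. Concretely, a symmetric algorithm that plays one fixed $\cI$-action all $n$ rounds has $V(\sigma^{-1}(\bA))\approx n s^2\epsilon^2$ on a set of permutations of probability roughly $(2s/d)^s$; since $ns^2\epsilon^2=\kappa^{-4/3}s^{4/3}n^{1/3}$, which is of order $d$ when $d$ sits near its lower bound $n^{1/3}s^{4/3}\kappa^{-4/3}$, the restricted MGF is of size $e^{\Omega(d)}$ and the Chernoff bound is vacuous. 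Worse, no bound of the form $\PP(T(\cH)\le\tau,\,Z>c)\le\delta^{1+1/\rho}$ can hold for all $\delta$ (the claim's $\delta$ is a free parameter, later taken exponentially small in $s$), because $\PP_{\thp}(V\gg\text{proxy})$ is at least polynomially large (order $s^2/d$, from the algorithm's pulls accidentally hitting the random support), independent of $\delta$.

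The paper's proof avoids this entirely by never requiring $V$ to be small with high probability. Its Lemma on the empirical KL (Ville's inequality applied to the supermartingale $H_t$) is \emph{self-normalized}: with probability $1-\delta$, pathwise, $Z\le(1+\rho)\tfrac{V}{2}+\tfrac1\rho\ln\tfrac1\delta$, i.e.\ $Z$ is compared against the \emph{random} $V$, not a constant. Dropping the bad event costs $\delta$, which after factoring out $e^{-\frac1\rho\ln\frac1\delta}=\delta^{1/\rho}$ yields exactly the $-\delta^{1+1/\rho}$ residual. The control of $V$ then happens inside the remaining \emph{lower} bound $\EE_{\thp}\sbr{\one(\cdot)\,e^{-\frac{1+\rho}{2}V}}$, where symmetrization (via the symmetric-algorithm property, Lemma~\ref{lem:algas}) followed by Jensen now works in the favorable direction: the average of $e^{-(\cdot)}$ over the legal permutations is at least $e^{-(\cdot)}$ of the average, and that average is bounded deterministically by the proxy (Lemma~\ref{lemma: KL rand to Const}), with the ratio $|R_l|/|Q'_{\frac14 s-l}|$ of legal permutations accounted for by Lemma~\ref{lemma:ratio of pipi and RQ}. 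A secondary issue with your sketch is that the averaging must be over the \emph{legal} permutations (those mapping $\hS$ into $R_l$), not over all placements of the support; this conditioning is needed to recover the event $\cbr{\hS\in R_l}$ on the right-hand side of the claim. To repair your proof you would have to replace the fixed-threshold Chernoff step by the self-normalized comparison and move the symmetrization to the $e^{-(\cdot)}$ side — at which point you have reproduced the paper's argument.
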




Now, decide $\rho$ and $\delta$ later and continuing from the previous inequality with $KL(\rho,\delta,\tau)=\frac{1}{2}\epsilon^2 (1+\rho)(\GIANTKLtau)+\frac{1}{\rho} \ln \frac1{\delta}$, 
\begin{align*}
(\ref{eqn: where claim 1 is going})&\ge \frac{|Q_{\fr{3}{4}s  - l}|}{|R_l|} \del{\PP_{\th'} \del{ \hS \in R_l, T(\cH) \le \tau} \exp(-KL(\rho,\delta,\tau))-\delta^{1+\frac{1}{\rho}}} \tag{Claim \ref{claim: KL divergence}}
\\
&= |Q_{\fr{3}{4}s  - l}| \PP_{\th'} \del{ \hS = E, T(\cH) \le \tau} \exp(-KL(\rho,\delta,\tau))- \frac{|Q_{\fr 3 4 s- l}|}{|R_l|}\delta^{1+\frac{1}{\rho}}  \tag{Lemma \ref{lemma: symmetry} and \ref{lemma: symmetry_E1E2 specific}}
\\
&=
\fr{|Q_{\fr{3}{4}s  -l }|}{|Q_{\fr s 4 - l }'|} \PP_{\th'} \del{ \hS \in Q_{\fr s 4 - l}', T(\cH) \le \tau} \exp(-KL(\rho,\delta,\tau)) -\frac{|Q_{\fr 3 4 s- l}|}{|R_l|}\delta^{1+\frac{1}{\rho}}\tag{Lemma \ref{lemma: symmetry} and \ref{lemma: symmetry_E1E2 specific}}
\\
&= 
\fr{|Q_{\fr{3}{4}s  -l }|}{|Q_{\fr s 4 - l }'|} \del{\PP_{\th'} \del{ \abs{ M_\thp(\hS) } =\frac{s}{4}-l , T(\cH) \le \tau } \exp(-KL(\rho,\delta,\tau))-\frac{|Q_{\fr s 4 - l}'|}{|R_l|}\delta^{1+\frac{1}{\rho}}}
\\
&\geq \fr{|Q_{\fr{3}{4}s  -l }|}{|Q_{\fr s 4 - l }'|} \del{\PP_{\th'} \del{ \abs{ M_\thp(\hS) } =\frac{s}{4}-l , T(\cH) \le \tau } \exp(-KL(\rho,\delta,\tau))-s\delta^{1+\frac{1}{\rho}}} \tag{Lemma \ref{R vs Q'}}
\end{align*}
For the last inequality, we used the following lemma. 
\begin{lemma}\label{R vs Q'}
For $d>(s+1)^2$, $s>5$, and $l \in [\frac{s}{4}]$, we have $\frac{|Q'_{\fr{s}{4}-l}|}{|R_l|} < s$.
\chicheng{Can we merge this with the lemma below, which says that 
$\frac{|Q_{\fr{3s}{4}  - l}|}{|Q_{\fr s 4 - l }'|} \geq \exp(\Omega(s))$?
}
\end{lemma}
In short,
\begin{align*}
  \PP_\th(M_\th (\hS)= \fr34 s -l, T(\cH) \le \tau ) 
  &\ge \frac{|Q_{\fr{3s}{4}  - l}|}{|Q_{\fr s 4 - l }'|}  \del{\PP_{\th'}(M_\thp (\hS) = \fr s 4 - l, T(\cH) \le \tau) \exp( -KL(\rho,\delta,\tau)) -s\delta^{1+\frac{1}{\rho}}}~.
\end{align*}
Let $Y=\min_{l \in [\fr{s}{4}]} \frac{|Q_{\fr{3s}{4}  - l}|}{|Q_{\fr s 4 - l }'|}$. Then, 
\begin{align*}
  \PP_\th(M_\th(\hS) = \fr34 s -l, T(\cH) \le \tau ) 
  &\ge Y \del{\PP_{\th'}(M_\thp(\hS) = \fr s 4 - l, T(\cH) \le \tau) \exp( -KL(\rho,\delta,\tau))-s\delta^{1+\frac{1}{\rho}}}~.
\end{align*}
Summing up both sides for $l \in [\frac{s}{4}]$,
\begin{align*}
  \xi&\ge \sum_{l=1}^{\fr{s}{4}}\PP_\th(M = \fr34 s - l, T(\cH) \le \tau) 
\\ &\ge Y \del{\PP_{\th'}(M \le \fr s 4 - 1, T(\cH) \le \tau) \exp(-KL(\rho,\delta,\tau))-\fr{s^2}{4}\delta^{1+\frac{1}{\rho}}}
\\ &\ge Y \del{\PP_{\th'}(M \le \fr s 4 - 1) - \fr{\EE_{\th'}[T(\cH)]}{\tau} }  \exp(-KL(\rho,\delta,\tau)) - \frac{s^2 Y}{4} \delta^{1+\frac{1}{\rho}} \tag{$\PP(A,B) \ge \PP(A) - \PP(\bar B)$; Markov's ineq.}
\\ &\ge Y \del{1-\xi- \fr{\EE_{\th'}[T(\cH)]}{\tau} }  \exp(-KL(\rho,\delta,\tau))- \frac{s^2 Y}{4} \delta^{1+\frac{1}{\rho}}
\\ &= Y \del{1-2\xi}  \exp(-KL(\rho,\delta,\EE_{\th'}[T(\cH)]/\xi))- \frac{s^2 Y}{4} \delta^{1+\frac{1}{\rho}} \tag{set $\tau = \EE_{\th'}[T(\cH)]/\xi$}\\
&\geq \fr{Y}{2}\exp(-KL(\rho,\delta,\EE_{\th'}[T(\cH)]/\xi))- \frac{s^2 Y}{4} \delta^{1+\frac{1}{\rho}} \tag{setting $\xi\leq \fr{1}{4}$}
\end{align*}
Setting $\delta=(\frac{4\xi}{s^2 Y})^{\frac{\rho}{\rho+1}}$, $\rho=3$ and rearranging the last equation with sufficiently large $s$ we get:
\begin{align*}
  \EE_{\th'}[T(\cH)] \ge \frac{\xi}{77s\kappa^2} \del{\frac{1}{4\epsilon^2} \ln \frac{Y}{4\xi} - \frac{4ns^3 }{d}} \tag{Lemma \ref{claim: scale of Y}}
\end{align*}

\chicheng{
I don't know if the following presentation is cleaner; but for what it is worth: 
starting with 
\[ 
\frac{1}{4} \geq \frac{1}{4} Y ( \exp(-KL(\rho, 4\EE_\thp\sbr{T(\cH)})) \delta^{\frac1\rho} - s^2 \delta^{1 + \frac1\rho}  )
\]
where $KL(\rho, \tau) = \frac{(1+\rho)}{2} \cdot \epsilon^2 \cdot \del{{\fr{4s^3n}{d}+  77 s \kap^2 \tau}}$. 
Setting $\rho = 1$, and 
$\delta = \frac1{2s^2} \exp(-KL(\rho, 4\EE_\thp\sbr{T(\cH)}))$, we get 
\[
1 \geq \frac{Y}{s^2} \cdot \exp(-2 KL(\rho, 4\EE_\thp\sbr{T(\cH)}))
\]
which also implies that 
\[
\fr{4s^3n}{d}+ 4 \cdot 77 s \kap^2 \EE_{\thp}\sbr{ T(\cH) }
\geq 
\frac{1}{2}\ln\frac{Y}{s^2}
\]
and therefore, 
\[
\EE_{\thp}\sbr{ T(\cH) } \geq \frac{1}{4 \cdot 77 s \kap^2} \del{ \frac1{2\epsilon^2} \ln\frac{Y}{s^2} - \frac{4 n s^3}{d} }
\]
}

{Recall that in the construction, $\epsilon=\kappa^{-2/3}s^{-1/3}n^{-1/3}$; with this choice of $\epsilon$ and our assumption that $d \geq 
 \kappa^{-4/3}s^{4/3}n^{1/3}$, we have 
 $\frac{4 n s^3}{d} \leq \frac{s}{256 \epsilon^2}$. 
 On the other hand, $\frac{1}{2\epsilon^2} \ln\frac{Y}{s^2} \geq \frac{s}{128 \epsilon^2}$. Combining the above, we conlcude that 
 $\EE_{\th'}[T(\cH)]\geq \Omega(\frac{1}{\kappa^2 \epsilon^2})$.
}

and using the following Lemma \ref{claim: scale of Y} leads the conclusion that the order of $\EE_{\th'}[T(\cH)]\geq \Omega(\frac{1}{\kappa^2 \epsilon^2})$.

\begin{lemma}\label{claim: scale of Y}
When $s\geq 500$, $\ln Y \geq \frac{s}{72}$, and $\ln \frac{s^2 Y}{4\xi} \leq 2 \ln \frac{Y}{4\xi}$.
\end{lemma}

\subsubsection{Proof of Claim \ref{claim: KL divergence}}
\label{sec:claim-kl}

{The proof of} This claim consists of two parts - {first, we prove that with high probability, the probability ratio between the two hypotheses is controlled in terms of the ``empirical KL divergence'' $\sum_{t=1}^n \inner{\theta - \theta'}{A_t}^2$ (Lemma~\ref{lem: emp KL to KL - bigger}). 
}, and {second, we use symmetry to upper bound the negative exponential of empirical KL divergence.}. {We start with the first part.} 

\chicheng{Removed the ``uniform over $T$'' quantifier to make the statement simpler.}
\begin{lemma}\label{lem: emp KL to KL - bigger} 
For $\rho>0$, let $$B_{\theta', \theta}(\rho):= \cbr{ \sum_{t=1}^{n}\ln (\frac{p_{\theta'} (r_t|A_t)}{p_{\theta}(r_t|A_t)}) \geq (1+\rho)\sum_{t=1}^n \inner{\theta - \theta'}{A_t}^2 + \frac{1}{\rho}\ln \frac 1 \delta}$$ 
Then, $\PP_{\theta'}(B_{\theta, \theta'}(\rho))\leq \delta$.
\end{lemma}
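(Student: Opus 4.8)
The plan is to recognize $B_{\theta',\theta}(\rho)$ as a time-uniform deviation event for a martingale and to control it with Ville's maximal inequality. First I would make the Gaussian likelihoods explicit. Under $\PP_{\theta'}$ we may write $r_t = \inner{a_t}{\theta'} + \eta_t$ with $\eta_t \sim N(0,1)$, and setting $\Delta_t := \inner{a_t}{\theta - \theta'}$ a direct computation gives
\[
\ln \frac{p_{\theta'}(r_t|a_t)}{p_\theta(r_t|a_t)} = -\eta_t \Delta_t + \tfrac12 \Delta_t^2, \qquad \mathsf{KL}(p_{\theta'}(\cdot|a_t), p_\theta(\cdot|a_t)) = \tfrac12 \Delta_t^2.
\]
Substituting both identities into the definition of $B_{\theta',\theta}(\rho)$ and cancelling the common $\sum_t \tfrac12\Delta_t^2$ term, the event becomes: there exists $T \ge 1$ with
\[
\sum_{t=1}^T (-\eta_t \Delta_t) \ge \rho \sum_{t=1}^T \tfrac12 \Delta_t^2 + \tfrac1\rho \ln\tfrac1\delta .
\]
So it suffices to bound the probability of this last event.

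Next I would introduce the filtration $\mathcal{F}_t = \sigma((A_s, r_s)_{s \le t})$ and use the crucial structural fact that the algorithm is sequential: $a_t$, hence $\Delta_t$, is $\mathcal{F}_{t-1}$-measurable, while $\eta_t \mid \mathcal{F}_{t-1} \sim N(0,1)$. Then for the fixed choice $\lambda = \rho$ the process
\[
W_T := \exp\del{ \rho \sum_{t=1}^T (-\eta_t \Delta_t) - \frac{\rho^2}{2} \sum_{t=1}^T \Delta_t^2 }
\]
is a nonnegative martingale with $\EE_{\theta'}[W_0] = 1$, since $\EE_{\theta'}[\exp(-\rho \eta_t \Delta_t)\mid \mathcal{F}_{t-1}] = \exp(\tfrac{\rho^2}{2}\Delta_t^2)$ by the conditional Gaussian moment generating function.

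Finally I would apply Ville's inequality for nonnegative supermartingales, which yields $\PP_{\theta'}(\exists T: W_T \ge 1/\delta) \le \delta \, \EE_{\theta'}[W_0] = \delta$. Unwinding $W_T \ge 1/\delta$ gives exactly $\sum_{t=1}^T(-\eta_t\Delta_t) \ge \frac{\rho}{2}\sum_{t=1}^T\Delta_t^2 + \frac1\rho\ln\frac1\delta$, i.e. the reduced event above; hence $\PP_{\theta'}(B_{\theta',\theta}(\rho)) \le \delta$, as claimed.

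The routine parts are the two Gaussian identities; the real content—and the step to get right—is the time-uniformity. A naive Chernoff bound at a single horizon $T$ would only control the deviation at that fixed $T$, whereas $B_{\theta',\theta}(\rho)$ quantifies over all $T \ge 1$; this is precisely why the exponential process must be combined with a maximal (Ville) inequality rather than a pointwise Markov bound, and why the adaptive $\mathcal{F}_{t-1}$-measurability of $\Delta_t$ (rather than treating it as deterministic) is essential for $W_T$ to be a genuine martingale under $\PP_{\theta'}$.
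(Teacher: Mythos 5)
Your proposal is correct and is essentially the paper's own argument: the paper defines $H_t = \exp\bigl(\rho\bigl(\sum_{s\le t} J_s - (1+\rho)\sum_{s\le t}\EE_{\theta'}[J_s\mid a_s]\bigr)\bigr)$ with $J_s = \ln\frac{p_{\theta'}(r_s|a_s)}{p_\theta(r_s|a_s)}$, verifies the supermartingale property via the (sub)Gaussian moment generating function, and applies Ville's maximal inequality—and after substituting the Gaussian identities $J_s = -\eta_s\Delta_s + \tfrac12\Delta_s^2$ and $\EE_{\theta'}[J_s\mid a_s]=\tfrac12\Delta_s^2$, your process $W_T$ is literally identical to the paper's $H_t$. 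The only cosmetic difference is that you make the Gaussian reduction explicit up front (obtaining an exact martingale), whereas the paper carries the generic log-likelihood-ratio form and invokes $1$-subgaussianity inside the supermartingale verification.
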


\begin{proof}
Let 
$\cB_t = \sigma\cbr{A_1, r_1, \ldots, A_t, r_t, A_{t+1}}$ be the $\sigma$-field of all observations upto time step $t$ and the action at time step $t+1$. 

\begin{equation}
\blue{J_t} :=\ln (\frac{p_{\theta'} (r_t|A_t)}{p_{\theta}(r_t|A_t)}) = \frac{(r_t - \inner{\theta}{A_t})^2}{2} - \frac{(r_t - \inner{\theta'}{A_t})^2}{2}
= 
\eta_t \inner{\theta - \theta'}{A_t} + \frac{\inner{\theta' - \theta}{A_t}^2}{2} 
. 
\label{eqn:j-t}
\end{equation}
Then 
\begin{align}
    \EE_{\theta'} \sbr{ J_t \mid \cB_{t-1}} &= \frac{\inner{\theta' - \theta}{A_t}^2}{2} .
\label{eqn:j-t-exp}
\end{align}
Therefore, 
\[
\blue{J_t} - \EE_{\theta'} \sbr{ J_t \mid \cB_{t-1}} 
= \eta_t \inner{\theta - \theta'}{A_t}. 
\]
As a consequence, for any $\rho \in \RR$, 
\[
\EE_{\theta'} \sbr{ \exp(\rho( \blue{J_t} - \EE_{\theta', t-1} \sbr{ J_t \mid \cB_{t-1}}  )) \mid \cB_{t-1} }
= \exp(\rho^2  \EE_{\theta'} \sbr{ J_t \mid \cB_{t-1}});
\]
in other words, 
\[
\EE_{\theta'} \sbr{ \exp(\rho( \blue{J_t} - (1+\rho)\EE_{\theta', t-1} \sbr{ J_t \mid \cB_{t-1}}  )) \mid \cB_{t-1} } = 1.
\]



Now let ${H_t} = \exp(\rho (\sum_{s=1}^t J_s - (1+\rho)\EE_{\thp}[J_s|\cB_{s-1}]) $ with $H_0=1$. It can be seen that under $\theta'$,  $\cbr{H_t}_{t=1}^T$ is a nonnegative supermartingale with respect to filtration $\cbr{\cB_t}_{t=1}^T$; indeed, 
\begin{align*}
    \EE_{\theta'} [H_t \mid \cB_{t-1}] 
    &= \EE_{\theta'}\sbr{ \exp(\rho( \sum_{s=1}^t J_s - (1+\rho)\EE_{\thp}[J_s|\cB_{s-1}])) \mid \cB_{t-1}} \\
    &= H_{t-1} \EE_{\theta'}\sbr{ \exp(\rho ( J_t - (1+\rho)\EE_{\thp}[J_t|\cB_{t-1}]) ) \mid \cB_{t-1} }
    \\
    & = H_{t-1}.
\end{align*}

Finally, using Markov's inequality on $H_n$, we have
\[
\PP\del{ \exp\del{ \rho \cdot \sum_{t=1}^n \del{ J_t - (1+\rho)\EE_{\thp}[J_t|\cB_{t-1}] } } \geq \frac{1}{\delta} } \leq \delta. 
\]
The lemma is concluded by plugging in~\eqref{eqn:j-t} and~\eqref{eqn:j-t-exp} and algebra, and using the assumption that $\rho > 0$. 
\end{proof}

By the above Lemma \ref{lem: emp KL to KL - bigger}, one can {deduce} the following relationship. 


\begin{align}
    & \EE_{\th'}\sbr{ \one(\hS \in R_l, T(\cH) \le \tau)\exp(-\sum_{t=1}^n \ln\frac{p_{\th'} (r_t|a_t) }{p_{\th}(r_t|a_t)}) }
    \\
    &=
    \EE_{\th'} \sbr{ \one\del{\hS \in R_l, T(\cH) \le \tau, B_{\theta, \theta'}(\rho)^c} \exp\del{ -\frac{1}{2}(1+\rho)\sum_{t=1}^n \inner{A_t}{\tilth}^2-\frac{1}{\rho} \ln \frac1{\delta} } } \tag{Lemma \ref{lem: emp KL to KL - bigger}}
    \\
    &=
    \underbrace{\EE_{\th'}\sbr{\one\del{\hS \in R_l, T(\cH) \le \tau, B_{\theta, \theta'}(\rho)^c}\exp\del{-\frac{1}{2}(1+\rho)\sum_{t=1}^n \inner{A_t}{\tilth}^2 }}}_{\text{(X)}}\exp(-\frac{1}{\rho} \ln \frac1{\delta})
    \label{eqn:intro-x}
\end{align}

For the remaining part of the proof, we lower bound (X).
\begin{align}
\text{(X)} = & \EE_{\th'}\sbr{ \one\del{\hS \in R_l, T(\cH) \le \tau} 
\del{ 1 - \one\del{B_{\theta, \theta'}(\rho)} } \exp\del{-\frac{1}{2}
(1+\rho)\sum_{t=1}^n \inner{A_t}{\tilth}^2 } } \\
\geq & \EE_{\th'}\sbr{ \one\del{\hS \in R_l, T(\cH) \le \tau }\exp\del{-\frac{1}{2}(1+\rho)\sum_{t=1}^n \inner{A_t}{\tilth}^2 } } - \delta
\label{eqn:minus-delta}
\end{align}

Define $\Pi = \cbr{ \pi_1 \circ \pi_2: \pi_1 \in \sym([1:s]), \pi_2 \in \sym([s+1:d]) }$. 
\chicheng{I don't think we need to vary $\pi_1$ in $\Pi$'s definition. (7/19) I take it back. To allow any element in $Q_a'$ to be mapped to any other element in $Q_a'$ using $\Pi$, we need to have the first part $\pi_1$.}
Importantly, for any $\pi \in \Pi$, $\pi(\theta') = \theta'$.

We focus on the first term in the above expression:
\begin{align}
  & \EE_{\th'}\sbr{ \one\del{\hS \in R_l, T(\cH) \le \tau }\exp\del{-\frac{1}{2}(1+\rho)\sum_{t=1}^n \inner{A_t}{\tilth}^2 } } \nonumber\\
  = & 
  \frac{1}{|\Pi|} \sum_{\sigma \in \Pi} \EE_{\sigma(\th')}\sbr{ \one\del{\sigma^{-1}(\hS) \in R_l} \cdot
  \one\del{\sum_{t=1}^n \indic(\sigma^{-1}(A_t) \in \cH) \le \tau } \exp\del{-\frac{1}{2}(1+\rho)\sum_{t=1}^n \inner{\sigma^{-1}(A_t)}{\tilth}^2 } } \tag{Lemma~\ref{lem:algas}}\nonumber\\
  = & 
  \EE_{\th'}\sbr{ \del{ 
  \frac{1}{|\Pi|} \sum_{\sigma \in \Pi} \one\del{\sigma^{-1}(\hS) \in R_l }\exp\del{-\frac{1}{2}(1+\rho)\sum_{t=1}^n \inner{\sigma^{-1}(A_t)}{\tilth}^2 } } \cdot \indic\del{ T(\cH) \le \tau } },
  \label{eqn:symmetrization-unified}
\end{align}
where the second equality uses the fact that for any $\sigma \in \Pi$, $\sigma(\theta') = \theta'$, and $\sum_{t=1}^n \indic(\sigma^{-1}(A_t) \in \cH) = \sum_{t=1}^n \indic(A_t \in \cH) = T(\cH)$. 

Now for any realization of $\bA, \hat{S}$, namely, $\ba = (a_1, \ldots, a_n)\in \cA^n, \hatsr \in \mathsf{Sub}_{d/2}$, we lower bound the quantity 
\[
\frac{1}{|\Pi|} \sum_{\sigma \in \Pi} \one\del{\sigma^{-1}(\hatsr) \in R_l }\exp\del{-\frac{1}{2}(1+\rho)\sum_{t=1}^n \inner{\sigma^{-1}(a_t)}{\tilth}^2 }
\]
in the following claim. 

\begin{claim}
For any set $\hatsr \in \mathsf{Sub}_{d/2}$, and any $a_1, \ldots, a_n$,
\begin{align*}
& \frac{1}{|\Pi|} \sum_{\sigma \in \Pi} \one\del{\sigma^{-1}(\hatsr) \in R_l }\exp\del{-\frac{1}{2}(1+\rho)\sum_{t=1}^n \inner{\sigma^{-1}(a_t)}{\tilth}^2 }\\
\geq &
\indic\del{ \hatsr \in Q'_{\frac14 s - l} }
\cdot 
\frac{|R_l|}{|Q'_{\frac14 s - l}|} \cdot
\exp\del{ -\frac12 (1+\rho)( \GIANTKL) \epsilon^2 }
\end{align*}
\end{claim}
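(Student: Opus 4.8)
The plan is to exploit that $\tilth = \theta - \theta'$ is supported exactly on the block $[s+1:2s]$ with every entry equal to $\epsilon$, so that for $\sigma = \pi_1 \circ \pi_2 \in \Pi$ (with $\pi_1 \in \sym([1:s])$, $\pi_2 \in \sym([s+1:d])$) both the indicator and the exponent depend on $\sigma$ only through the image set $J_\sigma := \sigma([s+1:2s]) = \pi_2([s+1:2s]) \subseteq [s+1:d]$; indeed $\inner{\sigma^{-1}(a_t)}{\tilth} = \epsilon \sum_{j \in J_\sigma}(a_t)_j$. First I would dispose of the indicator $\one(\sigma^{-1}(\hatsr) \in R_l)$. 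Since each $\sigma \in \Pi$ fixes the blocks $[1:s]$ and $[s+1:d]$ setwise, $|\sigma^{-1}(\hatsr) \cap [1:s]| = |\hatsr \cap [1:s]|$ is independent of $\sigma$ and equals $\frac34 s + l$ precisely when $\hatsr \in Q'_{\frac14 s - l}$, while $|\sigma^{-1}(\hatsr) \cap [s+1:2s]| = |J_\sigma \cap U'|$ with $U' := \hatsr \cap [s+1:d]$. Unpacking $R_l = Q_{\frac34 s - l} \cap Q'_{\frac14 s - l}$ into $\{S : |S \cap [1:s]| = \frac34 s + l,\ |S \cap [s+1:2s]| = \frac12 s\}$, I can therefore factor $\one(\sigma^{-1}(\hatsr) \in R_l) = \indic(\hatsr \in Q'_{\frac14 s - l}) \cdot \indic(|J_\sigma \cap U'| = \frac12 s)$ and pull the first factor, which is exactly the desired prefactor-indicator, out of the sum.

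Next I would apply Jensen's inequality to the convex map $x \mapsto e^{-x}$ over the $N_b := |\{\sigma \in \Pi : |J_\sigma \cap U'| = \frac12 s\}|$ surviving terms, giving a lower bound $\tfrac{N_b}{|\Pi|} \exp(-\tfrac{1}{N_b}\sum_\sigma x_\sigma)$, where $x_\sigma = \frac12(1+\rho)\epsilon^2 \sum_{t=1}^n (\sum_{j \in J_\sigma}(a_t)_j)^2$. To identify the prefactor I would run an orbit-counting argument: as $\sigma$ ranges uniformly over $\Pi$, the set $\sigma^{-1}(\hatsr)$ is uniform over its orbit, which is exactly $Q'_{\frac14 s - l}$ (all $\frac d2$-sets with the same $[1:s]$-count as $\hatsr$), and $R_l$ is the sub-collection of that orbit additionally meeting $|S \cap [s+1:2s]| = \frac12 s$. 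Hence $\tfrac{N_b}{|\Pi|} = \tfrac{|R_l|}{|Q'_{\frac14 s - l}|}$, matching the claim.

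It then remains to bound the conditional average $\frac{1}{N_b}\sum_\sigma \sum_{t=1}^n (\sum_{j \in J_\sigma}(a_t)_j)^2 \le \GIANTKL$, where conditionally $J_\sigma$ is a uniform choice of $\frac12 s$ coordinates from $U'$ together with $\frac12 s$ from $[s+1:d] \setminus U'$. I would split the rounds by arm type. For $a_t \in \cI$ the crude Cauchy--Schwarz bound $(\sum_{j \in J}(a_t)_j)^2 \le s \sum_{j \in J}(a_t)_j^2$ suffices: taking the stratified expectation and using that $a_t$ has at most $2s$ nonzero $\pm 1$ entries while both strata have size $\ge \frac d2(1-o(1))$ (guaranteed by $d \ge (s+1)^2$), each $\cI$ round contributes $O(s^3/d)$, hence at most $\frac{4 s^3 n}{d}$ in total. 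For $a_t \in \cH$, Cauchy--Schwarz only yields $s^2\kappa^2$, so instead I would compute the second moment directly, writing $\sum_{j \in J}(a_t)_j = X + Y$ for the independent stratum contributions and bounding $\var(X) + \var(Y) \le s\kappa^2$ (each stratum entry is $\pm\kappa$) together with $(\EE X + \EE Y)^2 = \frac{s^2}{4}(\bar v_{U'} + \bar v_{U'^c})^2$.

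The main obstacle is precisely this mean-term for the $\cH$ rounds: naively $(\EE X + \EE Y)^2$ could be of order $s^2\kappa^2$, but the balancedness constraint $|\sum_{j=1}^d (a_t)_j| \le \kappa\sqrt{2 d \ln 2d}$ forces the two stratum means to nearly cancel. Writing $G = \sum_{j \in [s+1:d]}(a_t)_j$ with $|G| \le \kappa\sqrt{2d\ln 2d} + s\kappa$ and using $\big||U'| - |U'^c|\big| \le s$, I would show $|\bar v_{U'} + \bar v_{U'^c}| = O(\frac{\kappa s}{d} + \kappa\sqrt{\frac{\ln d}{d}})$, whence $(\EE X + \EE Y)^2 = O(\frac{s^2 \kappa^2 \ln d}{d}) = O(\kappa^2 \ln s) = O(s\kappa^2)$ under $d \ge (s+1)^2$. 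Collecting the variance and mean terms and absorbing all absolute constants into $77$ gives a per-$\cH$-round bound $77 s\kappa^2$, hence $77 s \kappa^2 T(\cH;a)$, completing the target $\GIANTKL$. I expect the delicate part to be the exact constant bookkeeping in this stratified second-moment estimate, in particular tracking the without-replacement correction factors and the stratum-size lower bounds $\ge \frac d2 - O(s)$ so that the leading constants ($4$ for the $\cI$ term and the $77$ for the $\cH$ term) come out as stated.
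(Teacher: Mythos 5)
Your proposal is correct, and its skeleton coincides with the paper's proof: factor the indicator using that every $\sigma \in \Pi$ preserves the blocks $[1:s]$ and $[s+1:d]$ (so $\one\del{\sigma^{-1}(\hatsr)\in R_l}$ splits into $\indic\del{\hatsr \in Q'_{\frac14 s - l}}$ times a legality condition on $\sigma$ alone), apply Jensen over the legal permutations, identify the prefactor $N_b/|\Pi| = |R_l|/|Q'_{\frac14 s - l}|$, and bound the averaged quadratic form by treating $\cI$-rounds and $\cH$-rounds separately. The differences are in how two sub-steps are executed, and both of your variants are sound. For the prefactor, the paper proves $|\Pi_{legal,\hatsr}|/|\Pi| = |R_l|/|Q'_{\frac14 s - l}|$ by double counting pairs of sets and permutations, while you use uniformity of $\sigma^{-1}(\hatsr)$ over its orbit $Q'_{\frac14 s - l}$; these are equivalent, yours being slightly slicker. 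For the $\cH$-rounds (the paper's Lemma on bounding the averaged KL term), the paper expands the square into a diagonal term plus cross terms and evaluates the expected cross term by exhaustive enumeration of the configuration counts $c_+, c_-, m_+, m_-$ with binomial-coefficient ratios, then invokes the balancedness constraint $|\sum_{j} a_{tj}| \le \kappa\sqrt{2d\ln 2d}$ to force cancellation; you instead write the stratified sum as $X+Y$ and use $\EE[(X+Y)^2]=\var(X)+\var(Y)+(\EE X+\EE Y)^2$, bounding the variances by their with-replacement counterparts ($\le s\kappa^2$ in total) and the squared mean via the same cancellation. This reorganizes the paper's computation (its $(Z'_{2,1})$ is essentially your squared-mean term) while avoiding most of the explicit binomial bookkeeping. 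The one remaining obligation, which you correctly flag, is that the claim asserts the explicit constants $4$ and $77$, so the deferred constant tracking must be done: the $\cI$-term needs $\min(k, d-s-k) \ge d/4$ (true under the standing assumptions $d \ge (s+1)^2$, $d \ge 16$) to produce $4s^3n/d$, and the squared-mean term must fit in the remaining $76\, s\kappa^2$ budget, which it does since $s^2\ln(2d)/d \lesssim \ln(s+1)$ when $d \ge (s+1)^2$; this is routine and parallels the paper's own bookkeeping.
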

\begin{proof}
If $\hatsr \notin Q_{\frac14 s - l}'$, then for any permutation $\sigma \in \Pi$, it must be the case that $\sigma^{-1}(\hatsr) \notin Q'_{\frac14 s - l}$, and therefore, $\sigma^{-1}(\hatsr) \notin R_l$. In this case, both sides are equal to zero and the claim is trivially true.

Otherwise, $\hatsr \in Q'_{\frac14 s - l}$. 
Define $\Pi_{\legal}(l, \hatsr) = \cbr{ \sigma \in \Pi: \sigma^{-1}(\hatsr) \in R_l }$. 
Using this notation, the left hand side can be equivalently written as:
\begin{align*}
& \frac{1}{|\Pi|} \sum_{\sigma \in \Pi_{\legal}(l,\hatsr)} \exp\del{-\frac{1}{2}(1+\rho)\sum_{t=1}^n \inner{\sigma^{-1}(a_t)}{\tilth}^2 } \\
= &
\frac{|\Pi_{\legal}(l,\hatsr)|}{|\Pi|} \cdot \frac{1}{|\Pi_{\legal}(l,\hatsr)|} \sum_{\sigma \in \Pi_{\legal} (l, \hatsr)} \exp\del{-\frac{1}{2}(1+\rho)\sum_{t=1}^n \inner{\sigma^{-1}(a_t)}{\tilth}^2 } \\
\geq & 
\frac{|\Pi_{\legal} (l, \hatsr)|}{|\Pi|} \cdot  \exp\del{-\frac{1}{2}(1+\rho) \del{  \frac{1}{|\Pi_{\legal} (l, \hatsr)|} \sum_{\sigma \in \Pi_{\legal} (l, \hatsr)} \sum_{t=1}^n \inner{\sigma^{-1}(a_t)}{\tilth}^2 } } \tag{Jensen}\\
\geq & 
\frac{|\Pi_{\legal} (l, \hatsr)|}{|\Pi|} \cdot
\exp\del{ -\frac12 (1+\rho)( \GIANTKL) \epsilon^2 } \tag{Lemma \ref{lemma: KL rand to Const}}\\
= & 
\frac{|R_l|}{|Q'_{\frac14 s - l}|} \cdot
\exp\del{ -\frac12(1+\rho) (\GIANTKL) \epsilon^2 } \tag{Lemma \ref{lemma:ratio of pipi and RQ}}
\end{align*}
Here, the last two steps rely on Lemmas~\ref{lemma: KL rand to Const} and~\ref{lemma:ratio of pipi and RQ} respectively; we defer their statements and proofs to the end of this section.
\end{proof}


We now continue Equation~\eqref{eqn:symmetrization-unified} to conclude the proof of Claim~\ref{claim: KL divergence}:
let $E$ be an arbitrary element of $R_l$; 

\begin{align*}
  & \eqref{eqn:symmetrization-unified} \\
  \geq & 
    \EE_{\th'}\sbr{ \indic\del{ \hat{S} \in Q'_{\frac14 s - l}, T(\cH) \le \tau }
     }  
     \cdot 
    \frac{|R_l|}{|Q'_{\frac14 s - l}|} \cdot
    \exp\del{ -\frac12 (1+\rho)( \GIANTKLtau) \epsilon^2 } \\
    = & 
    \EE_{\th'}\sbr{ \indic\del{ \hat{S} = E, T(\cH) \le \tau }
     }  
     \cdot 
    |R_l| \cdot
    \exp\del{ -\frac12 (1+\rho)( \GIANTKLtau) \epsilon^2 } \\
    = & 
    \EE_{\th'}\sbr{ \indic\del{ \hat{S} \in R_l, T(\cH) \le \tau }
     }  
     \cdot
    \exp\del{ -\frac12 (1+\rho)( \GIANTKLtau) \epsilon^2 }
\end{align*}
Claim~\ref{claim: KL divergence} is now concluded by plugging this inequality back to Eq.~\eqref{eqn:minus-delta}, and back to Eq.~\eqref{eqn:intro-x}. 


\paragraph{Deferred lemmas and proofs.} For the remainder of this subsection, we present the statements of Lemmas~\ref{lemma:ratio of pipi and RQ} and~\ref{lemma: KL rand to Const} along with their proofs.


\begin{lemma}\label{lemma:ratio of pipi and RQ}
For any $u \in Q'_{\frac14 s-l}$, 
\[
\frac{|\Pi_{\legal} (l, \hatsr)|}{|\Pi|} = \frac{|R_l|}{|Q'_{\frac14 s - l}|}
\]
\end{lemma}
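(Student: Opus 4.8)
The plan is to treat $\Pi$ as a group acting on the family $Q'_{\frac14 s - l}$ of admissible outputs, to show that this action is transitive, and then to read off the identity from a one-line orbit--stabilizer count. First I would record the two structural facts that make this work. Every $\sigma \in \Pi$ fixes $\theta'$ (this is noted just above, $\pi(\theta')=\theta'$ for $\pi\in\Pi$), hence $\sigma(\supp(\theta')) = \supp(\theta')$; consequently, for any size-$\frac d2$ set $S$, applying the bijection $\sigma$ inside the set difference gives $M_{\theta'}(\sigma^{-1}(S)) = |\supp(\theta') \sm \sigma^{-1}(S)| = |\sigma(\supp(\theta')) \sm S| = |\supp(\theta') \sm S| = M_{\theta'}(S)$, so $S \mapsto \sigma^{-1}(S)$ maps $Q'_{\frac14 s - l}$ into itself. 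Second, since $\theta'$ is supported (within $[d]$) on $[1:s]$, a set $S \in \mathsf{Sub}_{d/2}$ lies in $Q'_{\frac14 s - l}$ if and only if $|S \cap [1:s]|$ equals a single value fixed by $l$; because $|S| = \frac d2$ is also fixed, $|S \cap [s+1:d]|$ is then pinned down as well.

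Next I would prove transitivity. Given $S_1, S_2 \in Q'_{\frac14 s - l}$, the second fact yields $|S_1 \cap [1:s]| = |S_2 \cap [1:s]|$ and $|S_1 \cap [s+1:d]| = |S_2 \cap [s+1:d]|$. Picking $\pi_1 \in \sym([1:s])$ that carries $S_1 \cap [1:s]$ onto $S_2 \cap [1:s]$ and $\pi_2 \in \sym([s+1:d])$ that carries $S_1 \cap [s+1:d]$ onto $S_2 \cap [s+1:d]$, the composite $\pi_1 \circ \pi_2 \in \Pi$ sends $S_1$ to $S_2$. Hence $\Pi$ acts transitively on $Q'_{\frac14 s - l}$.

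Finally I would carry out the count. Define $\phi : \Pi \to Q'_{\frac14 s - l}$ by $\phi(\sigma) = \sigma^{-1}(\hatsr)$; this is well defined by the first fact (and the hypothesis $\hatsr \in Q'_{\frac14 s - l}$) and surjective by transitivity. A standard coset argument shows each fiber $\phi^{-1}(T)$ is a coset of the stabilizer of $\hatsr$, so every fiber has the common cardinality $|\Pi| / |Q'_{\frac14 s - l}|$ by orbit--stabilizer. Since $R_l = Q_{\frac34 s - l} \cap Q'_{\frac14 s - l} \subseteq Q'_{\frac14 s - l}$ and $\Pi_{legal,\hatsr} = \phi^{-1}(R_l)$, summing fiber sizes over $T \in R_l$ gives $|\Pi_{legal,\hatsr}| = |R_l| \cdot |\Pi| / |Q'_{\frac14 s - l}|$, and dividing by $|\Pi|$ yields the claim.

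The only real obstacle is the transitivity step, i.e. confirming that $Q'_{\frac14 s - l}$ is exactly one $\Pi$-orbit rather than a union of several; this rests entirely on the observation that membership in $Q'_{\frac14 s - l}$ is governed solely by $|S \cap [1:s]|$ together with the fixed total size $\frac d2$, so that $\Pi$ (which is free to permute $[1:s]$ and $[s+1:d]$ internally) can match up any two such sets, and no finer invariant separates them. Everything else is bookkeeping with the orbit--stabilizer theorem.
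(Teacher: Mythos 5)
Your proof is correct, and while it rests on the same underlying symmetry as the paper's, it is packaged differently. The paper proves the equivalent identity $|\Pi_{legal,\hatsr}|\,|Q'_{\frac14 s - l}| = |R_l|\,|\Pi|$ by double counting the incidence set $\cbr{(\hat{u},\sigma) \in Q'_{\frac14 s - l} \times \Pi : \sigma(\hat{u}) \in R_l}$: summing over $\hat{u}$ first and invoking the fact (asserted there without proof) that $|\Pi_{legal,\hat{u}}|$ is the same for every $\hat{u} \in Q'_{\frac14 s - l}$ gives the left-hand side, while exchanging the order of summation and using that each $\sigma \in \Pi$ permutes $Q'_{\frac14 s - l}$ (so exactly $|R_l|$ of the $\hat{u}$ land in $R_l$) gives the right-hand side. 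You instead prove transitivity of the $\Pi$-action on $Q'_{\frac14 s - l}$ explicitly and then apply orbit--stabilizer: the fibers of $\sigma \mapsto \sigma^{-1}(\hatsr)$ are stabilizer cosets of common size $|\Pi|/|Q'_{\frac14 s - l}|$, and $\Pi_{legal,\hatsr}$ is the preimage of $R_l$. What your route buys is self-containedness: the constancy of $|\Pi_{legal,\hat{u}}|$ that the paper takes for granted is genuinely a transitivity statement (if $Q'_{\frac14 s - l}$ split into several $\Pi$-orbits, the count could differ across orbits), and your block-matching argument with $\pi_1 \in \sym([1:s])$, $\pi_2 \in \sym([s+1:d])$ is precisely the missing justification. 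What the paper's route buys is brevity: once the constancy is granted, the Fubini-style exchange finishes the proof in three lines without any explicit group-theoretic machinery. A minor additional robustness of your version is that it only uses that membership in $Q'_{\frac14 s - l}$ is determined by $|S \cap [1:s]|$, never the exact value of $M_{\theta'}$, so it is insensitive to whether the coordinate $d+1$ is counted in the support of $\theta'$.
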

\begin{proof}
It suffices to prove $|\Pi_{\legal} (l, \hatsr)||Q'_{\frac14 s - l}| = |R_l| |\Pi|$. 
To see this, first note that for any $\hat{u} \in Q_{\frac{1}{4}s - l}'$, 
there is some $\pi \in \Pi$ such that $\pi(u) = \hat{u}$, and thus 
$|\Pi_{\legal}(l, \hat{u})| = |\Pi_{\legal}(l,\hatsr)|$.
Next, 
note,
\begin{align*}
    \text{(LHS)} = & \sum_{\hat{\hatsr} \in Q'_{\frac14 s - l}} |\Pi_{\legal}(l, \hat{u})|  \\
    & = \sum_{\hat{\hatsr} \in Q'_{\frac14 s - l}} \sum_{\sigma \in \Pi} \indic(\sigma^{-1}(\hat{\hatsr}) \in R_l) \tag{Definition of $\Pi_{\legal, u}$} \\
    & = \sum_{\sigma \in \Pi} \sum_{\hat{\hatsr} \in Q'_{\frac14 s - l}}  \indic(\sigma^{-1}(\hat{\hatsr}) \in R_l) \tag{algebra} \\
    & = \sum_{\sigma \in \Pi} |R_l| \tag{$\sigma$ induces 1-1 mapping over sets, and $\sigma^{-1}(\hat{\hatsr}) \in R_l \implies \hat{u} \in Q_{\frac14 s - l}'$} \\
    & = \text{(RHS)}.
\end{align*}
\end{proof}
\begin{remarks}
The above lemma can also be seen by noting that for any $v \in Q_{\fr 1 4 s - l}'$, 
\begin{equation}
\sum_{\sigma \in \Pi} \indic(\sigma^{-1}(u) = v) = \sum_{\sigma \in \Pi} \indic(\sigma^{-1}(u) = u) = \sum_{\sigma \in \Pi} \indic(\sigma(u) = u) = \frac{|\Pi|}{|Q_{\fr 1 4 s - l}'|},
\label{eqn:orbit-stab}
\end{equation}
where the last equality is by the orbit-stablizer theorem (consider group $\Pi$ acting on sets in $\mathsf{Sub}_{d/2}$; $\sum_{\sigma \in \Pi} \indic(\sigma(u) = u)$ is the size of the stabilizer subgroup of $u$, and $Q_{\fr 1 4 s - l}'$ is the orbit of $u$). 
Summing Eq.~\eqref{eqn:orbit-stab} over all $v \in R_l$ yields $|\Pi_{\legal}(l, \hat{u})| = |R_l| \cdot \frac{|\Pi|}{|Q_{\frac14 s - l}'|}$, hence the lemma.
\end{remarks}


\begin{lemma}\label{lemma: KL rand to Const}
Assume $s \le \sqrt{d} $ and $d \ge 16$.
  For any $\ba \in \cA^n$ and $u \in Q'_{\fr14s - l}$,
  \begin{align}
    \frac{1}{|\Pi_{\legal}(l,u)|} \sum_{t=1}^n \sum_{\sigma \in \Pi_{\legal}(l,u)} \la \sig^{-1}(a_t), \tilth\ra^2 
    \le  \del{\GIANTKL}\eps^2
    \label{eqn:kl-rand-to-const}
  \end{align}
\end{lemma}

\begin{proof}

{
We will show the following claim:
\begin{numcases} 
{\frac{1}{|\Pi_{\legal}(l,u)|} \sum_{\sigma \in \Pi_{\legal}(l,u)} \la \sig^{-1}(a), \tilth\ra^2 \leq}
\frac{4 s^3}{d} \epsilon^2, & $a \in \cI$, 
\label{eqn:perm-i}
\\
77 s \kappa^2  \epsilon^2, & $a \in \cH$.
\label{eqn:perm-h}
\end{numcases}
The lemma follows 
from this claim by noting that the LHS of Eq.~\eqref{eqn:kl-rand-to-const} can be decomposed to 
\[
 \sum_{t: a_t \in \cI} \frac{1}{|\Pi_{\legal}(l,u)|} \sum_{\sigma \in \Pi_{\legal}(l,u)} \la \sig^{-1}(a_t), \tilth\ra^2 
 +
 \sum_{t: a_t \in \cH} \frac{1}{|\Pi_{\legal}(l,u)|} \sum_{\sigma \in \Pi_{\legal}(l,u)} \la \sig^{-1}(a_t), \tilth\ra^2 
\]
and we apply the above claim to the two terms respectively. 

We first prove Eq.~\eqref{eqn:perm-i}. For $a \in \cI$, 
\begin{align*}
&\frac{1}{|\Pi_{\legal}(l,u)|} \sum_{\sigma \in \Pi_{\legal}(l,u)} \la \sig^{-1}(a), \tilth\ra^2
\\
= &  \eps^2 \cdot \frac{1}{|\Pi_{\legal}(l,u)|} \sum_{\sigma \in \Pi_{\legal}(l,u)} (\sum_{j=s+1}^{2s} |a_{\sig(j)}|)^2
\\
\le & s \eps^2 \cdot \frac{1}{|\Pi_{\legal}(l,u)|}  \sum_{j=s+1}^{2s} \sum_{\sigma \in \Pi_{\legal}(l,u)} |a_{\sig(j)}|
\\
= & s \eps^2 \frac{1}{|\Pi_{\legal}(l,u)|} \sum_{j=s+1}^{2s} \sum_{\sigma \in \Pi_{\legal}(l,u)} \sum_{h=s+1}^{d} |a_{h}|  \indic\del{\sig(j) = h} 
\\
= & s\eps^2 \sum_{j=s+1}^{2s} \sum_{h=s+1}^{d} |a_h| \cd  \frac{1}{|\Pi_{\legal}(l,u)|} \sum_{\sigma \in \Pi_{\legal}(l,u)}   \indic\del{\sigma(j) = h} 
\\
\leq & s\eps^2 \sum_{j=s+1}^{2s} \sum_{h=s+1}^{d} |a_h| \cdot \frac{4}{d} 
\\
\leq & \frac{4 s^3 \epsilon^2}{d}
\end{align*}
where the first inequality is by Cauchy, the second inequality is due to item 1 of Lemma~\ref{lem:legal-perm}. \ja{TODO: Change the constant $\frac{4}{d}$ to something larger, and check the constant dependency. - done... maybe too large I guess?}

}

{We next prove Eq.~\eqref{eqn:perm-h}. For $a \in \cH$,
\begin{align*}
&\frac{1}{|\Pi_{\legal}(l,u)|}  \sum_{\sigma \in \Pi_{\legal}(l,u)} \la \sig^{-1}(a), \tilth\ra^2
\\
=&   
\epsilon^2 \cdot \frac{1}{|\Pi_{\legal}(l,u)|}  \sum_{\sigma \in \Pi_{\legal}(l,u)} \del{ \sum_{j=s+1}^{2s} a_{\sigma(j)} }^2
\\
=& \eps^2 \cdot \fr{1}{|\Pi_{\legal}(l,u)|} \del{ s \kappa^2 + \sum_{a,b \in [s+1:2s]: a\neq b} a_{\sig(a) } a_{\sig(b) } } 
\\
=& \eps^2 s \kap^2 + \epsilon^2 \cd   
  \sum_{i,j \in [s+1:d]: i \neq j}  a_{i} a_{j} \del{ \sum_{a,b \in [s+1:2s]: a\neq b}
  \fr{1}{|\Pi_{\legal}(l,u)|} \sum_{\sigma \in \Pi_{\legal}(l,u)}  \indic( \sigma(a) = i, \sigma(b) = j ) }
\\
=& \eps^2 s \kap^2 + \epsilon^2 s (s-1) \cd   
  \underbrace{ \sum_{i,j \in [s+1:d]: i \neq j}  a_{i} a_{j} \del{ 
  \fr{1}{|\Pi_{\legal}(l,u)|} \sum_{\sigma \in \Pi_{\legal}(l,u)}  \indic( \sigma(s+1) = i, \sigma(s+2) = j ) } }_{(Z_2')}
\end{align*}
Here, the last equality follows from Lemma~\ref{lem:legal-perm} that $\fr{1}{|\Pi_{\legal}(l,u)|} \sum_{\sigma \in \Pi_{\legal}(l,u)}  \indic( \sigma(a) = i, \sigma(b) = j )$ are all equal across all $(a, b)$'s (and we choose $a = s+1$, $b = s+2$ without loss of generality).
In the sequel, define 
\[
f(i,j) = \fr{1}{|\Pi_{\legal}(l,u)|} \sum_{\sigma \in \Pi_{\legal}(l,u)}  \indic( \sigma(s+1) = i, \sigma(s+2) = j ).
\]

To compute $(Z'_2)$, let us define the following where the first two are false positives w.r.t. $\supp(\th')$ and the last two are true negatives w.r.t. $\supp(\th')$:
\begin{align*}
  \blue{C_+} &=\{i \in [s+1:d] \cap u | a_{i}'=\kappa \}\\
  \blue{C_-} &=\{i \in [s+1:d] \cap u | a_{i}'=-\kappa \}\\
  \blue{M_+} &=\{i \in [s+1:d] \setminus u | a_{i}'=\kappa \}\\
  \blue{M_-} &=\{i \in [s+1:d] \setminus u | a_{i}'=-\kappa \}~.
\end{align*}
Let $\blue{c_+, c_-, m_+, m_-}$ be $|C_+|, |C_-|, |M_+|, |M_-|$, respectively. Note that $c_+ + c_- = \frac{d}{2}-\frac{3}{4}s - l = k$ and $m_+ + m_- = \frac{d}{2}-\frac{1}{4}s + l = d - s - k$. 

$Z_2'$ can be decomposed to the following three major cases (which consists of subcases), depending on $i$ and $j$ lying in the one of these four sets:
\begin{itemize}
\item $i, j \in u$: common coefficient $f(i,j) = \frac{\frac{s}{2} (\frac{s}{2} - 1) }{s(s-1) k(k-1)} =: f_1$. 
\begin{itemize}
\item $i \in C_+, j \in C_+$: $c_+(c_+ - 1) \cdot f_1 \cdot \kappa^2$ 
\item $i \in C_+, j \in C_-$ or $i \in C_-, j \in C_+$: $- 2 c_+ c_- \cdot f_1 \cdot \kappa^2$ 
\item $i \in C_-, j \in C_-$: $c_-(c_- - 1) \cdot f_1 \cdot \kappa^2$ 
\end{itemize}
Summary: total contribution 
$\kap^2\fr{\fr s 2 \del{\fr s 2 - 1} }{s ( s - 1 )} \cdot
      \del{  \fr{ c_+(c_+-1) + c_-(c_- -1) - 2 c_+ c_-}{k(k-1)}}$

\item $i, j \in u^c$: common coefficient $f(i,j) = \frac{\frac{s}{2} (\frac{s}{2} - 1) }{(d-s-k)(d-s-k-1) k(k-1)} =: f_2$
\begin{itemize}
\item $i \in M_+, j \in M_+$: $m_+(m_+ - 1) \cdot f_2 \cdot \kappa^2$ 
\item $i \in M_+, j \in M_-$ or $i \in M_-, j \in M_+$: $- 2 m_+ m_- \cdot f_2 \cdot \kappa^2$ 
\item $i \in M_-, j \in M_-$: $m_-(m_- - 1) \cdot f_2 \cdot \kappa^2$ 
\end{itemize}
Summary: total contribution 
$\kap^2\fr{\fr s 2 \del{\fr s 2 - 1} }{s ( s - 1 )} \cdot
      \del{  \fr{ m_+(m_+-1) + m_-(m_- -1) - 2 m_+ m_-}{(d-s-k)(d-s-k-1)}}$

\item exactly one of $i,j$ is in $u$ (the other is in $u^c$): $f(i,j) = \fr{\fr s 2 \del{\fr s 2 - 1} }{s ( s - 1 ) k(d-s-k)} =: f_3$
\begin{itemize}
\item $i \in M_+, j \in C_+$ or $i \in C_+, j \in M_+$: $2 m_+ c_+   \cdot f_3 \cdot \kap^2$
\item $i \in M_-, j \in C_-$ or $i \in C_-, j \in M_-$: $2 m_- c_-  \cdot f_3 \cdot \kap^2$
\item $i \in M_-, j \in C_+$ or $i \in C_+, j \in M_-$: $- 2 m_- c_+ \cdot f_3 \cdot \kap^2$
\item $i \in M_+, j \in C_-$ or $i \in C_-, j \in M_+$: $- 2 m_+ c_- \cdot f_3 \cdot \kap^2$
\end{itemize}
Summary: total contribution $2 \kap^2\fr{\fr s 2 \cd \fr s 2 }{s ( s - 1 )} 
      \del{ \fr{(c_+ - c_-)(m_+ - m_-)}{k(d-s-k)} }$
\end{itemize}

}

Then,
\begin{align*}
  (Z'_2)
  &\le \kap^2\fr{\fr s 2 \del{\fr s 2 - 1} }{s ( s - 1 )} 
      \del{  \fr{ c_+(c_+-1) + c_-(c_- -1) - 2 c_+ c_-}{k(k-1)} 
          + \fr{ m_+(m_+-1) + m_-(m_- -1) - 2 m_+ m_-}{(d-s-k)(d-s-k-1)}
    } 
  \\ &\quad + \kap^2\fr{\fr s 2 \cd \fr s 2 }{s ( s - 1 )} 
      \del{ 2\cd \fr{(c_+ - c_-)(m_+ - m_-)}{k(d-s-k)}  } 
\\&\le \kap^2\fr{\fr s 2 \del{\fr s 2 - 1} }{s ( s - 1 )} 
      \del{  \fr{ (c_+ - c_-)^2 }{k(k-1)} 
          + \fr{ (m_+ - m_-)^2}{(d-s-k)(d-s-k-1)}
    } 
  \\ &\quad + \kap^2\fr{\fr s 2 \cd \fr s 2 }{s ( s - 1 )} 
      \del{ 2\cd \fr{(c_+ - c_-)(m_+ - m_-)}{k(d-s-k)}  } \tag{$c_+ + c_- \ge 0, m_+ - m_- \ge 0$ }
\\&\le \kap^2\fr{\fr s 2 \cd \fr s 2 }{s ( s - 1 )} 
      \del{  \fr{ (c_+ - c_-)^2 }{k(k-1)} 
          + \fr{ (m_+ - m_-)^2}{(d-s-k)(d-s-k-1)}
          + 2\cd \fr{(c_+ - c_-)(m_+ - m_-)}{k(d-s-k)}
    } 
\\&= \underbrace{\kap^2\fr{\fr s 2 \cd \fr s 2 }{s ( s - 1 )} 
     \cd \del{\fr{c_+ - c_-}{k} + \fr{m_+ - m_-}{d-s-k} }^2}_{\tsty =: (Z'_{2,1})}
  +  \underbrace{\kap^2\fr{\fr s 2 \cd \fr s 2 }{s ( s - 1 )} \del{ \fr{ (c_+ - c_-)^2 }{k^2(k-1)} 
      + \fr{ (m_+ - m_-)^2}{(d-s-k)^2(d-s-k-1)}
    }}_{\tsty =: (Z'_{2,2})} 
      \tag{$\fr{1}{x(x-1)} = \fr{1}{x^2} + \fr{1}{x^2(x-1)}   $}
\end{align*}

For $(Z'_{2,1})$, let $w=\frac{(c_+ +m_+)-(c_- + m_-)}{2}$.
Note 
\[
|2w| = { \abs{ \cbr{ i \in [s+1:d]: a_i = +\kappa } } - \abs{ \cbr{i \in [s+1:d]: a_i = -\kappa } } } \leq \sqrt{2d \ln (2d)}+s.
\]
Then, 
from definition of $w$, $c_+ - c_- = 2w - (m_+ - m_-)$.
So, 

\chicheng{
Here is a perhaps simpler derivation that help convey the intuition.
\begin{align*}
& \evt {\fr{c_+ - c_-}{k} + \fr{m_+ - m_-}{d-s-k} }
\\
\leq &  
\abs {\fr{c_+ - c_-}{d/2} + \fr{m_+ - m_-}{d/2} } + 
\abs {\fr{c_+ - c_-}{d/2} - \fr{c_+ - c_-}{k} } +
\abs {\fr{m_+ - m_-}{d/2} - \fr{m_+ - m_-}{d-s-k} }
\\
\leq & 
\abs{ \frac{w}{d/2} } + \abs{c_+ - c_-} \frac{\frac34 s + l}{d/2 \cdot k} + \abs{m_+ - m_-} \frac{\frac14s - l}{d/2 \cdot (d-s-k)} 
\\
\leq & 
\frac{2\sqrt{2d \ln(2d)} + s}{d} + \frac{4s}{d} + \frac{4s}{d} 
\end{align*}
where the last step uses that $k \geq \frac d 4$ and $d - s - k \geq \frac d 4$. 
} \ja{I totally agree that it is cleaner, but I'm a bit afraid of changing all 77s...}
\chicheng{No worries at all!}

\begin{align*}
  \fr{c_+ - c_-}{k} + \fr{m_+ - m_-}{d-s-k}  
  &= \fr{2w - (m_+ - m_-)}{k} + \fr{m_+ - m_-}{d - s - k}  
\\&= \fr{1}{k} \del{2w - (\fr{d-s-2k}{d-s-k} ) (m_+ - m_-) } 
\\&= \fr{1}{k} \del{2w - (\fr{2 l - s}{d-s-k} ) (m_+ - m_-) } 
\\
\implies 
\evt {\fr{c_+ - c_-}{k} + \fr{m_+ - m_-}{d-s-k} }
  &\le \fr{1}{k} \del{2w + \evt{\fr{2 l - s}{d-s-k}} (d - s - k) } 
\\  &\le \fr{1}{k} \del{\sqrt{2d \ln (2d)}+s + s } 
    ~\le \fr{12\sqrt{\ln (2d)}}{\sqrt{d}} \tag{$k \ge \frac{d}{2\sqrt{2}}$, $\sqrt{2d \ln (2d)}>s$}
\end{align*}

Thus, using $s\ge 2$,
\begin{align*}
  (Z'_{2,1}) &= \kap^2\fr{\fr s 2 \cd \fr s 2 }{s ( s - 1 )} \cd \fr{72 \ln (2d)}{d} 
             ~\le \kap^2\cd\fr{36 \ln (2d)}{d} 
\end{align*}

For $(Z'_{2,2})$, using $k \wed (d-s-k) \ge \fr d 4$,
\begin{align*}
  (Z'_{2,2}) &= \kap^2\cd \fr{\fr s 2 \cd \fr s 2 }{s ( s - 1 )} \cd \del{\fr{1}{k-1} + \fr{1}{d-s-k-1} } 
\\  &\le \kap^2 \cd \fr12 \cd \fr{8}{d} = \frac{4\kap^2}{d} < \frac{4\kap^2 \ln (2d)}{d}
\end{align*}

Thus, $  (Z'_2) \le \frac{76\kap^2 \ln (2d)}{d}$.

Altogether,
\begin{align*}
  &\frac{1}{|\Pi_{\legal}(l,u)|} \sum_{t: a_t \in \cH} \sum_{\sigma \in \Pi_{\legal}(l,u)} \la \sig^{-1}(a_t), \tilth\ra^2
  \\&\le \eps^2 \cd \del{ s \kap^2 \cT(\cH; a)  
    + \cT(\cH; a) \cd s(s-1) \cd 76\frac{\kap^2 \ln (2d)}{d}  
  }
  \\&< 77 \eps^2 s \kap^2 \cT(\cH; a) \tag{$(s-1) \ln (2d) < d$}
\end{align*}
where the last inequality is by $(s-1) < d$.
\end{proof}

\chicheng{In the $\Pi$ below, it does not vary $\pi_1 \in \sym[1:s]$. 
(7/19) We need to vary $\pi_1 \in \sym[1:s]$ because of the complication raised above, so I revised it. The statement of the lemma does not change; it is just 
START OF LEMMA 13}
{
\begin{lemma} 
\label{lem:legal-perm}
We have: 
\begin{enumerate}
\item Let $j \in [s+1:2s]$ and $h \in [s+1:d]$, 
\begin{equation}
\frac{1}{|\Pi_{\legal}(l,u)|} \sum_{\sigma \in \Pi_{\legal}(l,u)}   \indic\del{\sigma(j) = h} 
=
\begin{cases}
\frac{1}{2k} & h \in u \\
\frac{1}{2(d-s-k)} & h \notin u 
\end{cases}
\label{eqn:single-frac}
\end{equation}
\item Let $a,b$ be distinct elements of $[s+1:2s]$ and $i,j$ be distinct elements of $[s+1:d]$, 
\begin{equation}
\frac{1}{|\Pi_{\legal}(l,u)|} \sum_{\sigma \in \Pi_{\legal}(l,u)}   \indic\del{\sigma(a) = i, \sigma(b) = j} 
= 
\begin{cases}
\frac{\frac{s}{2} (\frac{s}{2} - 1) }{s(s-1) k(k-1)} & i \in u, j \in u, 
\\
\frac{ (\frac{s}{2})^2 }{s(s-1) k(d-s-k)} & i \in u, j \notin u \text{ or }  i \notin u, j \in u
\\
\frac{\frac{s}{2} (\frac{s}{2} - 1) }{s(s-1) (d-s-k)(d-s-k-1)} & i \notin u, j \notin u, 
\end{cases}
\label{eqn:pairwise-frac}
\end{equation}
\end{enumerate}
\end{lemma}
\begin{proof}
Let $\tilde{\Pi}_{\legal}(l,u) = \cbr{\pi_2 \in \sym([s+1:d]): \pi_2^{-1}(u) \in R_l}$.
Note that 
$\Pi_{\legal}(l,u) = \cbr{\pi_1 \circ \pi_2: \pi_1 \in \sym([1:s]), \pi_2 \in \tilde{\Pi}_{\legal}(l,u)}$. 

Therefore, the left hand side of Eq.~\eqref{eqn:single-frac} can be simplified as:
\[
\frac{
|\cbr{ \pi_1 \circ \pi_2: \pi_1 \in \sym([1:s], \pi_2 \in \tilde{\Pi}_{\legal}(l,u), \pi_2(j) = h }|
}{ |\sym([1:s])| \times |\tilde{\Pi}_{\legal}(l,u)|}
= 
\frac{1}{|\tilde{\Pi}_{\legal}(l,u)|}
\sum_{\pi_2 \in \tilde{\Pi}_{\legal}(l,u)}
\indic\del{\sigma(j) = h}.
\]

Similarly, the left hand side of Eq.~\eqref{eqn:pairwise-frac} can be simplified as:
\begin{align*}
    \frac{
|\cbr{ \pi_1 \circ \pi_2: \pi_1 \in \sym([1:s], \pi_2 \in \tilde{\Pi}_{\legal}(l,u), \pi_2(a) = i, \pi_2(b) = j }|
}{ |\sym([1:s])| \times |\tilde{\Pi}_{\legal}(l,u)|}= 
\frac{1}{|\tilde{\Pi}_{\legal}(l,u)|}
\sum_{\pi_2 \in \tilde{\Pi}_{\legal}(l,u)}
\indic\del{\pi_2(a) = i, \pi_2(b) = j}.
\end{align*}


With the above simplifications, we now use the following equivalent formulation to guide our calculation. Let elements in $u$ represent distinct red balls (there are $k$ of them), and elements in $u^c \cap [s+1:d]$ represent distinct black balls (there are $d-s-k$ of them). 
Denote by $[s+1:2s]$ and $[2s+1:d]$ bin 1 and bin 2, respectively.
We call each coordinate of a bin a \emph{slot}, and all slots are distinct. 

$|\tilde{\Pi}_{\legal}(l,u)|$ equals the number of arrangements of all $d-s$ balls, such that bin 1 has exactly $\frac{s}{2}$ red balls and $\frac{s}{2}$ black balls, which is equal to 
\[
\underbrace{ {k \choose {\frac{s}{2}}} }_{\text{red balls in bin 1}} \cdot \underbrace{ {d-s-k \choose {\frac{s}{2}}} }_{\text{black balls in bin 2}} \cdot \underbrace{ s! }_{\text{bin 1 arrangement}} \cdot \underbrace{ (d-2s)! }_{\text{bin 2 arrangement}}
\]
\begin{enumerate}
\item We now calculate $\sum_{\sigma \in \tilde{\Pi}_{\legal}(l,u)}   \indic\del{\sigma(j) = h}$:
\begin{itemize}
\item When $h \in u$, this is equal to the number of arrangements of all $d-k$ balls, such that bin 1 has exactly $\frac{s}{2}$ red balls and $\frac{s}{2}$ black balls, and the $j$-th slot contains a specific red ball $h$. The number of such arrangements is 
\[
\underbrace{ {k - 1 \choose {\frac{s}{2}} - 1} }_{\text{remaining red balls in bin 1}} \cdot \underbrace{ {d-s-k \choose {\frac{s}{2}}} }_{\text{black balls in bin 2}} \cdot \underbrace{ (s-1)! }_{\text{bin 1 arrangement}} \cdot \underbrace{ (d-2s)! }_{\text{bin 2 arrangement}}
\]

\item When $h \in u$, this is equal to the number of arrangements of all $d-k$ balls, such that bin 1 has exactly $\frac{s}{2}$ red balls and $\frac{s}{2}$ black balls, and the $j$-th slot contains a specific black ball $h$. The number of such arrangements is 
\[
\underbrace{ {k \choose {\frac{s}{2}}} }_{\text{remaining red balls in bin 1}} \cdot \underbrace{ {d-s-k - 1 \choose {\frac{s}{2} - 1} } }_{\text{black balls in bin 2}} \cdot \underbrace{ (s-1)! }_{\text{bin 1 arrangement}} \cdot \underbrace{ (d-2s)! }_{\text{bin 2 arrangement}}
\]
\end{itemize}
The item is obtained by dividing the respective counts by the expression of $|\tilde{\Pi}_{\legal}(l,u)|$, along with algebra.

\item We now calculate $\frac{1}{|\tilde{\Pi}_{\legal}(l,u)|} \sum_{\sigma \in \tilde{\Pi}_{\legal}(l,u)}   \indic\del{\sigma(a) = i, \sigma(b) = j}$:
\begin{itemize}
\item When $i \in u$, $j \in u$, this is equal to the number of arrangements of all $d-k$ balls, such that bin 1 has exactly $\frac{s}{2}$ red balls and $\frac{s}{2}$ black balls, and the $a$-th slot contains a specific red ball $i$, and the $b$-th slot contains a specific red ball $j$. The number of such arrangements is 
\[
\underbrace{ {k - 2 \choose {\frac{s}{2}} - 2} }_{\text{remaining red balls in bin 1}} \cdot \underbrace{ {d-s-k \choose {\frac{s}{2}}} }_{\text{black balls in bin 2}} \cdot \underbrace{ (s-2)! }_{\text{bin 1 arrangement}} \cdot \underbrace{ (d-2s)! }_{\text{bin 2 arrangement}}
\]
\item When $i \in u$, $j \notin u$, this is equal to the number of arrangements of all $d-k$ balls, such that bin 1 has exactly $\frac{s}{2}$ red balls and $\frac{s}{2}$ black balls, and the $a$-th slot contains a specific red ball $i$, and the $b$-th slot contains a specific black ball $j$. The number of such arrangements is 
\[
\underbrace{ {k - 1 \choose {\frac{s}{2}} - 1} }_{\text{remaining red balls in bin 1}} \cdot \underbrace{ {d-s-k-1 \choose {\frac{s}{2} - 1}} }_{\text{black balls in bin 2}} \cdot \underbrace{ (s-2)! }_{\text{bin 1 arrangement}} \cdot \underbrace{ (d-2s)! }_{\text{bin 2 arrangement}}
\]
The same calculation goes through when $i \notin u$, $j \in u$.
\item When $i \notin u$, $j \notin u$, this is equal to the number of arrangements of all $d-k$ balls, such that bin 1 has exactly $\frac{s}{2}$ red balls and $\frac{s}{2}$ black balls, and the $a$-th slot contains a specific black ball $i$, and the the $b$-th slot contains a specific black ball $j$. The number of such arrangements is 
\[
\underbrace{ {k \choose {\frac{s}{2}} } }_{\text{remaining red balls in bin 1}} \cdot \underbrace{ {d-s-k - 2 \choose {\frac{s}{2}} - 2} }_{\text{black balls in bin 2}} \cdot \underbrace{ (s-2)! }_{\text{bin 1 arrangement}} \cdot \underbrace{ (d-2s)! }_{\text{bin 2 arrangement}}
\]
\end{itemize}

\end{enumerate}

\end{proof}
}

\subsubsection{Proofs of Lemma \ref{R vs Q'} and Lemma \ref{claim: scale of Y}}

{Before we proceed, we will prove the following two lemmas to compute ratio of combinations.}
The following lemma gives constant-factor tight bounds on the binomial coefficient ${s \choose i}$ when the number of successes $i$ is about the half of the total number of trials $s$. 
\begin{lemma}
\label{lem:near-central}
For even number $s \geq 16$ and $t \in [\frac s 2 - \frac{\sqrt{s}}{2}, \frac s 2 + \frac{\sqrt{s}}{2}]$,
\[
\frac{1}{2e} \frac{1}{\sqrt{s}} 2^s  \leq  {s \choose t} \leq \frac{1}{\sqrt{s}} 2^s
\]
\end{lemma}
\begin{proof}
First, it is known from~\cite{tikhonov2020comparative} that 
\[
\frac{2^s}{2\sqrt{s}}
\leq
(1-\frac{1}{4s}) \frac{2^s}{\sqrt{\pi s / 2}} \leq {s \choose {\frac s 2}} \leq (1-\frac{1}{4.5s}) \frac{2^s}{\sqrt{\pi s / 2}}
\leq 
\frac{2^s}{\sqrt{s}}
\]

Next, we claim that for all $t \in [\frac{s}{2} - \sqrt{s}, \frac{s}{2} + \sqrt{s}]$, $\frac{1}{e^4}{s \choose {\frac s 2}} \leq  {s \choose t} \leq {s \choose {\frac s 2}}$. Note that this concludes the proof by combining with the bounds on ${s \choose {\frac s 2}}$ above. 

By symmetry, it suffices to show that
for $i \in [0, \sqrt{s}]$, 
\[
\frac 1 {e^4}  {s \choose {\frac s 2}} \leq {s \choose {\frac s 2 + i}} \leq {s \choose {\frac s 2}}. 
\]
Indeed, 
\[
\frac{{s \choose \frac s 2 + i}}{{s \choose {\frac s 2}}} = \prod_{j=1}^i \frac{\frac{s}{2}-i+j}{\frac{s}{2}+j}
\]
It is clear that the right hand side is at most 1 as each factor is $\leq 1$; on the other hand, as for each $j$,  $\frac{\frac{s}{2}-i+j}{\frac{s}{2}+j} \geq \frac{ \frac{s}{2} - i }{ \frac{s}{2} }$, we have
\[
\prod_{j=1}^i \frac{\frac{s}{2}-i+j}{\frac{s}{2}+j}
\geq 
(1 - \frac{2i}{s})^i 
\geq
e^{-\frac{4i^2}{s}}
\geq 
e^{-1}. 
\]
where the second inequality uses the fact that for $x \leq \frac{1}{2}$, $1-x \geq e^{-2x}$ and $\frac{2i}{s} \leq \frac{2}{\sqrt{s}} \leq \frac{1}{2}$.
\end{proof}

The following lemma gives an upper bound on the probability mass function of the hypergeometric distribution.
\begin{lemma}
\label{lem:hypergeom-binom}
For $a \leq c$ and $d \leq c$, $b \leq \min(a, d)$ such that
$\frac{b}{d} \geq \frac{a}{c}$,
\[
\frac{ {a \choose b} {c-a \choose d-b} }{ {c \choose d}  }
\leq 
\exp\del{ - 2 d \cdot \del{ \frac{b}{d} - \frac{a}{c} }^2 }.
\]
\end{lemma}
\begin{proof}
Note that by tail probability bounds of hypergeometric random variables~\cite{chvatal1979tail},  
\[
\sum_{i=b}^{d} \frac{ {a \choose i} {c-a \choose d-i} }{ {c \choose d} }
\leq 
\exp\del{ - 2 d \cdot \del{ \frac{b}{d} - \frac{a}{c} }^2 }.
\]
The follows from observing that the LHS is the probability of observing at least $b$ red balls by sampling $d$ balls without replacement from an urn of $a$ red balls and $c-a$ black balls.  
The lemma follows by observing that all terms on the left hand side are nonnegative. 
\end{proof}

\subsubsection{Proof of Lemma \ref{R vs Q'}}

\begin{proof}[Proof of Lemma \ref{R vs Q'}]
Recall 
$$|R_l|= {d-2s \choose \frac{d}{2}-\frac{3}{4}s + l}{s \choose \frac{s}{2}}{s \choose \frac{s}{4}-l},$$ $$|Q_{\fr14s-l}'| = {d-s \choose \frac{d}{2}-\frac{s}{4}+l} {s \choose \frac{1}{4}s - l}$$
The observation is that ${d-2s \choose \frac{d}{2}-\frac{3}{4}s + l}$, ${s \choose \frac{s}{2}}$ and ${d-s \choose \frac{d}{2}-\frac{s}{4}+l}$ are all near-central binomial coefficients. 
\chicheng{This imposes assumptions e.g. $\frac{1}{4}s - l \leq \frac{\sqrt{d-2s}}{2}$ -- is it satisfied by the construction?
} \ja{This is true because which is satisfied due to $d>s^2$ and $s>4$. }
Therefore we can use Lemma~\ref{lem:near-central}:
\[
\frac{|Q_{\fr14s-l}'|}{|R_l|}
=  
\frac{ {d-s \choose \frac{d}{2}-\frac{s}{4}+l}  }{ {d-2s \choose \frac{d}{2}-\frac{3}{4}s + l} {s \choose \frac{s}{2}} }
\leq 
\frac{
\frac{1}{\sqrt{d-s}} 2^{d-s}
}{
\frac{1}{2 e \sqrt{d-2s}} 2^{d-2s} \cdot \frac{1}{2 e \sqrt{s}} 2^s
}
\leq 
4 e^2 \sqrt{s}
\qedhere
\]
\end{proof}

\subsection{Proof of Lemma \ref{claim: scale of Y}}
\begin{proof}[Proof of Lemma \ref{claim: scale of Y}]

{Note that
\[
\frac{1}{Y} = 
\frac{|Q_{\frac14s-l}'|}{|R_l|} 
\frac{|R_l|}{|Q_{\frac34s-l}|}
\]
The former ratio, $\frac{|Q_{\frac14s-l}'|}{|R_l|} $, is at most $4 e^2 \sqrt{s}$ from Lemma \ref{R vs Q'}, and when $s\geq 1000$,  $4 e^2 \sqrt{s} \leq \exp(\frac{s}{72})$. 
For the latter, note that by Lemma~\ref{lem:hypergeom-binom},
\[
\frac{|R_l|}{|Q_{\frac34s-l}|}
= 
\frac{ {s \choose \frac{s}{2}} {s \choose \frac{s}{4}-l}}{ {2s \choose \frac{3s}{4} - l} }
\leq 
\exp\del{ -2 \del{\frac{3s}{4} - l} \del{ \frac{\frac s 2}{\frac{3s}{4} - l} - \frac{s}{2s} }^2 }
\leq 
\exp\del{ -\frac{s}{36} }
\]
where the last inequality is due to $\frac{3s}{4} - l \geq \frac{s}{2}$ and $\frac{\frac{s}{2}}{\frac{3s}{4} - l} \geq \frac{2s}{3}$. 
}
Therefore, $\frac{1}{Y} \leq \exp(\frac{s}{72})$ and $\ln Y \geq \frac{s}{72}$. 

To prove $\ln \frac{s^2 Y}{4\xi} \leq 2\ln \frac{Y}{4\xi}$, note that it is equivalent to prove $2 \ln s^2 \leq \ln Y$ (note that $\xi = \frac{1}{4}$). Since we already know $\ln Y \geq \frac{s}{72}$, it is enough to show $\frac{s}{72} \geq 2 \ln s$. This is true for $s\geq 1000$. 

\end{proof}

\subsubsection{Scale of $\kappa^2$ with respect to $C_{\min}$} \label{subsubsec:kappa vs cmin}
Last thing we have to deal with is connecting $\kappa^2$ to $\cC_{\min}$ and $H_*^2$. 
\begin{lemma}
$\cC_{\min} (\cH) \geq \frac{\kappa^2}{2}$
\end{lemma}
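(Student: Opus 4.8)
The plan is to lower bound $\cC_{\min}(\cH) = \max_{\mu \in \mathcal{P}(\cH)} \lambda_{\min}(Q(\mu))$ by exhibiting a single favorable distribution $\mu$ supported on $\cH$. I would let $\epsilon$ be drawn uniformly from $\{-1,+1\}^d$ conditioned on the event $|S| \le \sqrt{2d\ln 2d}$, where $S := \sum_{j=1}^d \epsilon_j$, and set the corresponding action to $x = (\kappa\epsilon_1,\ldots,\kappa\epsilon_d,1)$; call the induced law $\mu$. Every such $x$ lies in $\cH$, since its first $d$ coordinates are $\pm\kappa$, it satisfies $|\sum_{j=1}^d x_j| = \kappa|S| \le \kappa\sqrt{2d\ln 2d}$, and $x_{d+1}=1$. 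Thus $\mu \in \mathcal{P}(\cH)$, and it suffices to prove $\lambda_{\min}(Q(\mu)) \ge \kappa^2/2$.

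Next I would exploit the symmetries of the conditional law. The event $\{|S|\le\sqrt{2d\ln 2d}\}$ is invariant both under negating all signs and under permuting coordinates, so $\EE[\epsilon]=0$ and $\EE[\epsilon\epsilon^\top] = (1-\rho)I + \rho\,\mathbf{1}\mathbf{1}^\top =: M$ for some scalar $\rho$, where $M$ has unit diagonal. Because $\EE[\epsilon]=0$, the matrix is block-diagonal: $Q(\mu) = \mathrm{diag}(\kappa^2 M,\, 1)$, so its spectrum is that of $\kappa^2 M$ together with the eigenvalue $1$. Summing all entries of $M$ gives $d + d(d-1)\rho = \mathbf{1}^\top \EE[\epsilon\epsilon^\top]\mathbf{1} = \EE[S^2]$ (the conditional second moment), so $M$ has eigenvalue $\EE[S^2]/d$ on $\mathbf{1}$ and eigenvalue $1-\rho$ on $\mathbf{1}^\perp$. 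Conditioning on a symmetric event can only shrink the second moment, hence $\EE[S^2]\le d$, giving $\rho\le 0$, $1-\rho\ge 1$, and therefore $\lambda_{\min}(M) = \EE[S^2]/d$. Since $\kappa\in(0,1)$ and $\lambda_{\min}(M)\le 1$, we conclude $\lambda_{\min}(Q(\mu)) = \kappa^2\lambda_{\min}(M) = \kappa^2\,\EE[S^2]/d$.

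It then remains to show $\EE[S^2]\ge d/2$ under the conditioning, i.e. that the high-probability conditioning barely deflates the second moment. Writing $t := \sqrt{2d\ln 2d}$ and using $\PP(|S|\le t)\le 1$, I would bound $\EE[S^2\mid |S|\le t] \ge \EE[S^2] - \EE[S^2\one(|S|>t)] = d - \EE[S^2\one(|S|>t)]$. The subgaussian tail $\PP(|S|>u)\le 2e^{-u^2/(2d)}$ for Rademacher sums, combined with the identity $\EE[S^2\one(|S|>t)] = t^2\PP(|S|>t) + \int_t^\infty 2u\,\PP(|S|>u)\,du$, yields $t^2\PP(|S|>t)\le 2\ln(2d)$ and $\int_t^\infty 2u\cdot 2e^{-u^2/(2d)}\,du = 4d\,e^{-t^2/(2d)} = 2$, so $\EE[S^2\one(|S|>t)]\le 2\ln(2d)+2$. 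Since the lower-bound instance has $d\ge (s+1)^2$ with $s>40$, this is far below $d/2$, giving $\EE[S^2]\ge d/2$ and hence $\lambda_{\min}(Q(\mu))\ge \kappa^2/2$, as desired.

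The only nonroutine step is the estimate of the last paragraph: one must confirm that restricting to $|S|\le\sqrt{2d\ln 2d}$ does not pull $\EE[S^2]$ below $d/2$. This is precisely where the specific threshold $\sqrt{2d\ln 2d}$ appearing in the definition of $\cH$ is used, being chosen so that the discarded tail contributes only $O(\ln d)$ to the second moment, negligible against $d$. Everything else — the block-diagonal structure of $Q(\mu)$ and the two eigenvalues of $M$ — is a direct consequence of the sign-flip and permutation symmetry of the conditioned Rademacher law.
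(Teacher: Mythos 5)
Your proof is correct, but it takes a genuinely different route from the paper's. Both arguments use the same witness distribution (the uniform distribution over $\mathcal{H}$, which is exactly your conditioned-Rademacher construction) and both reduce to the first $d$ coordinates via the block-diagonal structure, but the core mechanics differ. The paper never computes the spectrum of the conditioned covariance: it writes the \emph{full-cube} covariance $Q(\mathsf{Unif}(\{-\kappa,\kappa\}^d))=\kappa^2 I$ as a probability mixture of the conditioned covariance and the tail covariance, applies the perturbation inequality $\lambda_{\min}(A-B)\geq\lambda_{\min}(A)-\lambda_{\max}(B)$, and finishes with Hoeffding ($\mathbb{P}(A\notin\mathcal{H}_d)\leq\frac{1}{2d}$) together with the crude bound $\lambda_{\max}\leq d\kappa^2$ on the tail covariance; the factors $d\kappa^2$ and $\frac{1}{2d}$ cancel to give exactly $\kappa^2/2$ for \emph{every} $d$, with no dimension hypothesis. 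You instead diagonalize the conditioned covariance exactly: sign-flip and permutation symmetry force it to be the equicorrelation matrix $(1-\rho)I+\rho\,\mathbf{1}\mathbf{1}^\top$, whose minimum eigenvalue you identify as $\mathbb{E}[S^2\mid |S|\leq t]/d$, and the problem collapses to a scalar truncated-second-moment estimate, $\mathbb{E}[S^2\,\mathbf{1}(|S|>t)]\leq 2\ln(2d)+2$. This buys a sharper conclusion — your argument actually shows $\lambda_{\min}\geq\kappa^2\bigl(1-O(\ln d/d)\bigr)$, much better than $\kappa^2/2$ — at the price of needing $d$ moderately large, which you correctly discharge using the theorem's hypothesis $d\geq(s+1)^2$, $s>40$ (any $d\gtrsim 30$ would do). One presentational nit: the phrase ``conditioning on a symmetric event can only shrink the second moment'' is loose — what you actually use, and what is true, is that conditioning on the \emph{sublevel} event $\{|S|\leq t\}$ shrinks $\mathbb{E}[S^2]$, since $S^2\leq t^2$ on that event and $S^2>t^2$ off it; symmetry alone would not suffice.
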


\begin{proof}
By the maximality of $\cC_{\min} (\cH)$, $\cC_{\min}(\cH)\geq \lambda_{min}(Q(\mathsf{Unif}(\cH)))$. If we prove that $\lambda_{min}(Q(\mathsf{Unif}(\cH))) \geq \frac{\kappa^2}{2}$, then the proof is done. 

Let's define $\cH_d$ as 
$$\cH_d = \cbr{ a \in \{ -\kappa, \kappa\}^d  ~\sVert[3]~ \envert{\sum_{i=1}^d a_i } \leq \kappa \sqrt{2d \ln (2d)}} $$
It is the set of first $d$ coordinate vectors of $\cH$. 

We can express $Q(\mathsf{Unif}(\cH))$ in terms of $Q(\mathsf{Unif}(\cH_d))$ as follows

\begin{align*}
    Q(\mathsf{Unif}(\cH)) = \begin{bmatrix}Q(\mathsf{Unif}(\cH_d)) & \vec{0}\\
    \vec{0}^\top & 1
    \end{bmatrix}
\end{align*} 

Therefore, the proof boils down to calculating $Q(\mathsf{Unif}(\cH_d))$. We can connect this matrix to $Q(\mathsf{Unif}(\{-\kappa, \kappa \}^d))$ by the following method
\begin{align*}
    Q(\mathsf{Unif}(\{-\kappa, \kappa \}^d)) &= Q(\mathsf{Unif}(\cH_d)) \times \PP_{A \sim \mathsf{Unif}(\cH_d)} (A \in \cH_d) \\&+Q(\mathsf{Unif}(\{-\kappa, \kappa \}^d \backslash \cH_d)) \times \PP_{A \sim \mathsf{Unif}(\{-\kappa, \kappa \}^d)} (A \notin \cH_d)
\end{align*}

Now, since Rademacher is 1 sub-Gaussian random variable, $\sum_{i=1}^n a_i$ is $\kappa \sqrt{d}$ sub-Gaussian random variable when $a \sim \mathsf{Unif}(\{ -\kappa , \kappa \})$. Therefore, 
\begin{align*}
    \PP_{A \sim \mathsf{Unif}(\{-\kappa, \kappa \}^d)} (A \notin \cH_d) &= \PP_{A \sim \mathsf{Unif}(\{-\kappa, \kappa \}^d)} (|\sum_{i=1}^d A_i |\geq \kappa \sqrt{2d \ln (2d)}) \leq \frac{1}{2d},
\end{align*}
where the last inequality is by the Hoeffding's inequality. Therefore, we can rewrite $Q(\mathsf{Unif}(\cH_d))$ as 
\begin{align*}
    Q(\mathsf{Unif}(\cH_d)) &\succeq \PP_{A \sim \mathsf{Unif}(\{-\kappa, \kappa \}^d)} (A \in \cH_d) Q(\mathsf{Unif}(\cH_d))\\
    &\succeq Q(\mathsf{Unif}(\{-\kappa, \kappa \}^d)) -Q(\mathsf{Unif}(\{-\kappa, \kappa \}^d \backslash \cH_d))\PP_{A \sim \mathsf{Unif}(\{-\kappa, \kappa \}^d)} (A \notin \cH_d)\\
    &\succeq Q(\mathsf{Unif}(\{-\kappa, \kappa \}^d)) -Q(\mathsf{Unif}(\{-\kappa, \kappa \}^d \backslash \cH_d))\frac{1}{2d}
\end{align*}
Therefore, (Existing results about this positive definite matrix analysis)
\begin{align*}
    \lambda_{\min} (Q(\mathsf{Unif}(\cH_d)))&\geq \lambda_{\min}(Q(\mathsf{Unif}(\{-\kappa, \kappa \}^d)) -Q(\mathsf{Unif}(\{-\kappa, \kappa \}^d \backslash \cH_d))\frac{1}{2d}) \\
    &\geq \lambda_{\min}(Q(\mathsf{Unif}(\{-\kappa, \kappa \}^d))) - \lambda_{\max} (Q(\mathsf{Unif}(\{-\kappa, \kappa \}^d \backslash \cH_d))\frac{1}{2d})
\end{align*}

Since every element in $\{ -\kappa, \kappa\}$ has $\ell_2$-norm $\sqrt{d}\kappa$, 

\begin{align*}
    \lambda_{\max} (Q(\mathsf{Unif}(\{-\kappa, \kappa \}^d \backslash \cH_d)) &= \max_{v \in \mathbb{S}^{d-1}} v^\top \sum_{a \in \{-\kappa, \kappa \}^d \backslash \cH_d} \frac{1}{|\{-\kappa, \kappa \}^d \backslash \cH_d|}a a^\top v \\&= \max_{v \in \mathbb{S}^{d-1}} \sum_{a \in \{-\kappa, \kappa \}^d \backslash \cH_d} \frac{1}{|\{-\kappa, \kappa \}^d \backslash \cH_d|} (a^\top v)^2 \\&\leq \max_{v \in \mathbb{S}^{d-1}} \sum_{a \in \{-\kappa, \kappa \}^d \backslash \cH_d} \frac{d\kappa^2}{|\{-\kappa, \kappa \}^d \backslash \cH_d|} = d\kappa^2
\end{align*}
and by simple symmetry one can calculate $\lambda_{\min} ( Q(\mathsf{Unif}(\{-\kappa, \kappa \}^d ))) = \kappa^2$. Therefore, $$\lambda_{\min} (Q(\mathsf{Unif}(\cH_d))) \geq \kappa^2 - d\kappa^2 \frac{1}{2d} = \frac{\kappa^2}{2}$$
\end{proof}

\section{Experiment details}
\begin{itemize}
    \item Case 1 - $\ell_1$ estimation error experiment
    \begin{itemize}
        \item $\theta= -e_1 +e_i$, $i \in \{2, \cdots, d\}$ chosen uniformly random before the start of the experiment.
        \item Dimension $d=10$, sparsity $s=2$
        \item Action set $\cA = \{e_1 + \frac{1}{\sqrt{d}}e_i | i =2, \cdots, d \} \cup \{ \frac{1}{\sqrt{d}} e_1\}$
        \item $T=1000, 2000, \cdots, 10000$
        \item $\sigma=0.1$
        \item Repetition: 30 times for each exploration time. 
    \end{itemize}
    \item Case 1 - bandit experiment
    \begin{itemize}
        \item $\theta= e_1 +e_i$, $i \in \{2, \cdots, d\}$ chosen uniformly random before the start of the experiment.
        \item Dimension $d=10$, sparsity $s=2$
        \item Action set $\cA = \{e_1 + \frac{1}{\sqrt{d}}e_i | i =2, \cdots, d \} \cup \{ \frac{1}{\sqrt{d}} e_1\}$
        \item $T=400000$
        \item $\sigma=0.1$
        \item Repetition: 30 times
    \end{itemize}
    
    \item Case 2 - $\ell_1$ estimation error experiment
    \begin{itemize}
        \item $\theta= e_i +e_j$, $i,j \in [d]$ chosen uniformly random before the start of the experiment.
        \item Dimension $d=30$, sparsity $s=2$
        \item Action set $\cA$: 90 Uniform random vectors over $\mathbb{S}^{d-1}$ before the start of the round, where $\mathbb{S}^{d-1} = \{ v \in \mathbb{R}^d | \|v\|_2 = 1\}$
        \item $T=1000, 2000, \cdots, 10000$
        \item $\sigma=0.1$
        \item Repetition: 30 times for each exploration time. 
    \end{itemize}
    \item Case 2 - bandit experiment
    \begin{itemize}
        \item $\theta= e_i +e_j$, $i,j \in [d]$ chosen uniformly random before the start of the experiment.
        \item Dimension $d=30$, sparsity $s=2$
        \item Action set $\cA$: 90 Uniform random vectors over $\mathbb{S}^{d-1}$ before the start of the round. 
        \item $T=10000$
        \item $\sigma=0.1$
        \item Repetition: 30 times for each exploration time. 
    \end{itemize}
\end{itemize}


\end{document}